\documentclass{article} %
\usepackage[dvipsnames,table]{xcolor} %
\usepackage{iclr2025_conference,times}

\usepackage{amsmath,amsfonts,bm}

\def\eqref#1{equation~\ref{#1}}

\def\1{\bm{1}}

\def\eps{{\epsilon}}

\DeclareMathAlphabet{\mathsfit}{\encodingdefault}{\sfdefault}{m}{sl}
\SetMathAlphabet{\mathsfit}{bold}{\encodingdefault}{\sfdefault}{bx}{n}

\newcommand{\E}{\mathbb{E}}

\newcommand{\R}{\mathbb{R}}

\usepackage[utf8]{inputenc}
\usepackage[T1]{fontenc}
\usepackage{url}
\usepackage{booktabs}
\usepackage{nicefrac}
\usepackage{microtype}
\usepackage{amsmath}
\usepackage{amssymb}
\usepackage{amsfonts}
\usepackage{amsthm}
\usepackage{mathtools}
\usepackage{algorithm}
\usepackage{algpseudocode}
\usepackage{graphicx}
\usepackage{bbm}
\usepackage{natbib}
\usepackage[breaklinks=true,colorlinks=true,linkcolor=purple, urlcolor=ForestGreen, citecolor=NavyBlue, anchorcolor=purple,backref=page]{hyperref}
\usepackage{enumitem}
\usepackage[capitalize,noabbrev]{cleveref}
\usepackage{tikz} 
\usepackage{tikz-cd}
\usepackage{pifont}
\usepackage{algpseudocodex}
\usepackage{tabularx}
\usepackage{multirow}
\usepackage{wrapfig}
\usepackage{caption}

\setlist{leftmargin=*,itemsep=0pt,topsep=0pt}
\usetikzlibrary{cd,arrows,calc,shapes,positioning}
\tikzset{state/.style={circle, draw, minimum size=2em}}
\Crefname{equation}{Eq.}{Eqs.}
\crefname{equation}{eq.}{eqs.}
\newcolumntype{Y}{>{\raggedright\arraybackslash}X}
\newcolumntype{R}{>{\raggedleft\arraybackslash}X}

\theoremstyle{plain}
\newtheorem{theorem}{Theorem}[section]
\newtheorem{proposition}[theorem]{Proposition}

\newtheorem{corollary}[theorem]{Corollary}
\theoremstyle{definition}
\newtheorem{definition}[theorem]{Definition}

\newtheorem{example}[theorem]{Example}
\theoremstyle{remark}
\newtheorem{remark}[theorem]{Remark}

\newcommand{\bN}{\mathbb{N}}
\newcommand{\bR}{\mathbb{R}}
\newcommand{\notindep}{\perp\hspace{-0.7em}\not\hspace{0.1em}\perp}
\DeclareMathOperator{\indep}{\perp\hspace{-0.6em}\perp}
\newcommand{\indepsym}{\indep_{\mathrm{sym}}}
\newcommand{\notindepsym}{\perp\hspace{-0.7em}\not\hspace{0.1em}\perp_{\mathrm{sym}}}
\newcommand{\indepsh}{\indep_{s,h}^+}
\newcommand{\notindepsh}{\perp\hspace{-0.7em}\not\hspace{0.1em}\perp_{s,h}^+}
\newcommand{\given}{\mid}
\DeclareMathOperator{\SHD}{SHD}
\DeclareMathOperator{\nSHD}{SHD_{norm}}
\newcommand{\dif}{\mathrm{d}}
\newcommand{\pa}{\ensuremath \mathrm{pa}}

\newcommand{\nd}{\ensuremath \mathrm{nd}}

\newcommand{\an}{\ensuremath \mathrm{an}}

\DeclareMathOperator{\adj}{\ensuremath\text{---}}
\DeclareMathOperator{\nadj}{\ensuremath\,\not \!\!\!\text{---}}

\definecolor{CustomBlue}{rgb}{0.4, 0.7, 1.0}%
\definecolor{CustomForestGreen}{rgb}{0.13, 0.55, 0.13} %

\newcommand{\defined}[1]{{\color{ForestGreen}\emph{#1}}}%
\newcommand{\symbdef}[1]{{\color{ForestGreen}#1}}
\newcommand{\cA}{\mathcal{A}}
\newcommand{\cI}{\mathcal{I}}   
\newcommand{\cJ}{\mathcal{J}}   
\newcommand{\cK}{\mathcal{K}}   
\newcommand{\cG}{\mathcal{G}}

\newcommand{\cH}{\mathcal{H}}
\newcommand{\cN}{\mathcal{N}}
\newcommand{\cO}{\mathcal{O}}
\newcommand{\cP}{\mathcal{P}}
\newcommand{\cT}{\mathcal{T}}
\newcommand{\cU}{\mathcal{U}}
\newcommand{\cX}{\mathcal{X}}   
\newcommand{\cY}{\mathcal{Y}}
\newcommand{\cZ}{\mathcal{Z}}
\DeclareMathOperator{\tr}{\mathrm{Tr}}
\DeclareMathOperator{\BV}{\mathrm{BV}}
\DeclareMathOperator{\diag}{\mathrm{Diag}}
\newcommand{\med}{\ensuremath \mathrm{Median}}

\newcommand{\dimx}{d_x}

\definecolor{CustomBrickRed}{rgb}{0.8, 0.25, 0.33} %

\newcommand{\xhdr}[1]{\textbf{#1}\;}

\title{Signature Kernel Conditional Independence Tests in Causal Discovery for Stochastic \\ Processes}

\author{Georg Manten$^\star$ \& Cecilia Casolo\thanks{ Equal contribution
} \\
  Technical University of Munich\\
  Helmholtz Munich\\
  Munich Center for Machine Learning\\
\texttt{\{georg.manten, cecilia.casolo\}@tum.de} \\
\AND
Emilio Ferrucci \\
Mathematical Institute\\
University of Oxford\\
\texttt{emilio.rossiferrucci@maths.ox.ac.uk} \\
\And
S{\o}ren Wengel Mogensen\\
Department of Automatic Control \\ 
Lund University \\
\texttt{swm.fi@cbs.dk}
\AND
Cristopher Salvi \\
Department of Mathematics\\ 
Imperial College London\\
\texttt{c.salvi@imperial.ac.uk}
\And
Niki Kilbertus \\
Technical University of Munich\\
Helmholtz Munich\\
Munich Center for Machine Learning\\
\texttt{niki.kilbertus@tum.de}
}

\iclrfinalcopy %
\begin{document}

\maketitle

\begin{abstract}
Inferring the causal structure underlying stochastic dynamical systems from observational data holds great promise in domains ranging from science and health to finance. Such processes can often be accurately modeled via stochastic differential equations (SDEs), which naturally imply causal relationships via `which variables enter the differential of which other variables'. In this paper, we develop conditional independence (CI) constraints on coordinate processes over selected intervals that are Markov with respect to the acyclic dependence graph (allowing self-loops) induced by a general SDE model. We then provide a sound and complete causal discovery algorithm, capable of handling both fully and partially observed data, and uniquely recovering the underlying or induced ancestral graph by exploiting time directionality assuming a CI oracle. Finally, to make our algorithm practically usable, we also propose a flexible, consistent signature kernel-based CI test to infer these constraints from data. We extensively benchmark the CI test in isolation and as part of our causal discovery algorithms, outperforming existing approaches in SDE models and beyond.
\end{abstract}

\addtocontents{toc}{\protect\setcounter{tocdepth}{-1}} %
\section{Introduction}\label{sec:intro}
Understanding cause-effect relationships from observational data can help identify causal drivers for disease progression in longitudinal data and aid the development of new treatments, act upon the underlying influences of stock prices to support lucrative trading strategies, or speed up scientific discovery by uncovering interactions in complex biological systems such as gene regulatory pathways.
Causal discovery (or causal structure learning) has received continued attention from the scientific community for at least two decades \citep{glymour2019review,spirtes2000causation,vowels2022d} with a notable uptick in previous years particularly regarding differentiable score-based methods \citep{zheng2018dags,brouillard2020differentiable,charpentier2022differentiable,hagele2023bacadi,lorch2021dibs,annadani2023bayesdag,zheng2020learning}, building on score matching algorithms \citep{rolland2022score,montagna2023scalable,montagna2023assumption}, and other deep-learning based approaches \citep{chen2022review,ke2022learning,yu2019dag,ke2019learning}.
Many of these approaches aim at improving the scalability of causal discovery in the number of variables and observations as well as at incorporating uncertainty or efficiently making use of interventional data.

However, causal discovery from time series data has received much less attention and been mostly neglected in these recent advances.
At a fundamental level, causal effects in dynamical systems can only `point into the future', which should make causal discovery in time resolved data intuitively simpler.
Nevertheless, except for restricted settings, this promise has not been realized methodologically and causal discovery in time series, especially for systems evolving in continuous time, remains a major challenge \citep{singer1992dynamic,runge2019detecting,lawrence2021data,runge2023causal}.

\xhdr{Data generating process.}
We assume data to follow a stochastic process $X \coloneqq (X^1,\ldots,X^d)$, with $X^k$ taking values in $\R^{n_k}$ ($n_k \geq 1$), and $X$ satisfying the following system of stationary, path-dependent SDEs
\begin{equation}\label{eq:SDEmodel}
    \begin{cases}
        \dif X^k_t = \mu^k(X_{[0,t]})\dif t + \sigma^k(X_{[0,t]})\dif W^k_t, \\
        X^k_0 = x^k_0 \qquad \text{for } k \in [d] := \{1,\ldots,d\}\:.
    \end{cases}
\end{equation}
We call this the \defined{SDE model}.
The subscript $[0,t]$ at $X$ means that $\mu^k$ (the `drift') and $\sigma^k$ (the `diffusion') are functions of the entire solution up to time $t$, i.e., they are defined on $C([0,+\infty),\R^{n})$ (or some suitable subspace thereof), where $n := n_1 + \ldots + n_d$.
Each $W^k$ is an $m_k$-dimensional Brownian motion, and $\sigma^k$ maps to $n_k \times m_k$-dimensional matrices. 
The noises $W^k$ together with the (possibly random) initial conditions $x^k_0$ are jointly independent. Therefore, the diffusion $\sigma$ of the entire system is block-diagonal (see \eqref{eq:matrix_form_SDE_model}). We refer to \citet{RW00} for details, \citet{evans2006introduction} for a concise introduction, and \cref{app:sdes} for more intuition and discussion of the assumptions made here.
Hence, the SDE together with a distribution over initial conditions defines our data generating process.
Individual observations are paths, which in practice translate to stochastic, potentially irregularly sampled time series observations for $X^k_{[0,T]}$ with a maximum observation time $T$, see \cref{fig:overview}. \Cref{app:math-notation} contains an overview table of the notations used.

\begin{wrapfigure}{r}{0.45\textwidth}
    \centering
    \vspace{-4mm}
    \includegraphics[width=0.45\textwidth]{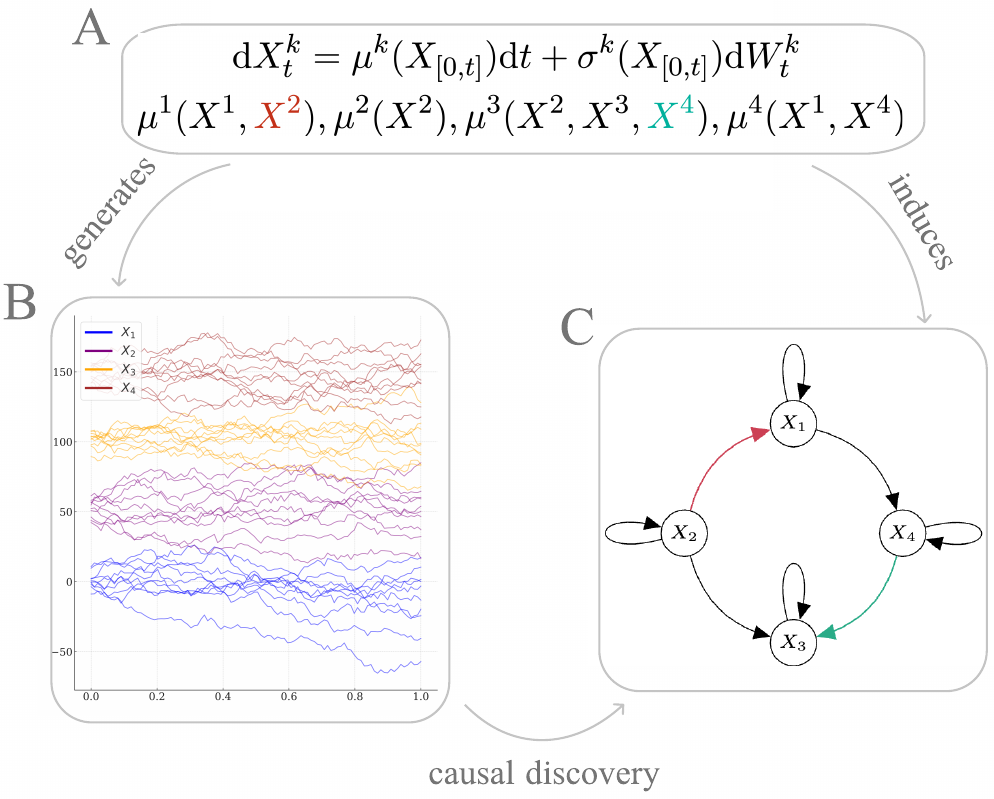}
    \vspace{-6mm}
    \caption{Illustration of causal discovery in the SDE model (A), leveraging conditional independencies in the observed samples (B) to infer the dependence graph of the SDE (C).}
    \label{fig:overview}
    \vspace{-5mm}
\end{wrapfigure}

\xhdr{Induced causal graph.}
\Cref{eq:SDEmodel} naturally implies cause-effect relationships:
we call $i \in [d]$ a parent of $j \in V:= [d]$ ($i \in \pa^{\cG}_j$) when either $\mu^j$ or $\sigma^j$ is not constant in the $i$-th argument.
These parental relationships define a directed graph $\cG = (V, E)$, which we call \defined{dependence graph} of the SDE model.
The \textbf{goal of causal discovery} is to infer the graph $\cG$ induced by the SDE model from a sample of observed solution paths, depicted in \cref{fig:overview}.
Here, we only consider data generating processes leading to directed graphs without cycles of length greater than one (i.e., only `loops' $X^k \to X^k$ are allowed).
We still call these directed acyclic graphs (DAGs) for simplicity.

\xhdr{Limitations and requirements.}
The key limitation of our setting is the assumption of acyclicity (except for self-loops).
We discuss fundamental issues arising from dropping the acyclicity constraint in \cref{app:example_fail}.
Further, we focus on stationary SDE models, i.e., the coefficients in \cref{eq:SDEmodel} are not explicitly time-dependent, motivated by the requirement that causal relationships do not change over time. Despite these limitations, our setting is the first to simultaneously satisfy the following criteria for causal discovery in continuous time systems:
(a) It does not rely on the `discrete-time' assumption, i.e., that all variables are observed at the same, typically homogeneously spaced, time points (neither in the underlying model nor the actual observations). Instead, different variables can be observed at different, irregularly spaced time points in our setting.
(b) We can handle partially observed systems with unobserved confounders. 
(c) Path-dependence ($\mu^k, \sigma^k$ may depend on the entire previous history $X_{[0,t]}$ of paths), including delayed SDEs, is typically not captured in existing approaches \citep{peters2022causal}.
(d) Beyond `additive observation noise' it incorporates `driving noise', where causal dependence may stem from the diffusion, not captured by the means of observed paths. 

\xhdr{Contributions.} 
Our main contributions include (a) developing CI constraints on coordinate processes over selected intervals that we prove to be Markov with respect to the acyclic dependence graph (with self-loops) induced by a general SDE model.
(b) We then propose an efficient constraint based causal discovery algorithm and prove it to be sound and complete, assuming a CI-oracle, for both fully and partially observed data. Unlike existing constraint based methods in static settings, our algorithm uniquely recovers the full underlying graph or induced ancestral graph by exploiting the direction of time.
(c) Finally, we propose a flexible, practical CI test on path space, rendering our method highly practicable.
(d) We extensively demonstrate the test's efficacy and achieve superior performance in causal discovery compared to existing methods.

\section{Background and Related Work}\label{sec:background}

\xhdr{Causal discovery on time series.}
While virtually all causal discovery methods on time series exploit that the direction of time constrains edges to point forward, except for \citet{laumann2023kernel}, most existing work assumes observations to be sampled on a regular time grid with a discrete (auto-regressive) law, where past observations $(X_{t-\tau}, \ldots, X_{t-1})$ determine the present $X_t = f(X_{t-\tau}, \ldots, X_{t-1}, \varepsilon_t)$ for some fixed lag $\tau$, which can be seen as a static structural causal model (SCM) over an `unrolled graph' with variables at different time steps considered as distinct nodes \citep{assaad2022discovery,hasan2023survey,peters2017elements}.
Based on the seminal work \citep{granger1969investigating}, various Granger-type approaches leverage the assumption that past values of a variable can help predict future values to detect lagged causal influences both for linear \citep{diks2006new, granger1980testing} and non-linear \citep{marinazzo2008kernel,shojaie2022granger,runge2020discovering, pamfil2020dynotears} functional relationships.
While non-parametric constraint-based methods for such discrete-time models \citep{runge2019detecting,runge2020causality,runge2023causal} can handle partial observations, they are also fundamentally limited by the `discrete-time' assumption making them unfit for irregularly sampled observations of continuous time systems, path-dependence, or diffusion dependence.
Furthermore, they require numerous tests across all time points and rely on the correct estimation of a fixed `lookback window' $\tau$.
The fundamental problems of the `discrete-time' assumption have also been discussed in \citet{runge2018causal}.

Other work focuses on the (static) equilibrium behavior of differential equations \citep{mooij2013ordinary,bongers2018causal,bongers2018random}, exploiting invariance in mass-action kinetics \citep{peters2022causal}, or attempts to directly learn the full dynamical law via non-convex optimization of heavily overparameterized neural network models \citep{aliee2021beyond,aliee2022sparsity,bellot2021neural,scotch}.
The latter are fundamentally limited by the fixed functional modeling structures and therefore also unable to model partially observed settings.
Albeit its limited applicability to fully-observed, Markovian SDE models, SCOTCH \citep{scotch}, which uses a variational formulation to infer posterior distributions over possible graphs, is the current state-of-the-art baseline.

Constraint-based methods for continuous-time stochastic processes mostly rely on (conditional) local independence, an asymmetric independence relation \citep{schweder1970composable,mogensen2018causal}, which can be used to infer a (partial) causal graph in, e.g., point process models \citep{didelez2008graphical,meek2014toward,mogensen2018causal,mogensen2020markov}.
However, besides the limitation to drift-dependencies, there exists no practical test of local independence for diffusions.
The only work that actually tests local independence from data is \citep{CPH23}, which is heavily restricted to counting processes.

In this work, we overcome the challenges of irregular and partial observations, path- and diffusion dependence, and practical applicability by transforming CI statements of the form 
\begin{equation}\label{eq:oracle}
    X^{I}_{\cI} \indep  X^{J}_{\cJ} \mid  X^{K}_{\cK}, 
\end{equation}
for disjoint $I,J,K \subseteq [d]$ between subsets of coordinate-processes $X^{I}$, $X^{J}$, $X^{K}$ over intervals $\cI, \cJ, \cK \subseteq [0,T]$ into a practical hypothesis test using the signature kernel and the arrow of time. 

\xhdr{Conditional independence tests.}
As a key building block of graphical models, there is a large literature on conditional independence tests, i.e., testing the null hypothesis $H_0: X \indep Y \given Z$.
When $Z$ is discrete, CI testing reduces to a series of unconditional tests for each value of $Z$ \citep{tsamardinos2010permutation}.
For continuous $Z$, nonparametric kernel-based methods often either check independence of residuals after kernel (ridge) regressing $X$ and $Y$ on $Z$ \citep{shah2020hardness,lundborg2022conditional,zhang2012kernel,daudin1980partial,strobl2019approximate}, or measure the distance between joint and marginal kernel mean embeddings \citep{muandet2017kernel,park2020measure}.
For example, KCIPT \citep{doran2014permutation} phrases it as a two-sample test of the null $H_0: P(X, Y, Z) = P(X\given Z)P(Y\given Z)P(Z)$ and simulates samples from $P(X\given Z)P(Y\given Z)P(Z)$ via a permutation of the data that approximately preserves $Z$. \citet{SDCIT} improve on KCIPT with an unbiased estimate of Maximum Mean Discrepancy (MMD) \citep{gretton2006kernel}. 
Other permutation-based approaches require large sample sizes \citep{sen2017} or rely on densities \citep{kim2022local}, unfit for our setting.

\citet{laumann2023kernel} develop a CI test for functional data using the Hilbert-Schmidt conditional independence criterion (HSCIC) \citep{park2020measure} and a permutation test \citep{berrett_cpt}, which they use as part of the PC-algorithm \citep{glymour2019review} for causal discovery.
Even though their approach assumes $P_{X\mid Z}$ to be known (not the case in our setting), it is inapplicable under partial observations, and does not exploit the arrow of time (to identify graphs beyond the Markov equivalence class), we benchmark the work by \citet{laumann2023kernel} against our method in the experiments.
CI testing faces fundamental practical challenges, such as the exponential growth in the number of values to condition on as dimensionality of $Z$ increases, and theoretical limits, as any CI test’s power is bounded by its size \citep{shah2020hardness,lundborg2022conditional}.
This `no free lunch' statement underscores the need for carefully selecting the right test.
Most existing approaches assume Euclidean spaces and do not generalize to path-valued random variables without densities, a gap we fill by combining kernel-based permutation tests (KCIPT, SDCIT) with the signature kernel for a tailored CI test on path space and proving its consistency.

\xhdr{Signature kernels.}
Signature kernels \citep{kiraly2019kernels, salvi2021signature}, a universal class for sequential data, have received attention recently for their efficiency in handling path-dependent problems \citep{lemercier2021distribution, salvi2021higher, cochrane2021sk, lemercier2021siggpde, cirone2023neural, issa2023non, pannier2024path}.
The definition of the signature kernel requires an initial algebraic setup, which we keep as concise as possible---yet self-contained---here and refer the reader to \citet[Chapter~2]{cass2024lecture} for more details. We provide some more informal intuition of signatures in \cref{app:sigker}.
Let $\langle \cdot, \cdot \rangle_1$ be the Euclidean inner product on $\R^d$.
Denote by $\otimes$ the standard outer product of vector spaces.
For any $n \in \bN$, we denote by $\langle \cdot, \cdot \rangle_n$ on $(\R^d)^{\otimes n}$ the canonical Hilbert-Schmidt inner product defined for any $a=(a_1,\ldots, a_n)$ and $b=(b_1,\ldots ,b_n)$ in $(\R^d)^{\otimes n}$ as $\left\langle a,b \right\rangle_n = \prod_{i=1}^n \left\langle a_i,b_i \right\rangle_1$.
Define the direct sum of vector spaces $T(\R^d) := \bigoplus_{n=1}^\infty (\R^d)^{\otimes n}$, where it is understood that the direct sum runs over finitely many non-zero levels.
The inner product $\langle \cdot, \cdot \rangle_n$ on $(\R^d)^{\otimes n}$ can then be extended by linearity to an inner product $\langle \cdot, \cdot \rangle$ on $T(\R^d)$ defined for any $a=(a_0, a_1,\ldots)$ and $b=(b_0,b_1, \ldots)$ in $T(\R^d)$ as $\langle a, b \rangle = \sum_{n=0}^\infty \left\langle a_n, b_n \right\rangle_n$.
Other choices of linear extensions have been studied in \citet{cass2023weighted}.
We denote by $\mathcal{T}(\R^d)$ the Hilbert space obtained by completing $T(\R^d)$ with respect to $\langle \cdot, \cdot \rangle$.

The \defined{signature transform} is a classical path-transform from stochastic analysis. For any sub-interval $[s,t] \subset [0,T]$ and any continuous path $X \in C_p([0,T],\R^d)$ of finite $p$-variation, with $1 \leq p < 3$, it is (canonically) defined as 
  $  S(X)_{s,t} := \bigl(1,S(X)  ^{(  1)   }_{s,t},\ldots, S(  X)  ^{( n)}_{s,t},\ldots \bigr) \in \mathcal{T}(\R^d) $,
where $S(X)^{(n)}_{s,t} \in (\R^d)^{\otimes n}$ is the $n$-fold iterated integral
$S(X)^{(n)}_{s,t} = \int_{s<u_{1}<\ldots <u_{n}<t} \dif X_{u_{1}} \otimes \dif X_{u_{2}} \otimes\ldots \otimes \dif X_{u_{n}}$. Given two arbitrary sub-intervals $[a,b], [c,d] \subset [0,T]$, the \defined{signature kernel} $K_S: C_p([s,t], \R^d) \times C_p([s',t'], \R^d)\to \R$ is a positive definite kernel on continuous paths of bounded variation defined as
\begin{equation}\label{eq:signature_kernel}
K_S(X,Y) = \left\langle S(X)_{s,t}, S(Y)_{s',t'} \right\rangle\:.
\end{equation}
\citet[Thm.~2.5]{salvi2021signature} shows that $K_S(X, Y) = f(t,t')$, where $f:[s,t] \times [s',t'] \to \R$ is the solution of the following path-dependent integral equation:
\begin{equation}\label{eqn:pde}
    f(t,t') = 1 + \int_s^t \int_{s'}^{t'} f(u,v)\langle dX_u,dY_v \rangle_1\:, \quad \text{with }f(0,\cdot)=f(\cdot,0)=1 \:.
\end{equation}
This `kernel trick' allows us to evaluate the signature kernel without explicit computation of the signature transform by solving the partial differential equation (PDE) in \cref{eqn:pde}.
We refer to \citet{salvi2021signature} for a numerical approximation scheme to solve this hyperbolic PDE and its error rates and to \cref{app:sigker} for more mathematical details on the applicability in our setting.
In our experiments, we use the JAX library \href{https://github.com/crispitagorico/sigkerax}{\texttt{sigkerax}} to efficiently solve \cref{eqn:pde}.

In summary, the signature kernel is a universal kernel that takes (multi-variate) continuous paths---potentially on different intervals---as input, is efficiently computable, and performs well in capturing characteristics of sequential data \citep{lee2023signature}.

\section{Methodology}\label{sec:method}

We start with our constraint-based causal discovery algorithms, where we assume an oracle for CI testing on path space for \cref{eq:oracle} in \cref{sec:cd}. 
In \cref{sec:sigcit} we then introduce the signature kernel CI test and state its consistency, which requires a new proof that does not rely on densities.
Our test, applicable to arbitrary stochastic processes, time series, or functional data, performs well empirically in our extensive experiments in \cref{sec:experiments} making it a contribution of independent interest.

\subsection{Causal Discovery in the Acyclic SDE Model}\label{sec:cd}
Throughout this section, we assume access to a CI oracle for \cref{eq:oracle}, where $X^H_{[a,b]}$ denotes the $C([a,b], \R^{|H|})$-valued random variable $\omega \mapsto ([a,b] \ni t \mapsto (X^h_t(\omega) - X^h_a(\omega))^{h \in H} )$.
Asymptotically, such an oracle can be replaced in practice by a consistent finite-sample CI test for path-valued random variables (see \cref{sec:sigcit}).
More specifically, we test the independence of increments, not the independence of the processes on consecutive intervals, meaning that we consider path-valued random variables $\omega \mapsto ([a,b] \ni t \mapsto (X^h_t(\omega) - X^h_a(\omega))^{h \in H})$, effectively decoupling initial conditions from subsequent increments. For simplicity and  since this minor subtlety is naturally handled by the signature kernel as the signature transform is translation invariant (meaning $S(X(t))_{a,b} =   S(X(t)- X(a))_{a,b}$, we denote these paths by $X_{t}^{h \in H}$.

In \cref{app:example_fail}, we prove that when allowing for cycles in the SDE model, constraint-based causal discovery of the full graph is impossible using arbitrary expressions of the form in \cref{eq:oracle} despite the flexibility of these conditional independencies.
This impossibility result is a key motivation to study the acyclic setting still allowing for loops, which are crucial in dynamic settings as variables should be allowed to depend on themselves infinitesimally into the past.

We will use the oracle in the following ways for intervals $[0,s]$ and $[s,s+h]$ with $h > 0$:
\begin{itemize}
    \item $X^i$ is \defined{symmetrically CI} of $X^j$ given $X^K$ on $[0,T]$ if
    $X^i_{[0,T]} \indep X^j_{[0,T]} \given X^K_{[0,T]}$;\\[-0.7mm] we then write \symbdef{$X^i \indepsym X^j \given X^K$}
    \item $X^i$ is \defined{future-extended $h$-locally CI} of $X^j$ given $X^K$ at $s$ if
    $X^i_{[0,s]} \indep X^j_{[s,s+h]} \given X^j_{[0,s]}, X^K_{[0,s+h]}$;
    \footnote{The `$+$' denotes conditioning on the future of $X^K$, not common in the literature \citep{florens1996noncausality}.}
    \\[-0.7mm] we then write \symbdef{$X^i \indepsh X^j \given X^K$}
    \item $X^i$ is \defined{conditionally $h$-locally self-independent} given $X^K$ at $s$ if $X^i_{[0,s]} \indep X^i_{[s,s+h]} \mid X^K_{[0,s+h]}$;\\[-0.7mm] we then write \symbdef{$X^i \indep^\circlearrowleft_{s,h} \mid X^K$}
\end{itemize}

\begin{wrapfigure}{r}{0.45\textwidth}
    \centering
    \vspace{-5mm}
    \includegraphics[width=0.45\textwidth]{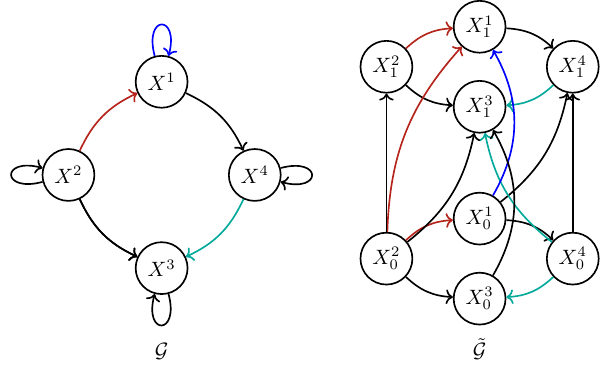}
    \vspace{-6mm}
    \caption{The lifted dependence graph $\tilde{\cG}$ (right) for a DAG $\cG$ (left) with colors highlighting the correspondence of selected edges.}
    \label{fig:lifted_dependence_graph}
    \vspace{-6mm}
\end{wrapfigure}

The key idea to leverage the unidirectional flow of time is to split the observed paths into `past' and `future' at some time $t \in [0,T]$.
Therefore, we define the \defined{lifted dependence graph} $\tilde{\cG} = (\tilde{V}, \tilde{E})$ by setting $\tilde{V} \coloneqq V_0 \sqcup V_1$, where $\sqcup$ denotes disjoint union and $V_0, V_1$ are two copies of $V$, whose elements we subscript with $0$ and $1$; we include an edge $(i_0 \to i_1) \in \tilde{E}$ if and only if there is a loop $(i \to i) \in E$, and for each edge $(i \to j) \in E$ with $i \neq j$, include edges $(i_0 \to j_0), (i_1 \to j_1), (i_0 \to j_1) \in \tilde{E}$, see \cref{fig:lifted_dependence_graph}.
Conversely, we say that the graph with edges $E$ is obtained by \defined{collapsing} the graph with edges $\tilde{E}$.
Intuitively, $V_0$ contains all `past' and $V_1$ all `future' variables with the same edges among them (causal relations) with additional edges from `past' to `future'.
We can now establish the following Markov property.
\begin{proposition}[Markov property]\label{prop:Markov}
Assume the dependence graph $\cG$ of the SDE model is acyclic except for loops and associate $X^i_{[0,s]}$ with $i_0\in V_0$ and $X^i_{[s,s+h]}$ with $i_1 \in V_1$. Then $\tilde{\cG}$ is acyclic and the independence relation $\indepsh$ satisfies the global Markov property w.r.t.\ d-separation in $\tilde{\cG}$.
\end{proposition}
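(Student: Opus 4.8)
The plan is to prove the two assertions separately, obtaining the Markov property by realizing the lifted variables as the endogenous variables of a recursive structural causal model (SCM) whose DAG is exactly $\tilde{\cG}$, and then invoking the (density-free) equivalence between such a factorization and the global Markov property.

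\textbf{Acyclicity.} This part is straightforward and follows purely from the construction of $\tilde E$. Every edge of $\tilde{\cG}$ either stays within $V_0$, stays within $V_1$, or points from $V_0$ to $V_1$; by construction no edge points from $V_1$ back into $V_0$. Hence any directed cycle would have to be confined to a single layer. The subgraph induced on $V_0$ (and likewise on $V_1$) is a copy of $\cG$ with all self-loops deleted, which is acyclic by the standing assumption that $\cG$ has no cycle of length greater than one. Since neither layer contains a cycle and there are no back-edges, $\tilde{\cG}$ is acyclic.

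\textbf{Structural representation.} For the Markov property I would first attach an independent noise to each node. Set $U^j := (x^j_0, W^j_{[0,s]})$ and $V^j := W^j_{[s,s+h]}$, the Brownian increment on the second interval. By the joint independence of the $W^k$ and initial conditions together with the independent-increments property of Brownian motion, the family $\{U^j\}_{j\in[d]} \cup \{V^j\}_{j\in[d]}$ is jointly independent. Now fix a topological order of $\cG$ (ignoring loops, which do not affect it). Solving \cref{eq:SDEmodel} coordinate-by-coordinate in this order is well-posed under the regularity assumptions on $\mu^j,\sigma^j$ guaranteeing unique strong solutions (\cref{app:sdes}), so each path is a measurable functional of its parents' paths and its own noise. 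On $[0,s]$ this yields $X^j_{[0,s]}$ as a functional of $\{X^i_{[0,s]} : i\in \pa^{\cG}_j\setminus\{j\}\}$ and $U^j$; these inputs are exactly the $\tilde{\cG}$-parents of $j_0$. For the future increment I would use that, for $u\in[s,s+h]$, the history $X^i_{[0,u]}$ entering $\mu^j,\sigma^j$ splits as the concatenation of the past $X^i_{[0,s]}$ (node $i_0$) and the increment $X^i_{[s,u]}$ (node $i_1$); solving the increment equation in topological order then expresses $X^j_{[s,s+h]}$ as a functional of $\{X^i_{[0,s]}:i\in\pa^{\cG}_j\}$, of $\{X^i_{[s,s+h]}:i\in\pa^{\cG}_j\setminus\{j\}\}$, and of $V^j$. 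These are precisely the $\tilde{\cG}$-parents of $j_1$, with the argument $j_0$ present if and only if $j\to j\in E$ (a self-loop), matching the edge $j_0\to j_1$.

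\textbf{Conclusion.} The previous step exhibits each lifted variable as a measurable function of its $\tilde{\cG}$-parents and a single exogenous noise, with all exogenous noises jointly independent; this is exactly a Markovian (recursive) SCM over the DAG $\tilde{\cG}$. I would then invoke the standard measure-theoretic result that such a system induces a joint law factorizing along $\tilde{\cG}$ and hence satisfying the global directed Markov property, i.e. d-separation in $\tilde{\cG}$ implies conditional independence of the corresponding path-valued variables, valid for Polish-space-valued variables without any appeal to densities. In particular, reading off the d-separation statement $i_0 \indep j_1 \mid \{j_0\}\cup K_0 \cup K_1$ recovers $X^i\indepsh X^j\mid X^K$. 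The main obstacle is making the structural representation fully rigorous in the path-dependent, infinite-dimensional setting: establishing measurability of the solution functionals on the relevant path spaces, and above all verifying the clean split of the future increment's dependence into the correct $V_0$- and $V_1$-parents, so that the parent sets in the SCM coincide exactly with $\pa^{\tilde{\cG}}$ rather than merely being contained in it.
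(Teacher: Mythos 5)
Your proposal is correct and takes essentially the same route as the paper's proof: the same layer-based acyclicity argument, the same realization of the lifted path variables as a recursive SCM over $\tilde{\cG}$ with jointly independent exogenous noises $(x_0^k, W^k_{[0,s]})$ and Brownian increments $(W^k_{s+t}-W^k_s)_{0\le t\le h}$, and the same appeal to the density-free SCM-implies-global-Markov result \citep[Proposition~6.31]{peters2017elements}, with measurability of the solution functionals supplied by \citet[Theorem~10.4]{RW00}. The one obstacle you flag---cleanly splitting the future increment's dependence so that $j_0$ is a parent of $j_1$ only in the presence of a self-loop---is resolved in the paper exactly as you anticipate: the node $j_1$ carries the increment $X^j_{[s,s+h]}-X^j_s$, obtained by removing the initial condition in front of the integrals in the solution equation and making the past/future dependence explicit via path concatenation.
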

Our proof (see \cref{app:proofmarkov}) leverages the structure of the SDE-generating mechanism, which factorizes according to the dependence graph, thereby allowing us to formally establish the global Markov property using arguments from stochastic analysis.
With an oracle for \cref{eq:oracle} and a faithfulness type assumption (for more details see \cref{app:faith}), we now develop new constraint-based causal discovery algorithms that infer the full dependence graph instead of just equivalence classes.
\begin{wrapfigure}{r}{0.54\textwidth}
\vspace{-1mm}
\begin{minipage}{0.54\textwidth}\footnotesize
\captionof{algorithm}{Causal discovery for acyclic SDEs.}\label{algo:ctPC}
\vspace{-3mm}
    \begin{algorithmic}[1]
        \State $\tilde V \gets \{k_0,k_1 \mid k \in V\}$ \\
        $\tilde E \gets \{i_0 \to j_0, \ i_1 \to j_1 \mid i,j \in V, \ i \neq j\}$ \\  $\hphantom{E\leftarrow} \cup \{i_0 \to j_1 \mid i, j \in V\}$
        \For{$c = 0,\ldots, d-2$} \Comment{edge recovery w/o loops}
        \For{$i,j \in V$, $i \neq j$}
        \For{$K \subseteq V \setminus \{i,j\}$, $|K| = c$,\\\phantom{}\hspace{5mm} s.t.\ $(k_0 \to j_1) \in \tilde E$ for $k \in K$}
        \If{$X^i \indep^+_{s,h} X^j \mid X^K$}
        \State $\tilde E \gets \tilde E \setminus \{i_0 \to j_0, i_1 \to j_1, i_0 \to j_1 \}$
        \EndIf
        \EndFor
        \EndFor
        \EndFor
        \State $\cG = (V,E) \gets \mathrm{collapse}(\tilde V, \tilde E)$
        \For{$k \in V$} \Comment{removing loops}
        \If{$X^k \indep^\circlearrowleft_{s,h} \mid X^{\pa^{\cG}_k \setminus \{k\} }$} 
        \State $E \gets E \setminus \{k \to k\}$
        \EndIf
        \EndFor
        \State \Return $\cG$
    \end{algorithmic}
\end{minipage}
\vspace{-8mm}
\end{wrapfigure}

\xhdr{Discovering the full DAG $\cG$.} To discover the full graph including loops we propose \cref{algo:ctPC} that makes use of the time-ordering via our $h$-local independence models (consider $s, h > 0$ arbitrary but fixed).
\begin{theorem}\label{thm:causalDisc}
\cref{algo:ctPC} is sound and complete for the SDE model, assuming acyclicity except for loops and faithfulness:
its output is the true dependence graph $\cG$.
\end{theorem}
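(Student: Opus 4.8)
The plan is to reduce the entire correctness argument to d-separation statements in the lifted graph $\tilde\cG$, and then to check that \cref{algo:ctPC} examines exactly the right separating sets in each of its two phases. The bridge is \cref{prop:Markov} together with faithfulness: for any admissible pattern, the oracle reports $X^i \indepsh X^j \mid X^K$ if and only if $i_0$ and $j_1$ are d-separated in $\tilde\cG$ by $\{j_0\}\cup K_0 \cup K_1$ (Markov gives $\Leftarrow$, faithfulness gives $\Rightarrow$), and likewise $X^k \indep^\circlearrowleft_{s,h}\mid X^K$ holds iff $k_0 \perp_d k_1$ given $K_0\cup K_1$. Throughout I would use two structural facts about $\tilde\cG$: all edges lie within $V_0$, within $V_1$, or go from $V_0$ to $V_1$, so every node of $V_0$ is a non-descendant of every node of $V_1$; and $\pa_{\tilde\cG}(j_1) = (\pa^{\cG}_j\setminus\{j\})_1 \cup (\pa^{\cG}_j)_0$.

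For the first loop (non-loop edge recovery) I would argue retention and deletion separately. True edges are never deleted: if $i\to j \in E$ with $i\neq j$ then $i_0 \to j_1 \in \tilde E$, so $i_0,j_1$ are adjacent and cannot be d-separated; by faithfulness the oracle never reports independence, so none of $\{i_0\to j_0, i_1\to j_1, i_0\to j_1\}$ is removed. For the converse, fix $i\neq j$ with $i\to j\notin E$ and set $P:=\pa^{\cG}_j\setminus\{j\}$; since $i\notin P$ we have $P\subseteq V\setminus\{i,j\}$ and $|P|\le d-2$. I would show the mandated set $S:=\{j_0\}\cup P_0 \cup P_1$ d-separates $i_0$ from $j_1$. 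As $S\supseteq\pa_{\tilde\cG}(j_1)$, $i_0$ is a non-descendant of $j_1$, and $i_0$ is not a parent of $j_1$, the local Markov property already blocks every path when $j$ carries a self-loop (there $S=\pa_{\tilde\cG}(j_1)$ exactly). The delicate case is when $j$ has no self-loop, so $S$ carries the extra node $j_0\notin\pa_{\tilde\cG}(j_1)$. Here I would use collider bookkeeping: adding $j_0$ to $\pa_{\tilde\cG}(j_1)$ can newly activate a path only through an inactive collider $c$ with $c=j_0$ or $j_0\in\de(c)$. Any strict ancestor $c$ of $j_0$ reaches $j_0$ through a parent of $j_0$, and $\pa_{\tilde\cG}(j_0)=P_0\subseteq S$, so such colliders were already open and contribute nothing new; the only remaining case is a collider at $j_0$, i.e.\ a subpath $p_0 \to j_0 \leftarrow p_0'$ with $p_0,p_0'\in P_0$, but these are conditioned non-colliders and block the path anyway. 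Hence $S$ is a valid separating set. Since each $k\in P$ satisfies $k\to j\in E$, the guard $(k_0\to j_1)\in\tilde E$ holds throughout the run (true edges survive), so $K=P$ lies in the search space and is examined at stage $c=|P|$; the oracle fires and $i_0\to j_1$ (together with $i_0\to j_0$ and $i_1\to j_1$) is removed. Consequently, after collapsing, the non-loop edges of $\cG$ are recovered exactly, while every loop $i_0\to i_1$ survives because the first loop never tests $i=j$.

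For the second loop (loop removal), the collapsed graph already carries the correct non-loop structure, so $\pa^{\cG}_k\setminus\{k\}$ equals the true non-self parents $P$ of $k$. If $k\to k\in E$ then $k_0\to k_1\in\tilde E$ is a true edge, so $k_0,k_1$ are adjacent, by faithfulness the self-independence never holds, and the loop is kept. If $k\to k\notin E$, then $k\notin\pa^{\cG}_k$, so $\pa_{\tilde\cG}(k_1)=P_0\cup P_1$ exactly; since $k_0$ is a non-descendant of $k_1$ and not its parent, the local Markov property gives $k_0 \perp_d k_1 \mid P_0\cup P_1$, the oracle reports $X^k\indep^\circlearrowleft_{s,h}\mid X^P$, and the spurious loop is removed. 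Combining the two phases, the output coincides with $\cG$.

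I expect the main obstacle to be the deletion half of the first loop, specifically verifying that the algorithm's rigidly-shaped conditioning set $\{j_0\}\cup K_0\cup K_1$—forced to include the present node $j_0$ and both temporal copies of $K$—is still a valid separating set when $j$ has no self-loop. This is precisely where the arrow-of-time structure of $\tilde\cG$ (no $V_1\to V_0$ edges, and $\pa_{\tilde\cG}(j_0)\subseteq K_0$) does the work, and it must be coupled with the PC-style invariant that true edges are never deleted, so that the required conditioning set genuinely remains available in the search space throughout the run.
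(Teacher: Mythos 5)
Your proposal is correct, and at the top level it mirrors the paper's proof in \cref{app:proofCausalDisc}: the same bridge between the oracle and d-separation in $\tilde\cG$ (\cref{prop:Markov} for the Markov direction, faithfulness for the converse), the same candidate separating set $S=\{j_0\}\cup P_0\cup P_1$ with $P=\pa^{\cG}_j\setminus\{j\}$, and the same split into retention of true edges and deletion of non-edges. Where you genuinely depart from the paper is in how the d-separation claim $i_0 \indep_{d}^{\tilde\cG} j_1 \given S$ is proved. The paper argues directly on paths, Verma--Pearl style: every path factors as $i_0\cdots h_0\to l_1\cdots j_1$, the $V_1$-segment is split into cases, and the problematic collider case is excluded by a directed-cycle contradiction. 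You instead observe that $S\supseteq\pa_{\tilde\cG}(j_1)$, with equality exactly when $j$ has a self-loop, so that case follows from the standard fact that a node is d-separated from any non-parent non-descendant given its parents; in the remaining case you show the one extra conditioned node $j_0$ cannot unblock anything, because a collider $c$ newly activated by $j_0$ would satisfy $j_0\in\{c\}\cup\de(c)$, and since $\pa_{\tilde\cG}(j_0)=P_0\subseteq S$ either $c$ already had itself or a descendant in $\pa_{\tilde\cG}(j_1)$, or $c=j_0$ and its path-neighbors are conditioned non-colliders. This perturbation argument is sound and arguably more modular: it isolates the only delicate point instead of re-analyzing all paths, at the price of invoking the local-to-global d-separation machinery. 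You are also more explicit than the paper about two pieces of bookkeeping it leaves implicit: that the retention invariant (true edges are never deleted, by faithfulness) keeps the algorithm's guard $(k_0\to j_1)\in\tilde E$ satisfied for all $k\in P$, so $K=P$ is genuinely reached at stage $c=|P|$; and that loops survive the first phase because only pairs with $i\neq j$ are tested there. The one loose end you share with the paper (hence not a gap of your proof relative to it) is that \cref{def:global_faithfulness} is stated only for $\indepsh$, and the loop-retention step needs its obvious analogue for the self-independence relation $\indep^\circlearrowleft_{s,h}$.
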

The proof (see \cref{app:proofCausalDisc}) establishes completeness by identifying a separating set and leveraging the global Markov property in \cref{prop:Markov}; soundness is shown via the faithfulness assumption.
In particular, our proof does not require the `strong faithfulness' assumption, but a weaker version, `parent faithfulness', suffices---see \cref{app:faith} for a detailed discussion of faithfulness in our setting.

\xhdr{Discovering and post-processing the CPDAG.}
In \cref{algo:ctPC}, we leverage our Markov property with respect to the lifted graph that also incorporates time directionality to infer the full graph.
We now show that we can also use the symmetric criterion $\indepsym$ to recover the `Markov equivalence class'---all graphs that are Markov equivalent to $\cG$---via the completed partially directed acyclic graph (CPDAG) of $\cG$ \citep[Def. 6.24]{peters2017elements}.
Loosely speaking, the CPDAG has the same adjacencies as $\cG$, but some edges may remain undirected.
Due to space limitations, we present the required global Markov property with respect to $\indepsym$ in \cref{app:symmetric}.
This Markov property allows us to apply the sound and complete PC algorithm to infer the CPDAG assuming an CI oracle for $\indepsym$ and faithfulness \citep{spirtes2000causation}.
While \cref{algo:ctPC} is strictly more informative (returns $\cG$ instead of just its CPDAG), the reliability of $\indepsh$ in practice might be negatively affected compared to $\indepsym$ by (a) conditioning on larger sets, (b) shorter time segments (potentially losing information), and (c) the additional choice of parameters $s, h$.
Hence, in real-world applications inferring the CPDAG using $\indepsym$ may be more robust than inferring $\cG$ fully using $\indepsh$.
Building on these potential benefits, we also develop a post-processing procedure for the CPDAG that again leverages the directionality of time to provably also orient all remaining unoriented edges.
\begin{corollary}[post-processing]\label{cor:postprocessing}
Let $\cG = (V, E )$ be the dependence graph of the acyclic (except for loops) SDE model and $\tilde{\cG} = (V, \tilde{E})$ its CPDAG.
Then we have $X^{j}_{[0,T]} \notindep X^{i}_{0}$ but $X^{i}_{[0,T]} \indep X^{j}_{0}$ for all $(i,j) \in E \subset \tilde{E}$ with $ (j,i) \in \tilde{E}$ and $i \neq j$.
\end{corollary}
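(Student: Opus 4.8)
The plan is to derive both assertions from the ancestral (topological) structure of the SDE system together with the joint independence of the initial conditions $x^k_0$ and the driving noises $W^k$, invoking faithfulness only for the dependence half. It is conceptually cleanest to augment the dependence graph $\cG$ with one additional source node per coordinate, namely the (mutually independent) initial condition $x^k_0$, pointing into the $k$-th coordinate process. The arrow of time is then encoded graphically: since $\cG$ is acyclic except for loops and $(i,j) \in E$ with $i \neq j$, there is a directed path $x^i_0 \to X^i \to X^j$ but no directed path from $x^j_0$ to $X^i$. The undirectedness condition $(j,i) \in \tilde{E}$ merely identifies the CPDAG edges that the post-processing targets; the two claims themselves rely only on $(i,j) \in E$, $i \neq j$, and acyclicity.

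For the independence $X^i_{[0,T]} \indep X^j_0$, I would first note that since $\cG$ contains the edge $i \to j$ and has no cycle of length greater than one, $j$ cannot be an ancestor of $i$, i.e.\ $j \notin \an_{\cG}(i) \cup \{i\}$ (a loop $i \to i$ does not enlarge $\an_{\cG}(i)$). Unrolling \cref{eq:SDEmodel} in a topological order of $\cG$---exactly the ancestral factorization underlying \cref{prop:Markov}---shows that the increment path $X^i_{[0,T]}$ is measurable with respect to $\sigma(\{x^a_0, W^a : a \in \an_{\cG}(i) \cup \{i\}\})$. Since $j$ is absent from this index set and the family $\{x^k_0, W^k\}_{k \in [d]}$ is jointly independent, $X^j_0 = x^j_0$ is independent of that $\sigma$-algebra, and hence of $X^i_{[0,T]}$. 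This half uses only the generative/Markov structure, not faithfulness.

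For the dependence $X^j_{[0,T]} \notindep X^i_0$, I would use that $i \in \pa_{\cG}(j) \subseteq \an_{\cG}(j)$, so in the augmented graph the path $x^i_0 \to X^i \to X^j$ is active and $x^i_0$ is \emph{not} d-separated from $X^j_{[0,T]}$; by definition of the dependence graph, $\mu^j$ or $\sigma^j$ is genuinely non-constant in its $i$-th argument, so this connection is not spurious. Invoking the faithfulness assumption of our discovery framework (the weaker parent-faithfulness variant discussed in \cref{app:faith} suffices), d-connection implies statistical dependence, which yields $X^j_{[0,T]} \notindep X^i_0$.

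The main obstacle is this dependence half: turning the structural fact that $\mu^j$ or $\sigma^j$ depends on coordinate $i$ into a genuine distributional dependence of the whole increment path $X^j_{[0,T]}$ on the scalar initial value $X^i_0$. Degenerate coefficients could in principle make the influence of $x^i_0$ vanish in law, and it is precisely this that faithfulness excludes; I therefore expect the careful formulation of faithfulness for path-valued variables augmented with initial-condition nodes---and the verification that $x^i_0 \to X^i \to X^j$ is genuinely d-connecting---to be the delicate point, whereas the independence half follows directly from ancestral measurability.
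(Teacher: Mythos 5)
Your proposal is correct and takes essentially the same route as the paper's proof: the independence half is precisely the paper's argument that the path of coordinate $i$ is measurable with respect to $\{x^a_0, W^a : a \in \an(i) \cup \{i\}\}$, which is jointly independent of $x^j_0$ because acyclicity (modulo loops) together with $i \to j$ rules out $j \in \an(i)$. The only difference is that for the dependence half you explicitly invoke (parent) faithfulness on the graph augmented with initial-condition nodes, whereas the paper asserts the equivalence ``$i \in \an_j$ if and only if $X^j_{[0,T]} \notindep X^i_0$'' directly from the generative mechanism---your explicit appeal is the more careful reading, since without a faithfulness-type assumption degenerate coefficients or a deterministic $X^i_0$ could make the influence of $x^i_0$ vanish in law.
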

\vspace{-1mm}
The proof in \cref{app:proofPostProcessing} relies on the joint independence of Brownian motions $\dif W^k_t$ and initial conditions $X_0^k$.
\Cref{cor:postprocessing} directly motivates an alternative algorithm for causal discovery of $\cG$, written out as \cref{algo:PC_ext_init} in \cref{app:proofPostProcessing}, by first constructing the CPDAG (PC algorithm with $\indepsym$) followed by testing unconditionally the yet unoriented edges as in \cref{cor:postprocessing}.

\xhdr{Partially observed setting.}
Finally, we consider the partially observed setting, where only $\lbrace X^{v_k}_{t} \rbrace_{v_k \in V_{obs}, t \in [0,1]}$ of the process $\lbrace X^{v_i}_t \rbrace_{v_i \in V_{obs} \sqcup V_L, t\in [0,1]}$ from the model in \cref{eq:SDEmodel} with DAG $\cG = (V_{obs} \sqcup V_L, E)$ are observed. Asymmetric local independence based methods for causal discovery are challenged by requiring exponentially many oracle calls in the partially observed setting \citep{mogensen2020graphical,mogensen2023weak}.
However, since our symmetric conditional independence constraint satisfies the global Markov property with respect to the induced acyclic directed mixed graph (ADMG) $\cG_{obs} = (V_{obs}, E^\prime)$ (also known as latent projection, with potentially arbitrary unobserved processes) over the observed variables, we can take inspiration from the Fast Causal Inference (FCI) algorithm \citep{jiri_zhang_fci} by first running its skeleton-discovery-part and then again leveraging the direction of time to uniquely infer the edge-type $\lbrace \leftarrow, \rightarrow, \leftrightarrow \rbrace$ for found adjacencies.
This reduces the typical partial ancestral graph output of FCI (representing an equivalence class of ancestral graphs) to a single, maximally informative graph (as informative as an ancestral graph can be).
We provide all details in \cref{app:partiallyobserved}.

\vspace{-2mm}
\subsection{Signature Kernel Conditional Independence Test}\label{sec:sigcit}
\vspace{-2mm}

Our theoretical results in \cref{sec:cd} assume a CI oracle.
To make our algorithms usable in practice, we now propose a flexible CI test for path-valued random variables on different time intervals for expressions of the form \cref{eq:oracle}.
Recent work using the signature kernel is limited to unconditional hypothesis tests \citep{ObCh22} or only use it as a heuristic measure of conditional independence (not developing a hypothesis test) \citep{salvi2021higher}, and both do not use the time order.
Using terms of the signature to detect causality (not specifically with hypothesis testing) was also proposed by \citet{giustiLee,pmlr-v160-glad21a}.
Hence, we are the first to combine ideas from kernel-based CI tests and the signature kernel for a practically usable, consistent CI test on path space.
While the existing consistency proof for KCIPT (also applying to SDCIT) \citep{doran2014permutation} relies on the existence of densities, we prove consistency for testing on path-valued random variables in \cref{app:consistency} requiring novel arguments. We discuss the sensitivity of constraint-based causal discovery on errors in the CI test in \cref{app:sensitivity-pc}.
Given $n$ samples of path segments $X^{(i)}_{\cI}$, $Y^{(i)}_{\cJ}, Z^{(i)}_{\cK}$ on intervals $\cI, \cJ, \cK \subset [0, T]$ for $i \in [n]$, we compute the Gram matrices $k_{XX}, k_{YY}, k_{ZZ} \in \R^{n \times n}$ using the signature kernel $K_S$ and run a kernel-based permutation CI test like KCIPT \citep{doran2014permutation} or SDCIT \citep{SDCIT} to test $X_{\cI} \indep Y_{\cJ} \given Z_{\cK}$.\footnote{While time observations within a process are not i.i.d., the $n$ process samples are i.i.d.\ realizations of the SDE model, see \cref{app:iid-assumption} for details.}
The theoretical foundation combined with the strong empirical performance of our CI test suggest that it may be of independent interest outside of causal discovery applications.

\section{Experiments}\label{sec:experiments}
In this section, after introducing the baselines and implementation details, we empirically evaluate our CI test in a variety of settings.
We then demonstrate strong performance in various causal discovery tasks and also outperform traditional methods in a real-world finance case study.

\xhdr{Baselines.}
We compare against CCM \citep{sugihara2012detecting}, PCMCI \citep{runge2019detecting}, Granger causality \citep{granger1969investigating}, a kernel-based approach (Lau) \citep{laumann2023kernel}, and SCOTCH \citep{scotch}, a variational neural SDE method. CCM, Granger, and PCMCI are well-established methods for time series, while Lau and SCOTCH represent the latest advances in causal discovery for functional data and SDE models, respectively. CCM and Granger are limited to bivariate cases, while Lau is only applicable to unconditional tests here. Details are in \cref{app:baselines}.

\xhdr{Implementation details and metrics.} 
We use \href{https://github.com/crispitagorico/sigkerax}{\texttt{sigkerax}} for the signature kernel with an RBF kernel with length scale selected via a median heuristic (see \cref{app:median_heuristic}).  For bivariate cases, the causal structure is $X^1 \rightarrow X^2$, for $d > 2$, we draw Erd\"os–R\'enyi DAGs. Performance is measured by Structural Hamming Distance (SHD, \cref{app:metrics}). For $\indepsh$, $s=0.1\cdot T$ (and a fixed $T=1$) performed best (\cref{tab:heuristic}). The SDE model is 
\begin{align}\label{eq:linear_sde}
\dif X_t =& (A X_t + c)\dif t + \diag(BX_t + d) \dif W_t, \quad
 \text{with } A, B  \in \R^{d\times d}, c, d \in \R^d\: .
\end{align} 
except for the 2-dimensional non-linear SDE, which is given by
\begin{align}\label{eq:nonlinear_sde}
     \dif \begin{pmatrix} X^{1}_t \\ X^{2}_t \end{pmatrix} = \begin{pmatrix} -r \omega \sin (\omega t ) \\r \omega  \tanh (X^{2}_t) \end{pmatrix}\dif t + \diag( d^{\top}) \dif W_t\:.
\end{align}

\xhdr{Computational complexity.}
Due to space constraints, we defer the computational complexity analysis of our CI test, the causal discovery algorithm, and the signature kernel evaluations in \cref{app:computational_complexity}.
The overall computation is dominated by finding the permutation that leaves $Z$ (approximately) invariant in the permutation based CI tests (KCIPT, SDCIT).

\xhdr{Choosing (conditional) independence tests.}
In initial experiments on two- and three-dimensional structures (e.g., forks, colliders, chains) detailed in \cref{app:comparison_tests}, we found bootstrapped SDCIT and HSIC to be the best performing variants of kernel-based (C)I tests (see \cref{fig:performance_different_tests,tab:building_blocks,tab:building_blocks_asymmetric,tab:building_blocks_diffusion} for details) and use them for all subsequent experiments.

\begin{table}
\centering
\vspace{-2mm}
\caption{SHD ($\times 10^2$) comparison of SigKer to the baselines in four bivariate SDE settings: linear, path-dependence, non-linear, and diffusion dependence. We ran SCOTCH for different sparsity parameters $\lambda$ and numbers of epochs $n_e$; there is no clear trend in either parameter, but $\lambda=100, n_e=2000$ performed best overall. Different learning rates performed worse across the board.}\label{tab:overall_bivariate}
\vspace{-2mm}
\rowcolors{1}{gray!15}{white}
{\scriptsize
\begin{tabularx}{\textwidth}{ll *{8}{R} }
\toprule
\rowcolor{white}
\multirow{2}{*}{} && \multicolumn{2}{c}{linear} & \multicolumn{2}{c}{path-dependence}  & \multicolumn{2}{c}{non-linear} & \multicolumn{2}{c}{diffusion dependence} \\
\cmidrule(lr){3-4}\cmidrule(lr){5-6}\cmidrule(lr){7-8}\cmidrule(lr){9-10}
\rowcolor{white}
\color{gray}{$\times 10^2$} & $(\lambda, n_e)$ & $n = 200$ & $n = 400$ & $n = 200$ & $n = 400$ & $n = 200$ & $n = 400$ & $n = 200$ & $n = 400$ \\
\midrule
CCM         && $176 \pm 7$   & $166 \pm 8$   & $186 \pm 5$   & $194 \pm 3$   & $120 \pm 10$   & $106 \pm 10$  & $100 \pm 10$  & $84 \pm 10$ \\
Granger     && $92 \pm 5$    & $87 \pm 5$    & $85 \pm 5$    & $95 \pm 6$    & $103 \pm 5$   & $103 \pm 5$   & $105 \pm 5$   & $111 \pm 4$ \\
PCMCI       && $89 \pm 9$    & $100 \pm 10$  & $91 \pm 4$    & $125 \pm 15$  & $41 \pm 8$    & $55 \pm 15$   & $98 \pm 13$   & $75 \pm 23$ \\
SCOTCH  & 100, 1k   & $74 \pm 14$  & $77 \pm 17$  & $50 \pm 12$  & $46 \pm 12$ & $64 \pm 15$ & $80 \pm 10$ & $75 \pm 14$ & $62 \pm 10$ \\
SCOTCH  & 50, 2k    & $79 \pm 18$  & $44 \pm 17$  & $23 \pm 12$  & $73 \pm 23$ & $83 \pm 11$ & $73 \pm 13$ & $\mathbf{25 \pm 13}$ & $44 \pm 28$ \\
SCOTCH  & 100, 2k   & $50 \pm 17$  & $36 \pm 19$  & $64 \pm 19$  & $55 \pm 15$ & $85 \pm 7$  & $91 \pm 9$  & $33 \pm 16$ & $\mathbf{9 \pm 8}$ \\
\textbf{SigKer}      && $\mathbf{14 \pm 4}$    & $\mathbf{7 \pm 3}$     & $\mathbf{5 \pm 2}$     & $\mathbf{6 \pm 2}$     & $\mathbf{28 \pm 5}$    & $\mathbf{5 \pm 2}$     & $72 \pm 6$    & $63 \pm 5$ \\
\bottomrule
\end{tabularx}
}
\vspace{-4mm}
\end{table}

\begin{wrapfigure}{r}{0.4\textwidth}
  \centering
  \includegraphics[width=0.4\textwidth]{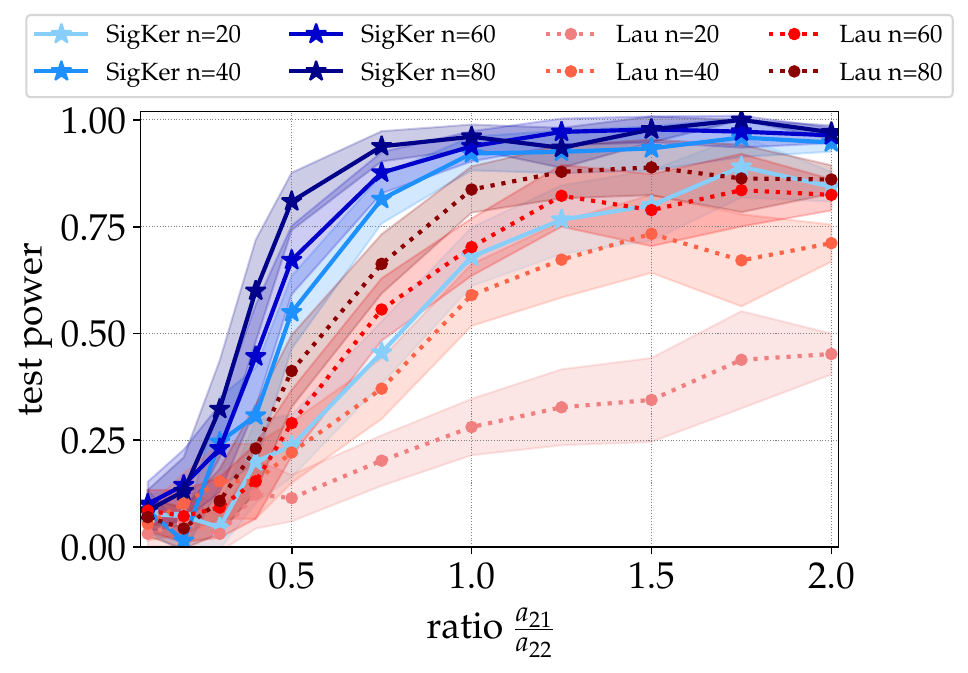} 
  \vspace{-5mm}
  \caption{Test power for $X^1_{[0,t]} \indep X^2_{[0,t]}$ over $\tfrac{a_{21}}{a_{22}}$. Lines (shades) are means (standard errors) over $1000$ SDE instances.}
  \label{fig:power_laumann}
  \vspace{-4mm}
\end{wrapfigure} 

\xhdr{Power analysis of the unconditional test.}
We first demonstrate consistently superior performance of our unconditional independence test (HSIC) SigKer (ours) over the only existing baseline for this setting from \citet{laumann2023kernel} (Lau) in the linear SDE-setting \cref{eq:linear_sde} (with $B \equiv 0$, $d_i = 0.4$), reaching test power near 1 already for $n \geq 40$ when the causal interaction $a_{21}$ (strength of $X^1 \to X^2$) is comparable to $X^2$'s self-dependence ($a_{22}$) in \cref{fig:power_laumann}.

\Cref{fig:power_over_missingness} further highlights SigKer’s robustness even with \emph{high data missingness}, and \cref{fig:fbm_example} demonstrates its effectiveness on \emph{fractional Brownian motions} outside the semi-martingale framework.
We are not aware of other CI tests that apply to such settings---establishing it as state-of-the-art CI test for stochastic processes.

\xhdr{Leveraging the direction of time.}
Next, we leverage time to orient the edge in the much-studied bivariate causal-discovery setting ($X^1 \to X^2$) via our $\indepsh$ constraint with $K=\emptyset$ for a variety of settings: (i) linear SDEs with dependence in the drift, (ii) diffusion, (iii) or path-dependent, as well as (iv) nonlinear interactions.
\Cref{tab:overall_bivariate} shows that SigKer decisively dominates all baselines in all settings for different sample sizes except diffusion dependence.
In this setting, SCOTCH is better \emph{for specific hyperparameter settings}, while performing worse for others.
Crucially, one cannot `cross-validate' or otherwise select such hyperparameters for the causal-discovery task in a data-driven fashion without ground-truth knowledge.
The lack of robustness (different optimal hyperparameters for different sample sizes) renders SCOTCH unreliable in practice even for diffusion dependence.

Specifically, we draw the settings for these experiments as follows:
for linear drift interactions we sample $a_{21} \sim \cU( [1, 2.5]) $, $a_{11}, a_{22} \sim \cU([-0.5, 0.5])$, $a_{12}=0$, $B \equiv 0$, $d_i \sim \cU([0.1, 0.2])$ in a two-dimensional linear SDE model \cref{eq:linear_sde}.
For linear diffusion interactions we draw $a_{11}, a_{22} \sim \cU([0.5,1])$ and $b_{21} \sim \cU([1,4.5])$), the rest set to zero in a two-dimensional linear SDE model \cref{eq:linear_sde}.
For path-dependence, we simulate $\dif X_{t}^{2} = \mathbf{\mu}^{2} (X_{[0,t]}^{1}) \dif t + d_2 \dif W_{t}^{1}$ via a three-dimensional SDE \cref{eq:linear_sde} with $B\equiv0$, $c\equiv0$, $a_{23}, a_{31} \sim \cU([-3.5, -1] \cup [1, 3.5])$ and $d = ( d_1, d_2,  0)^{\top}$, $d_i \sim \cU([0.1, 0.2])$, the rest set to zero. For the nonlinear SDEs we use \cref{eq:nonlinear_sde} with $\omega \sim \cU([6 \pi, 8 \pi])$, $r \sim \cU([0.5, 1])$, $d_i \sim \cU([2, 2.5])$.

\addtolength{\tabcolsep}{-1mm}
\begin{wraptable}{r}{0.68\textwidth}
\vspace{-5mm}
\centering
    \rowcolors{1}{white}{gray!15}
    \caption{SHD ($\times 10^2$) comparison of SigKer to PCMCI and SCOTCH (different $\lambda$ and $n_e$) in causal discovery. Means and standard errors are over 40 SDE instances.}\label{tab:causal-discovery}
    \vspace{-2mm}
   {\scriptsize
   \begin{tabular}{l r r r r r}
        \toprule
        & $d=3$& $d=5$ & $d=10$ & $d=20$ & $d=50$ \\
        \midrule
        PCMCI & $29 \pm 16$ & $243 \pm 37$ & $793 \pm 84$ & $3530 \pm 159$ & $19.6k \pm 857$\\ 
        SCOTCH 100, 2k& $188 \pm 28$ & $417 \pm 86$ & $250 \pm 61$ &  $1525 \pm 1160$ & $10275 \pm 6176$ \\
        SCOTCH 200, 2k& $110 \pm 21$ & $270 \pm 48$ & $530 \pm 223$ &  $\mathbf{370 \pm 174}$ & $\mathbf{538 \pm 70}$\\
        SCOTCH 200, 1k&$400 \pm 17$ & $1370 \pm 29$ & $6425 \pm 80$ &  $705 \pm 77$ & $7863 \pm 2670$\\
        $\indepsh$ & $26 \pm 5$ & $80\pm 8$ & $284 \pm 19$ & $1026 \pm 40$ &  $4946 \pm 113$\\
        $\indepsym$+p.p. & $\mathbf{13 \pm 4}$ & $\mathbf{38 \pm 7}$ & $\mathbf{157 \pm 15}$ & $725 \pm 439$ & $4593 \pm 93$\\
        $\indepsym$ only & $57 \pm 84$ & $147 \pm 11$ & $436 \pm 19$ & $1294 \pm 48$ & $6005 \pm 98$ \\ 
        \bottomrule
    \end{tabular}
    }
    \vspace{-2mm}
\end{wraptable}
\addtolength{\tabcolsep}{1mm}

\xhdr{Causal discovery.}
For causal discovery, we sample $40$ linear SDEs from \cref{eq:linear_sde} (with $a_{ij} \sim \cU([-2,-1] \cup [1,2])$ for $j \neq i$ and $a_{ii}\sim \cU([-0.5,0.5])$) with random DAG adjacency structures for $d \in \lbrace 3,5,10,20,50 \rbrace$ and use $n=200$ sample paths from each SDE as input for the algorithms. 
\Cref{tab:causal-discovery} shows that our \cref{algo:ctPC} (denoted by $\indepsh$) and \cref{algo:PC_ext_init} ($\indepsym$+pp and $\indepsym$ the result before post-processing) clearly outperform PCMCI and SCOTCH up to $d=10$.
SCOTCH's heavy dependence on hyperparameter choices for $\lambda$ (graph sparsity) and $n_e$ (the number of epochs), can sometimes render it superior when charitably picking the best setting.
However, since good values particularly for $\lambda$ cannot be known up front nor selected in a data-driven fashion, one must interpret SCOTCH's performance more conservatively---arguably in terms of its worst case performance over a set of reasonable hyperparameter settings.
Hence, SigKer---free of any hyperparameter choices---broadly outperforms the state-of-the-art, such as SCOTCH even in the setting SCOTCH was specifically tailored to (SDE models). Unlike such methods, our approach continues to work well for other data generating mechanisms as our CI test handles stochastic processes beyond the SDE model.

\xhdr{Summary and discussion of results.}
Our CI test for $\indepsh$ reliably detects the direction of time and outperforms strong baselines (CCM, Granger, PCMCI), consistently improving with larger sample sizes (\cref{tab:overall_bivariate}).
SigKer also dominates PCMCI in causal discovery across dimensions and consistently beats most hyperparameter settings of SCOTCH.
While isolated hyperparameter settings for SCOTCH perform better than SigKer in diffusion dependence and high-dimensional causal discovery, its strong dependence on hyperparameters, particularly the sparsity parameter $\lambda$, renders it unreliable in practice.
Finally, SCOTCH and PCMCI are tailored to Markovian SDEs (or SDEs with a fixed lag).
Instead, our CI test is broadly applicable and effective across data modalities which we further demonstrate empirically in a functional data example where both PCMCI and SCOTCH fail to detect path-dependence, see \cref{tab:fda_causal_discovery}.

\begin{figure}
    \centering
    \includegraphics[width=0.8\textwidth]{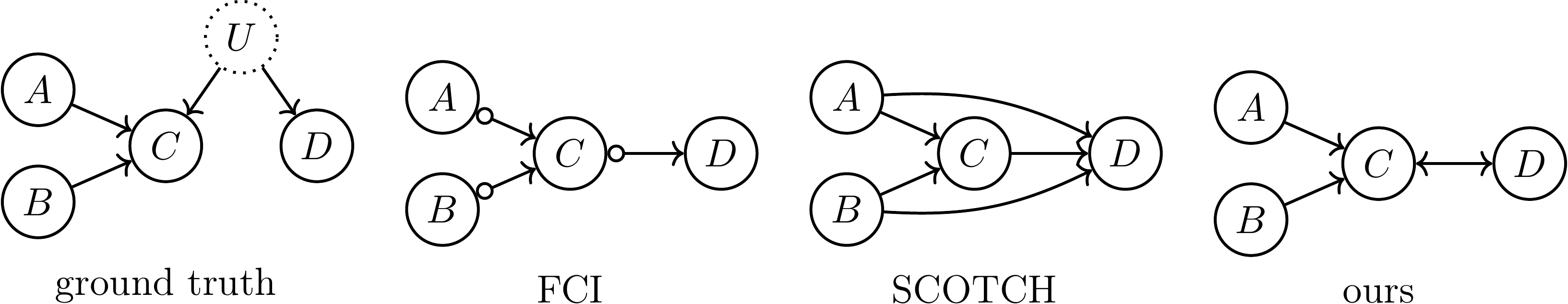}
    \caption{An example graph demonstrating the advantage of our approach under partial observations.}
    \label{fig:partially_observed_example}
    \vspace{-3mm}
\end{figure}

\xhdr{The partially observed setting.}
One of the main benefits of constraint-based causal discovery (especially with a non-parametric test) is the ability to handle partial observations. Score-based methods like SCOTCH are fundamentally challenged in this setting, since there could in principle be infinitely many unobserved variables, impossible to model, e.g., with neural-network based approaches.
We showcase this advantage on the challenging example graph in \cref{fig:partially_observed_example}, where SCOTCH falsely infers an adjacency $A \adj D$ in 88 out of 100 runs, whereas our algorithm correctly handles the unobserved confounder and only (falsely) predicts this adjacency in 8 instances.

\addtolength{\tabcolsep}{-1.5mm}
\begin{wraptable}{r}{0.5\textwidth}
    \vspace{-4mm}
    \centering
    \rowcolors{1}{white}{gray!15}
    \caption{SigKer outperforms baselines in most pairs trading performance metrics. $\uparrow / \downarrow$ indicates `higher/lower is better'.}\label{tab:causal-discovery-pairs}
    \vspace{-2mm}
    {\scriptsize
    \begin{tabular}{lccccc}
        \toprule
        & \textbf{return $\uparrow$} & \textbf{APR $\uparrow$} & \textbf{Sharpe $\uparrow$} & \textbf{maxDD $\downarrow$} & \textbf{maxDDD $\downarrow$} \\
        \midrule
        ADF & 0.004 & 0.004 & 0.090 & 0.087 & 230\\ 
        Granger & -0.010 & -0.011 & -0.230 & 0.056 & 219\\ 
        ADF $\&$ Granger & 0.008 & 0.008 & 0.242 & \textbf{0.022} & 153\\
        \textbf{SigKer} & \textbf{0.076} & \textbf{0.077} & \textbf{1.500} & 0.027 & \textbf{21} \\
        \bottomrule
    \end{tabular}
    }
    \vspace{-4mm}
\end{wraptable}
\addtolength{\tabcolsep}{1.5mm}

\xhdr{Real-world pairs trading example.}
To demonstrate the applicability of our developed methods on real-data, we evaluate pairs trading strategies on ten stocks from the VBR Small-Cap ETF over a three-year period (2010/01/01--2012/12/31).
We only provide a concise summary of our results here and refer to \cref{app:real-world} for more details on this proof-of-concept study.
We assess our method's effectiveness by quantifying the profit-and-loss (P\&L) profile of the generated pairs trading strategy as a substitution for ground truth.
Pairs were selected based on pairwise p-values from different hypothesis tests: 1) cointegration via the Augmented Dickey-Fuller (ADF) test, 2) Granger causality, and 3) our method.
\Cref{tab:causal-discovery-pairs} shows that our strategy's P\&L substantially outperforms the baselines in total return, APR, Sharpe ratio, and maxDDD while being almost on par with ADF \& Granger in the fifth relevant metric maxDD.
This case-study highlights the broad applicability and potential downstream impact of the findings in this work.

\section{Conclusion and Future Work}

We introduce constraint based causal discovery algorithms for both fully and partially observed data in the form of stochastic processes, proving them to be sound and complete assuming a CI oracle.
These are based on novel Markov properties corresponding to the developed CI constraints with respect to the induced dependence graph.
Our algorithms critically leverage the directionality of time to uniquely identifying the full underlying graph or induced ancestral graph, critically improving over existing constraint-based algorithms in the static case that only output equivalence classes.
Our framework also efficiently captures path- or diffusion-dependence.
Finally, we propose a practical and consistent kernel-based CI test on path-space leveraging recent advances in signature kernels that empirically outperforms existing alternatives across a wide range of settings also beyond the SDE model such as functional data and fractional Brownian motions.

Due to the identified fundamental limitations in the cyclic setting, we limit ourselves to acyclic (except for self-loops) dependence graphs and assume causal relationships to not change over time.
Relaxing both of these assumptions and assessing how much of the causal structure can in principle be learned in the cyclic settings are immediate interesting directions for future work.
We highlight that the field of causality represents just one of the many potential applications of our conditional independence test for path-valued random variables.
Exploring applications in different domains is a worthwhile direction for future work that could also catalyze new methodological developments.

\section*{Reproducibility Statement}
Significant effort was made to ensure reproducibility, both for the theoretical and experimental results.
The complete proofs of \cref{prop:Markov},  \cref{thm:causalDisc},  \cref{cor:postprocessing} can be found in \cref{app:proofmarkov}  \cref{app:proofCausalDisc},  \cref{app:proofPostProcessing}, respectively. Additionally, we discuss the limitations and requirements in \cref{sec:intro}, which outlines the assumptions made throughout the paper.
Details regarding implementation and metrics can be found in \cref{sec:experiments}, along with further information on the data-generating mechanism and results evaluation throughout \cref{sec:experiments}. The implementation of baselines is described in \cref{app:baselines}, and details on the real-world pairs trading example can be found in \cref{app:RealWorldPostProcessing}.
We will also make all code used to produce the results in this paper openly available.

\section*{Acknowledgments and Disclosure of Funding}
    This work is supported by the DAAD programme Konrad Zuse Schools of Excellence in Artificial Intelligence, sponsored by the Federal Ministry of Education and Research. This work was supported by the Helmholtz Association’s Initiative and Networking Fund on the
HAICORE@FZJ partition. Emilio Ferrucci is supported by UKRI EPSRC Programme Grant EP/S026347/1. Søren Wengel Mogensen is supported by Independent
Research Fund Denmark (DFF-International Postdoctoral Grant 0164-00023B) and a member of the ELLIIT Strategic Research Area at Lund
University.

\addtocontents{toc}{\protect\setcounter{tocdepth}{2}}  %
\bibliography{bibliography}
\bibliographystyle{iclr2025_conference}

\newpage
\appendix

\section*{Appendix}

\renewcommand{\contentsname}{Table of Contents}
\tableofcontents

\newpage
\section{Proofs and Theoretical Digressions}\label{app:proofs}

\subsection{Summary of Notations}\label{app:math-notation}
A summary of notation is provided in \cref{tab:math-not}.

\begin{table}[H]

\centering
\caption{Summary of notations}\label{tab:math-not}
\rowcolors{1}{white}{gray!15}
\begin{tabular}{c l }
\toprule
\textbf{Symbol} & \textbf{Meaning}\\
\midrule
$[m]$ & short for $\lbrace 1, \ldots, m \rbrace$, $m \in \bN$ \\
$X$ & stochastic process $X = (X^{1}, \ldots , X^{d}) \in C([0,T], \bR^{n_1 + \ldots + n_d})$ \\
$d$ & number of coordinate processes $X = (X^{1}, \ldots , X^{d})$ or nodes in graph\\ 
$X^k$ & $k$-th coordinate process of $X = (X^{1}, \ldots , X^{d})$, $X^{k}_{t} \in \bR^{n_k}$ \\
$n_k$ & dimension of the $k$-th coordinate process $X^{k}$ \\
$W^k$ & $k$-th Brownian motion, $W^{k}_{t} \in \bR^{m_k}$ \\
$m_k$ & dimension of the $k$-th Brownian motion $W^{k}$ \\
$n$ & total dimension of the stochastic process $X$, $n = n_1 + \ldots + n_d$ \\
$\mu^{k}$ & $k$-th drift component $\mu^{k} : \bR^{n} \rightarrow \bR^{n_k}$\\
$\sigma^{k}$ & $k$-th diffusion component $\sigma^{k} : \bR^{n} \rightarrow \bR^{n_k \times m_k}$\\
$T \in \mathbb{R}_{>0}$ & maximum observation time \\
$t,s \in \lbrack 0,T \rbrack$ & time points within the observation time \\
$\mathcal{G} = (V, E)$ & dependence graph with nodes $V$ and edges $E$\\ 
$V \cong [d]$  & set of vertices in the dependence graph\\ 
$E \subseteq V \times V$ & set of directed edges between nodes $V$ \\
$\pa^{\cG}_{v}$ & parents of node $v$ in graph $\cG$ (omitted if clear from context) \\
$\tilde{\mathcal{G}} = (\tilde{V}, \tilde{E})$ & lifted dependence graph of graph $\mathcal{G} = (V, E)$\\ 
$V_0, V_1$ & disjoint copies of $V$, with $0/1$ indicating past/future \\
$\tilde{V}:=V_0 \sqcup V_1$ & node set in lifted dependence graph\\
$\tilde{E} \subseteq \tilde{V} \times \tilde{V}$ & edge set in lifted dependence graph\\
$S(X)_{s,t} \in \mathcal{T}(\mathbb{R}^d)$ & signature transform of path $X$ over interval $[s,t]$ \\
$K_S(X,Y)$ &  signature kernel of paths $X$, $Y$\\
$T(\bR^{d})$ & the direct sum $\bigoplus_{n=1}^{\infty} (\bR^{d})^{\otimes n}:= \lbrace a = (a_n)_{n \in \bN} = \prod_{n \in \bN}  (\bR^{d})^{\otimes n}; \: \max \lbrace n \: : \: a_n \neq 0 \rbrace < \infty \rbrace $\\
$\cT (\bR^{d})$ & the completion of $T(\bR^{d})$ w.r.t. $\langle a, b \rangle = \sum_{n=1}^{\infty} a_n b_n $ (well-defined)\\
$\cO(\cdot)$ & Landau asymptotic notation (``Big-O notation'')\\
\bottomrule
\end{tabular}

\end{table}

\subsection{Intuition and Details for the SDE Model}\label{app:sdes}

Making the actual dependence of $\mu^k$ and $\sigma^k$ on their arguments in \cref{eq:SDEmodel} explicit, we can rewrite it as
\begin{equation}\label{eq:SDEmodel-parents}
     \begin{cases}
         \dif X^k_t = \mu^k(X_{[0,t]}^{\pa_k})\dif t + \sigma^k(X_{[0,t]}^{\pa_k})\dif W^k_t, \\
         X^k_0 = x^k_0 \qquad k \in [d].
     \end{cases}
\end{equation}

Then $\mu^k$, $\sigma^k$ are functions defined on $C([0,+\infty),\R^{\mathrm{dim}(\pa_k)})$ (or some suitable subspace thereof): Lipschitz conditions on the coefficients that guarantee existence and uniqueness for this type of SDE can be found in \cite{RW00}, which we assume to hold throughout.
Each $W^k$ is an $m_k$-dimensional Brownian motion (a collection of $m_k$ independent $1$-dimensional Brownian motions, that is), $\sigma^k$ maps to the space of $n_k \times m_k$-dimensional matrices, and the noises $W^k$ together with the (possibly random) initial conditions $x^k_0$ are jointly independent.
In other words, the system can be written as an $n \coloneqq n_1 + \ldots + n_d$-dimensional SDE driven by an $m \coloneqq m_1 + \ldots + m_d$-dimensional Brownian motion, with a block-diagonal diffusion coefficient $\sigma$ (since the noise is unobserved, this structure has to be imposed if we wish not to deal with unobserved confounding). This means the SDE can be written in the matrix-form
\begin{align}\label{eq:matrix_form_SDE_model}
    \dif \begin{pmatrix} X^1_t \\ \ldots \\ X^{d}_t \end{pmatrix} = \begin{pmatrix} \mu^1 (X_{[0,t]}^{\pa_1}) \\ \ldots \\ \mu^d (X_{[0,t]}^{\pa_d}) \end{pmatrix} \dif t + \begin{pmatrix} \sigma^1 (X_{[0,t]}^{\pa_1}) & \ldots & 0 \\
     \vdots & \ddots  & \vdots \\
     0 & \ldots & \sigma^d (X_{[0,t]}^{\pa_d}) \\
    \end{pmatrix} \dif \begin{pmatrix} W^1_t \\ \ldots \\ W^{d}_t \end{pmatrix}
\end{align}

The superscript $\pa_k$ refers to the parents of the $k^\text{th}$ node in $\cG$, which may (and most often does) contain $k$ itself.
Intuitively, all of this means that, for all times $t$ and small increments $\Delta t$, the increment of the solution $X^k_{t+\Delta t} - X^k_t$ is random with distribution that is well-approximated by a multivariate normal with mean $\mu^k(X_{[0,t]}^{\pa_k}) \Delta t$ and covariance function $\sigma^k(X_{[0,t]}^{\pa_k})\sigma^k(X_{[0,t]}^{\pa_k})^\intercal \Delta t$, and independent of the history of the system up to time $t$.
This interpretation can actually be made precise by showing that these piecewise constant paths converge in law to the true solution (usually known as the weak solution, when viewed in this way).

We will refer to $\cG$ as the \emph{dependence graph} of the \cref{eq:SDEmodel}, which we refer to as the \emph{SDE model} for brevity. Compared to that of \cite{peters2022causal}, this model is slightly more general in that (i) it allows for path-dependence and (ii) for each node to represent a multidimensional process; the special case of state-dependent SDE---i.e., in which $\mu$ and $\sigma$ only depend on $X_t$, the value of $X$ at time $t$---continues to be an important special case, although our model also accommodates delayed SDEs (the coefficients depend on the value of $X$ at a prior instant in time, e.g., $X_{t - \tau}$ for fixed or possibly random/time-dependent $\tau > 0$), and SDEs that depend on quantities involving the whole past of $X$, such as the average $t^{-1}\int_0^t X_s \dif s$. We note that the choice not to make the coefficients explicitly time-dependent is deliberate and motivated by the requirement that the system be \emph{causally stationary} (the causal relations between the variables do not change over time).

Given that we are considering the dynamic setting, it is generally natural to allow for $\cG$ to have cycles. This comes at no additional requirement of consistency constraints as it does in the static case (see for example \cite{BFPM21}), since the causal arrows in the model \cref{eq:SDEmodel} should be thought of as `pointing towards the infinitesimal future, with infinitesimal magnitude', integrated over the whole time interval considered. Indeed, the system of SDEs does not require any global consistency to be well-posed, other than the regularity and growth conditions that guarantee global-in-time existence and uniqueness. On the other hand, we will be interested in the potential for constraint-based causal discovery of such systems, qualified by the following assumption on the types of conditional independencies that we allow to be tested in continuous time:

\subsection{Counterexample for Cyclic Causal Discovery in the SDE Model}\label{app:example_fail}

The following example demonstrates that constraint-based causal discovery of the full graph is impossible using expressions of the form \cref{eq:oracle} when allowing for cycles.
\begin{example}\label{ex:fail}
An oracle for \cref{eq:oracle} is not powerful enough to rule out a directed edge $X^1 \to X^3$ for an SDE model with the following dependence graph: 
\begin{center}
\begin{tikzpicture}[>=Stealth, node distance=2cm, on grid, auto]
    \node[state] (X) {$X^1$};
    \node[state, right=of X] (Z) {$X^2$};
    \node[state, right=of Z] (Y) {$X^3$};
    \path[->] 
    (X) edge node {} (Z)
    (Z) edge[bend right] node {} (Y)
    (Y) edge[bend right] node {} (Z)
    (Z) edge[loop above] node {} (Z)
    (X) edge[loop above] node {} (X)
    (Y) edge[loop above] node {} (Y);
\end{tikzpicture}
\end{center}
\end{example}
We now go through this example and describe a different type of oracle that would be required for causal discovery in cyclic SDE models.
Concretely, we claim that an oracle for \cref{eq:oracle} is not powerful enough to rule out a directed edge $X^1 \to X^3$ for an SDE model with the  graph in \cref{ex:fail}.
Testing $X^1_{[0,s]} \indep X^3_{[s,s+h]} \mid X^{2,3}_{[0,s]}$, will not remove the edge, due to the open path
\begin{equation*}
X^1_{[0,s]} \to X^1_{[s,s+h]} \to X^2_{[s,s+h]} \to X^3_{[s,s+h]}.
\end{equation*}
Testing $X^1_{[0,s]} \indep X^3_{[s,s+h]} \mid X^3_{[0,s]}, X^2_{[0,s+h]}$, on the other hand, runs into the collider
\begin{equation*}
X^1_{[0,s]} \to X^1_{[s,s+h]} \to X^2_{[s,s+h]} \leftarrow X^3_{[s,s+h]}.
\end{equation*}
There is no other way of splitting up the time interval (even allowing for more than 2 subintervals) that overcomes both these problems: when testing $X^1_{[0,s]} \indep X^3_{[s,s+h]}$ (as is necessary in order to rely on time to obtain direction of the arrow), if $X^2$ is not conditioned on over $[s,s+h]$ it will be a mediator, and if it is, it will be a collider.

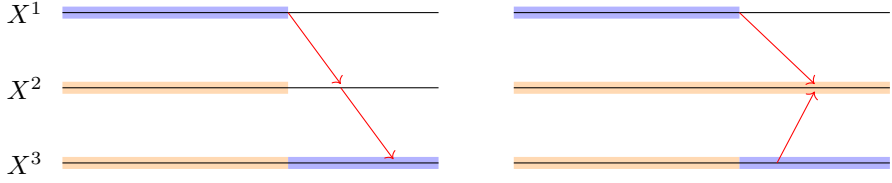
\begin{figure}
  \centering
  \begin{tikzpicture}
    \node at (-0.5,0) {$X^3$};
    \node at (-0.5,1) {$X^2$};
    \node at (-0.5,2) {$X^1$};
    \draw[line width=1.6mm, blue!30] (0,2) -- (3,2);
    \draw[line width=1.6mm, orange!30] (0,1) -- (3,1);
    \draw[line width=1.6mm, orange!30] (0,0) -- (3,0);
    \draw[line width=1.6mm, blue!30] (3,0) -- (5,0);
    \draw (0,0) -- (5,0);
    \draw (0,1) -- (5,1);
    \draw (0,2) -- (5,2);
    \draw[->, red] (3,2) -- (3.7,1.05);
    \draw[->, red] (3.7,1) -- (4.4,0.05);
    \begin{scope}[shift={(6,0)}]
      \draw[line width=1.6mm, blue!30] (0,2) -- (3,2);
      \draw[line width=1.6mm, orange!30] (0,1) -- (5,1);
      \draw[line width=1.6mm, orange!30] (0,0) -- (3,0);
      \draw[line width=1.6mm, blue!30] (3,0) -- (5,0);
      \draw (0,0) -- (5,0);
      \draw (0,1) -- (5,1);
      \draw (0,2) -- (5,2);
      \draw[->, red] (3,2) -- (4,1.05);
      \draw[->, red] (3.5,0) -- (4,0.95);
    \end{scope}
\end{tikzpicture}
\caption{The blue highlighter is for intervals over which the path is being tested for independence, while orange is for conditioning. The first figure illustrates the path opened by the unconditioned-on mediator $X^2_{[s,s+h]}$, while the second illustrates the path opened by the same variable acting as a conditioned-on collider. While not quite possible to draw extra/figures this way, it is helpful to think of the arrows as pointing from values to increments $X_t \to (Y_{t+ \dif t} - Y_t)$, infinitesimally, with the total causal effect accrued over time.}
\end{figure}

What would be needed to detect the edge in the above example (and to perform causal discovery more generally in the cyclic case), is the availability of tests of \emph{strong instantaneous non-causality in the Granger sense} as defined in \cite{florens1996noncausality}.
This reflects the fact that the SDE model is `acyclic at infinitesimal scales'.
Unfortunately, such a property---which has to do with the Doob-Meyer decompositions of functions of the process w.r.t.\ two different filtrations---is much more difficult to test for, as it is infinitesimal in nature.
Recent progress in this direction was made in \citet{CPH23} for the case of SDEs driven by jump processes; this local independence criterion, which goes back to \citet{schweder1970composable}, is however \emph{weak} in the sense that it only takes into account the Doob-Meyer decomposition of $X$ and not functions of it, and is not therefore able, for example, to detect dependence in the diffusion coefficient \citep{gegout2010general}.
Here we take the view that a continuously-indexed path, queried over an interval, can be considered as an observational distribution (cf.\ the discussion in \citet{hansen2014causal} on whether the infinitesimal generator can be considered `observational').

\subsection{Signature Kernel Details}\label{app:sigker}

\paragraph{Additional mathematical details.}
For any $p \in [1,2)$, let $\mathcal{K} \subset C_p([0,T],\R^d)$ be a compact subset such that the signature transform is a continuous injection from $\mathcal{K}$ to $\mathcal{T}(\R^d)$. An example of compact set $\mathcal{K}$ satisfying such conditions is the set of time-augmented paths started at a common origin, where $C_p([0,T],\R^d)$ is endowed with the $p$-variation topology. Further examples of compact sets and choices of topologies on path-space are discussed by \citet{cass2024topologies}. Then, the signature kernel is \emph{universal} over  $\mathcal{K}$ in the sense that the associated RKHS, whose elements are restricted to act on $\mathcal{K}$, is dense in $C(\mathcal{K})$, the space of continuous real-valued functions on $\mathcal{K}$, endowed in the topology of uniform convergence.
See \citet[Proposition 3.3]{cass2023weighted} for a proof. The compactness assumption can be relaxed by either renormalizing the signature transform so that its range is always guaranteed to be bounded \citep{ObCh22} or by operating on weighted path-spaces with a different choice of topology \cite{cuchiero2023global}. 

The signature kernel is well-defined and can be shown, under similar compactness conditions, to be universal also when $p \geq 2$ \citep{salvi2021signature,lemercier2024high}. In the specific setting of stochastic differential equations (SDEs) considered in this paper, i.e. when $p \in [2,3)$, one can use a classical limiting procedure from rough analysis to show that the integral equation in \cref{eqn:pde} takes the Stratonovich form $f(t,t') = 1 + \int_s^t \int_{s'}^{t'} f(u,v)\langle \circ dX_u,\circ dY_v \rangle_1$. We refer the interested reader to \citet[Section 4.3]{salvi2021signature} for further details.

Another important property that a kernel should possess in order to ensure theoretical guarantees of most kernel methods, including tests for conditional independence \citep{muandet2017kernel},  is that of being \emph{characteristic}. Loosely speaking, a kernel $k$ is said characteristic to a topological space if for any (Borel) probability measure $\mu$ on that space, the $\mu$-expectation of the feature map $k(x,\cdot)$ uniquely characterises $\mu$. An important result from the theory of kernels relates to the equivalence between the notions of universality and characteristicness. In particular, these three notions can be shown to be equivalent for kernels defined over general locally convex topological vector spaces \cite[Theorem 6]{simon2018kernel}.

\paragraph{Intuition.}
The signature transform of a (potentially multidimensional) continuous path (or time series) is a sequence of iterated integrals, static features, which form an expressive representation of the path. An analogy is often drawn to monomials in (multiple) indeterminants, with which one can express continuous functions. Another useful analogy is the Fourier transform. While Fourier transforms (or wavelet transforms) capture the frequency content of different modes (over different times), the signature captures information about order, area, and so forth (due to the \emph{iterated} integrals). For example, the depth 1 (or order 1) iterated integral describes the changes of each variable throughout the time interval; depth 2 describes the signed area that is swept by the curve (an illustration of depth 1 and 2 of a path can be found in \citet{morrill2020generalised}); higher depths capture properties of the volume, and so on, but those can be hard to grasp visually. We refer the reader to \citet{chevyrev2016primersignaturemethodmachine, lee2023signature} as introductory texts on the signature (kernel) method that contain more visualizations and examples for building intuitions about these mathematical objects.

\subsection{Testing on Time Points}\label{app:timepoints}
It is also possible to replace any of the path-valued random variables in \cref{eq:oracle} by a random variable at a single time-point (as for example done in the post-processing by testing on $X_0$) using a regular rbf-kernel. 
Note, that in addition, if we knew in advance that there is no path-dependence, the unconditional independence $X^i \indepsh X^j$ is equivalent to the conditional independence $X^i_s \indep X^j_{s+h} - X^j_{s} \mid X^j_s$ (at fixed times $s, s+h$): this is because without path-dependence $X^j_u$ for $u \in [s,t]$ only depends on $X_{[s,t]}^j$ via $X^j_s$. 
Testing on entire (segments of) paths is always required when allowing for path-dependence.

\subsection{Proof of \texorpdfstring{\cref{prop:Markov}}{Proposition Markov}}\label{app:proofmarkov}

\begin{proof}[Proof of \cref{prop:Markov}]
    Since $\cG$ is acyclic, a directed cycle in $\tilde \cG$ that is not a loop must contain nodes both in $V_0,V_1$. But there can be no such directed cycles, since edges only travel in the direction $V_0 \to V_1$, by construction of $\tilde \cG$. Since $\tilde \cG$ also free of loops, it is a DAG. We now show that there exists a structural causal model (SCM) over $\tilde \cG$ with the path-valued random variables of the statement: Markovianity will then follow from \citet[Proposition 6.31]{peters2017elements}, original to \cite{VP90}. In other words, we must show there exist Borel-measurable functions %
    
    \begin{align*}
    F^k_0 \colon \begin{cases}\R^{\mathrm{dim}(\pa^{\cG}_k)}\times C([0,s],\R^{m_k}) \times C([0,s],\R^{\mathrm{dim}(\pa^{\cG}_k \setminus \{k\})}) &\rightharpoonup  C([0,s],\R^{n_k}) \\ (x_0^{\pa^{\cG}_{k}}, W_{[0,s]}^k; X_{[0,s]}^{\pa^{\cG}_k \setminus \{k\}}) &\mapsto X^k_{[0,s]}
    \end{cases}
    \end{align*}

and

    \begin{align*}
    F^k_1 \colon \begin{cases}C([s,s+h],\R^{m_k}) \times C([0,s],\R^{\mathrm{dim}(\pa^{\cG}_k)}) \times C([s,s+h],\R^{\mathrm{dim}(\pa^{\cG}_k \setminus \{k\})}) &\rightharpoonup C([s,s+h],\R^{n_k}) \\ ((W_{s+t}^{k} - W_{s}^{k})_{0\leq t\leq h}; X_{[0,s]}^{\pa^{\cG}_k}, X_{[s,s+h]}^{\pa^{\cG}_k \setminus \{k\}}) &\mapsto \tilde{X}^k_{[s,s+h]}
    \end{cases}
    \end{align*}

for each $k \in [d]$ where the initial conditions $x_0$ and the Brownian path segments $W^k_{[0,s]}$, $(W_{s+t}^{k} - W_{s}^{k})_{0\leq t\leq h}$ are jointly independent and $\tilde{X}^k_{[s,s+h]}:= X^k_{[s,s+h]} - X^k_s$.
This independence follows from the definition of the SDE model and from independence of Brownian increments over disjoint or consecutive intervals.
These functions are partial in that they are not defined on the whole space of continuous functions on the interval: we only need to show them to be defined on a measurable set of paths that contains all solutions of SDEs on the required interval.
$F^k_1$ can be defined as follows, with care to make explicit the dependence on the solution on the two intervals via the operation of path-concatenation $*$:
\begin{align}\label{eq:solution_eq}
X^k_{[s,s+h]} = \text{solution of}\: X^k_u = X^k_s +& \int_{s}^{u} \mu^k(X_{[0,s]}^{\pa^{\cG}_k \setminus \{k\}} * X_{[s,u]}^{\pa^{^\cG}_k \setminus \{k\}}, X_{[0,s]}^{\lbrace k \rbrace \cap \pa^{\cG}_{k}} * X_{[s,u]}^{\lbrace k \rbrace \cap \pa^{\cG}_{k}} )\dif t \\
+ &\int_{s}^{u}\sigma^k(X_{[0,s]}^{\pa^{\cG}_k \setminus \{k\}} * X_{[s,u]}^{\pa^{^\cG}_k \setminus \{k\}}, X_{[0,s]}^{\lbrace k \rbrace \cap \pa^{\cG}_{k}} * X_{[s,u]}^{\lbrace k \rbrace \cap \pa^{\cG}_{k}} )\dif W^k_t
\end{align}
for $u \in [s,s+h]$, we obtain $F_1^{k}$ by removing the initial condition $X^k_s$ in front of the integrals in \eqref{eq:solution_eq}.

We have also separated out the $k$-th component of the solution from the rest of the arguments on which $\sigma^k,\mu^k$ are dependent, which we consider part of the measurable adapted dependence on $W^{\pa \setminus \{k\}}$ needed in the existence and uniqueness theorem.
$F^k_0$ is defined similarly (with no dependence on past path-segments and with $x_0^k$ replacing $X_s^k$ in \eqref{eq:solution_eq}).
Such functions are well-known to be well-defined and measurable \citep[Theorem 10.4]{RW00}.
\end{proof}
\begin{remark}
    In testing, using the symmetric criterion $\indepsym$ has proven to be more reliable than the asymmetric one used in the test above. A procedure, which is a little redundant, but which makes good use of the superior performance of $\indepsym$ would work as follows:
    \begin{enumerate}
        \item Run the PC algorithm to discover the CPDAG corresponding to the variables $X_{[0,T]}^k$.
        \item Pass this CPDAG as input to \cref{algo:ctPC}: by this we mean that $\tilde V$ is the same and the input $\tilde E$ is given by
        \begin{equation*}
\{ i_0 \to j_0, \ i_1 \to j_1, i_0 \to j_1 \mid i,j \in V, \ (i \to j) \text{ or } (i \relbar\mkern-9mu\relbar j) \in E' \} \cup \{k_0 \to k_1 \mid k \in V\}
        \end{equation*}
        where $(i \to j)$ denotes an undirected arrow and $E'$ is the CPDAG output by the classical PC algorithm.
        \item The loop-recovery step is identical as before.
    \end{enumerate}
    The algorithm would run very similarly as before, but performing fewer CI tests in the first phase, since it can make use of the information contained in the partially-directed skeleton.
\end{remark}

\begin{remark}
If we knew in advance that there is no path-dependence, the test $X^i \indep^+_{s,h} X^j$ (with no additional conditioning set) is equivalent to the test on values of the process $X^i_s \indep X^j_{s+h} \mid X^j_s$: this is because $X^j_u$, for $u \in [s,t]$ only depends on $X_{[s,t]}^j$ via $X^j_s$.
However, this kind of statement no longer works in the conditional case (by arguments similar to \cref{ex:fail}), and testing on whole paths is strictly necessary when allowing for path-dependence.
\end{remark}
\begin{remark}
    CI testing of SDEs is conceptually similar to static CI on discretely sampled data, when the sampling rate is lower than the actual frequency at which causal effects propagate.
    This is because, in continuous time and for any $h > 0$, there are going to be causal effects that occur at time scales less than $h$.
    Whenever we just discretely sample time series with no instantaneous effects, \citet[Thm.\ 10.3]{peters2017elements} provides full causal discovery, even in the presence of cycles in the summary graph, by performing $d^2$ tests $X^i_s \indep X^j_{s + \Delta s} \mid X^{[d] \setminus \{i,j\}}_{[0,s]}$.
    Potentially absent loops could then be removed as in \cref{algo:ctPC}.
    Ignoring that this might suffer from the conditioning set being too large, our signature method can be of help in this setting too, if there is path-dependence: the conditioning variable can be taken to be $S(X^{[d] \setminus \{i,j\}})_{0,s}$.
\end{remark}

\subsection{Proof of \texorpdfstring{\cref{thm:causalDisc}}{Theorem causalDisc}}\label{app:proofCausalDisc}
\begin{proof}[Proof of \cref{thm:causalDisc}]
    We have to show that $E^{Alg} = E$ (where $E^{Alg}$ the edges detected by \cref{algo:ctPC}). \\
    "$\subseteq$:" This direction can be shown by finding an appropriate separation set in $\tilde{\cG}$ by exploiting the global Markov property \cref{prop:Markov}. We begin by proving correctness of the recovery of the skeleton modulo loops. This will follow if we show that for $(i \to j) \not \in E$ for $i,j \in V$ distinct
    \begin{equation*}
    i_0 \indep^{\tilde{\cG}}_{d-sep} j_1 \mid \{j_0\} \cup \{k_0,k_1 \mid k \in \pa^\cG_j \setminus \{j\}\},
    \end{equation*}
    where $\indep^{\tilde{\cG}}_{d-sep}$ denotes $d$-separation in the graph $\tilde \cG$. Indeed, eventually, the set $K$ in the algorithm will take the value $\{k_0,k_1 \mid k \in \pa^{\cG}_j \setminus \{j\}\}$, at which point the three edges $(i_0, j_0)$, $(i_1, j_1)$, $(i_0, j_1) $ (which are either all present or all absent, inductively, by construction of $\tilde E$) are deleted. (Of course it may be that these edges are deleted before this, if a smaller or different $d$-separating set is found, but note that edges are never added.) All undirected paths between $i_0$ and $j_1$ factor as $i_0 \cdots h_0 \to l_1 \cdots j_1$, where the dots stand for a possibly empty undirected path. All such paths with $l_1 = j$ are blocked by $\{j_0\} \cup \{k_0 \mid k \in \pa^\cG_j \setminus \{j\}\}$, so we focus our attention on the case in which $l_1 \cdots j_1$ is non-empty. Here we follow a similar argument to that of \citet[Lemma 1]{VP90}. If $l_1 \cdots j_1$ is of the form $l_1^1 \cdots l_1^r \to j_1$ (with $l^1_1 = l_1$) it is blocked by $\{k_1 \mid k \in \pa^\cG_j \setminus \{j\}\}$. Assume instead it is of the form $l_1^1 \cdots l_1^r \leftarrow j_1$ and let $1 \leq q \leq r$ be such that $l^q_1$ is the first collider on the entire path $i_0 \cdots  j_1$, starting from $j_1$ and travelling back: certainly this exists (and belongs to $V_1$), thanks to the arrows $h_0 \to l_1^1$ and $l^r_1 \leftarrow j_1$. For the path $i_0 \cdots j_1$ to be open given the conditioning set, it must be the case that $l^p_1$ must be an ancestor (in $\tilde \cG$) or member of $\{k_1 \mid k \in \pa_j^\cG \setminus \{j\}\}$ ($l^p_1$ cannot be an ancestor of nodes in $V_0$): this would yield a directed cycle, which is impossible. 

    We argue similarly for the loop-removal phase. If $(i \in i) \not \in E \iff (i_0, i_1) \not \in \tilde E$, we must show
    \begin{equation*}
    i_0 \indep^{\tilde \cG}_{d-sep} i_1 \mid \{ k_0, k_1\mid k \in \pa^{\cG}_i \setminus \{i\}\}
    \end{equation*}
    Consider a path $i_0 \cdots h_0 \to l_1 \cdots i_1$ in $\tilde \cG$. If the segment $l_1 \cdots i_1$ is non-empty we conclude that the path is blocked by the same argument as above (with $i_1$ replacing $j_1$). Assume that the path is of the form $i_0 \cdots h_0 \to i_1$ (with $i_0 \neq h_0$ since $(i_0, i_1) \not \in \tilde E$): in this case $h \in \pa^{\cG}_{i} \setminus \{i\}$ and the path is again blocked. Thus all paths are blocked and the implication follows. \\
    $\supseteq:$ Assume $(i,j) \in E$, thus $(i_0, j_1) \in \tilde{E}$. Hence as $\tilde{\cG}$ a DAG by construction, $\nexists$ $S \subseteq V \setminus \lbrace i,j \rbrace$ s.t. $\lbrace i_0 \rbrace \indep_{d}^{\tilde{\cG}} \lbrace j_1 \rbrace \given S_0 \cup \lbrace j_0 \rbrace \cup S_1$. By the faithfulness assumption \cref{def:global_faithfulness} one therefore has $X_{[0,s]}^{i} \notindep X_{[s,s+h]}^{j} \given X_{[0,s]}^{S}, X_{[0,s]}^{j}, X_{[s,s+h]}^{S}$ $\forall$ $S \subseteq V \setminus \lbrace i,j \rbrace$. Hence $(i,j) \in E^{Alg}$. 
\end{proof}

\subsection{Faithfulness and Causal Minimality}\label{app:faith}
In the above proof, we have used the following notion of \defined{faithfulness}:
\begin{definition}[(Global) Faithfulness of $\indepsh$ in $\tilde{\cG}$]\label{def:global_faithfulness}
 Let $\lbrace X^{i}_t \rbrace_{i \in [d]}$ be the coordinate processes of a solution of \cref{eq:SDEmodel} for $t \in [0,1]$, $\cG = (V \cong [d], E)$ the dependence graph defined by \cref{eq:SDEmodel}. The independence relation $\indepsh$ is called \defined{(globally) faithful} w.r.t. the DAG $\tilde{\cG}$, if
    \begin{align}\label{sigma_global_faithfulness}
       X^{A} \indepsh X^{B} \given X^{C} \: \Rightarrow \: A_0 \indep_{d}^{\tilde{\cG}} B_1 \given B_0, C_0, C_1
    \end{align}
    for all $A,B,C \subseteq V$ pairwise disjoint.
\end{definition}
Faithfulness is a standard assumption in the causal discovery literature. Some type of faithfulness-like assumption is strictly necessary to do causal discovery since, otherwise, it would be impossible to make any inference about adjacencies in the underlying graph from the observed distribution. While we used the strong faithfulness assumption \cref{def:global_faithfulness} for the proof above, various weaker faithfulness-like assumptions in the case of DAG-based causal discovery could be used, e.g., the strictly weaker assumption known as `parent faithfulness' (\citet{mogensen_faithfulness_24a} gives a definition of parent faithfulness in the context of stochastic processes). This assumption suffices in our algorithm without making any changes to it (or the respective proofs) in the acyclic case.

The strength of the faithfulness assumption in real-world data is heavily debated.
Besides \citet{Weinberger2018}, who states that `proposed counterexamples to the causal faithfulness condition are not genuine',  \citet{spirtes2000causation} argues that for parametric models, `the measure of the set of free parameter values for any DAG that lead to such cancellations is zero for any smooth prior probability density, such as the Gaussian or exponential one, over the free parameters'. What is meant in our case is, that when allowing the diffusion coefficients $\sigma$ in \cref{eq:SDEmodel} to range in some `reasonable' finite-dimensional family (such as linear or affine functions) and drawing the parameters of such a family from a distribution which has positive density w.r.t.\ the Lebesgue measure (and independently of the noise), the resulting distribution on path-space, split over $[0,s]$ and $[s,t]$, will almost surely be faithful for any $0 < s < t$ w.r.t.\ the lifted dependence graph. We do not attempt a proof of this statement here, as it may require considerable effort and technique. 
On the other side, \citet{Andersen2013-ANDWTE} or \citet{Uhler_2013} (among others) caution against making the strong faithfulness assumption easily. Many of their concerns are ameliorated by only assuming the weaker parent faithfulness.

Causal minimality, on the other hand, is generally easier to understand and can be expected to hold for the arrow $X^i_{[0,s]} \to X^j_{[s,t]}$ whenever the following condition is satisfied: the conditional distribution $X^j_u \given X^{\pa^{\cG}_j\setminus \{i\}} = x^j$ admits a positive density on some submanifold $M_u^x$ of $\R^n$ and for all $u$ there exist $x^i_1, x^i_2 \in M_u^x$, $x^i_1 \neq x^i_2$, s.t.\ $(\mu^j,\sigma^j)(x,x^i_1) \neq (\mu^j,\sigma^j)(x,x^i_2)$.
\begin{example}[Causal minimality]
    The fact that we are allowing for variables to have more than one dimension means the causal minimality condition is not a trivial one, as the following examples demonstrate. Let $X^1 = (W,0)$ where $W$ is a $1$-dimensional Brownian motion (so that $n_1 = 2$), and let $\mu^1$ and/or $\sigma^1$ depend on $X^1$ only through its second coordinate. Then, even though this dependence may be non-trivial, causal minimality does not hold. This may lead one to believe that one can generically only expect causal minimality to hold if $m = n$, but this is not the case. Take, for example, $d = 4 = n$ and assume $\sigma^3 \equiv 0$, i.e.,\ $X^3$ has no driving Brownian motion, so that $m$ is only $3$. Assume, furthermore, that $\pa_4 = \{4\}$. Then, by H\"ormander's Lie bracket-generating condition (see for example \S V.38 in \citet{RW00}), $(X^1,X^2,X^3)$ has a density in $\R^3$ for any choice of an initial condition, if $(\sigma^1(x),0,0), (0,\sigma^2(x),0)$ span $\R^2$ for all $x \in \R^3$ and $[(\sigma^1(x),0,0), \mu] = \sigma^1 \partial_1 \mu - \mu^i \partial_i \sigma^1$ or $[(\sigma^2(x),0,0), \mu]$ span the third direction. Arrows going from the first three nodes to the fourth will be necessary for Markovianity as long as the coefficients of $X^4$ depend on the respective variables on the support of such density. The point is that even though $(X^1,X^2,X^3)$ is only driven by a two-dimensional Brownian motion, the coefficients of the SDE (at the initial condition) impart a `twist' to the solution, making causal minimality a trivial condition on $(\sigma^4,\mu^4)$ which is verified as soon as there is any dependence.
\end{example}

\subsection{Global Markov Property for the Symmetric Independence Criterion}\label{app:symmetric}
In this section, we provide a proof for the global Markov property of the symmetric independence criterion in order for the PC-algorithm to be applicable. It is based on the notion of independence models:
\begin{definition} Let $V \cong [d]$ be a set. An \emph{independence model} $\cJ(V)$ over $V$ is a ternary relation over disjoint subsets of $V$, 
\begin{align*}
    \cJ(V) \subset \lbrace (A,B,C) \mid A,B,C \subset V \: \text{disjoint} \rbrace
\end{align*}
When $(A,B,C)$ is a triple in $\cJ(V)$, $(A,B,C) \in \cJ(V)$, we also use $\langle A,B \mid C \rangle$ to denote $(A,B,C)$. This notation highlights the fact that $C$ is a conditioning set. An independence model $\cJ(V)$ is called a \emph{semigraphoid} if 1.-4. hold for all disjoint $A,B,C\subset V$.
\begin{enumerate}
    \item (symmetry) $\langle A,B \mid C \rangle \in \cJ(V)$ $\Rightarrow$ $\langle B,A \mid C \rangle \in \cJ(V)$
    \item (decomposition) $\langle A,B \mid C \rangle \in \cJ(V) $ and $D \subset B$ $\Rightarrow$ $\langle A,D \mid C \rangle \in \cJ(V)$
    \item (weak union) $\langle A,B \cup D\mid C \rangle \in \cJ(V)$ $\implies$ $\langle A,B \mid C \cup D \rangle \in \cJ(V) $
    \item (contraction) $\langle A,B \mid C \rangle \in\cJ(V)$ and $\langle A,D \mid B \cup C \rangle\in \cJ(V)$ $\implies$ $\langle A,B \cup D \mid C \rangle \in \cJ(V)$
\end{enumerate}
A semigraphoid $\cJ(V)$ is called \emph{graphoid} if 5. holds.
\begin{enumerate}[resume]
    \item (intersection) $\langle A,B \mid C \cup D\rangle \in\cJ(V)$ and  $\langle A,D \mid B\cup C \rangle\in \cJ(V) \implies  \langle A,B\cup D \mid C \rangle\in \cJ(V)$
\end{enumerate}

An independence model $ \cJ(V) $ called \emph{compositional} if 6. holds.
\begin{enumerate}[resume]
        \item (composition) $\langle A,B \mid C \rangle \in \cJ(V)$ and $\langle A,D \mid C \rangle\in \cJ(V) \implies  \langle A,B\cup D \mid C \rangle\in \cJ(V)$
\end{enumerate}
\end{definition}
The graphical criterion of d-separation defines a independence model on a DAG $\mathcal{G} = (V, \mathcal{E})$ by

\begin{align}
    \cJ_{d-sep}(\cG) = \left\lbrace (A,B , C  ) \: \mid \: A,B,C \subset V \: \text{disjoint} , A \indep_{d-sep} B \mid C \right\rbrace 
\end{align}

\begin{theorem}[\citet{lauritzen_unifying}]
For any DAG $\cG = (V, \mathcal{E})$, the independence model $\cJ_{d-sep}(\cG)$ is a compositional graphoid.
\end{theorem}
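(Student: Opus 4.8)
The plan is to work throughout with the operational characterization of $d$-separation in terms of \emph{open} (active) paths: $\langle A,B\mid C\rangle \in \cJ_{d-sep}(\cG)$ iff no path joining a vertex of $A$ to a vertex of $B$ is open given $C$, where a path is open iff every collider on it has a descendant in $C$ and no non-collider on it lies in $C$. With this description three of the six axioms are immediate. \textbf{Symmetry} holds because reversing a path preserves the collider/non-collider status of each internal vertex. \textbf{Decomposition} holds because an open $A$--$D$ path (for $D\subseteq B$) is in particular an open $A$--$B$ path. \textbf{Composition} is its mirror image: an open path witnessing $\neg\langle A,B\cup D\mid C\rangle$ has its non-$A$ endpoint in $B$ or in $D$, so it already witnesses $\neg\langle A,B\mid C\rangle$ or $\neg\langle A,D\mid C\rangle$; contrapositively $\langle A,B\mid C\rangle$ and $\langle A,D\mid C\rangle$ yield $\langle A,B\cup D\mid C\rangle$.

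For \textbf{contraction} I would argue by contradiction from a path $\pi$ open between $a\in A$ and $v\in B\cup D$ given $C$. If $v\in B$, then $\pi$ contradicts the first hypothesis $\langle A,B\mid C\rangle$. If $v\in D$, I split on whether $\pi$ meets $B$: truncating $\pi$ at its first $B$-vertex gives an open $A$--$B$ path given $C$, again contradicting the first hypothesis, whereas if $\pi$ avoids $B$ entirely then $\pi$ stays open when $B$ is adjoined to the conditioning set (its colliders keep their $C$-descendants and its non-colliders still avoid $B\cup C$), contradicting the second hypothesis $\langle A,D\mid B\cup C\rangle$. \textbf{Weak union} needs the only genuine surgery: from a path $\pi$ open between $A$ and $B$ given $C\cup D$, if every collider of $\pi$ already has a descendant in $C$ then $\pi$ is open given $C$ and contradicts $\langle A,B\cup D\mid C\rangle$; otherwise take the collider $m$ nearest $a$ having a descendant in $D$ but none in $C$, follow a directed path from $m$ down to $D$, and concatenate (choosing the descending path minimal so the spliced walk reduces to a simple open path). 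On the result $m$ becomes a non-collider not in $C$, the descending segment consists of non-colliders avoiding $C$, and the initial segment $a\ldots m$ keeps its open colliders by the minimal choice of $m$, so we obtain an open $A$--$(B\cup D)$ path given $C$, the desired contradiction.

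The remaining axiom, \textbf{intersection}, is the main obstacle, because enlarging or shrinking the conditioning set can \emph{open} colliders, so openness is not monotone and the direct surgery would have to juggle both symmetric hypotheses at once. Here I would switch tools and invoke the moralization characterization (separation of $A$ and $B$ by $C$ is equivalent to vertex separation of $A$ and $B$ by $C$ in the moral graph $(\cG_{\mathrm{An}(A\cup B\cup C)})^m$ of the subgraph induced on the smallest ancestral set containing $A\cup B\cup C$), which I would cite as the substantive structural input. The key observation is that the three triples appearing in intersection, namely $(A,B,C\cup D)$, $(A,D,B\cup C)$ and $(A,B\cup D,C)$, all have the same total vertex set $A\cup B\cup C\cup D$, and hence refer to separation in one and the same undirected graph $M$. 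Intersection then reduces to the elementary fact for vertex separation in a \emph{fixed} undirected graph: given a path in $M$ from $A$ to $B\cup D$ avoiding $C$, its first vertex in $B\cup D$ lies either in $B$, whose initial segment is then an $A$--$B$ path avoiding $C\cup D$, or in $D$, whose initial segment is an $A$--$D$ path avoiding $B\cup C$, contradicting one of the two hypotheses.

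The unifying difficulty throughout is precisely the non-monotonicity of path-openness in the conditioning set caused by the collider rule; the two devices above, rerouting an open collider down to a descendant in the target set and fixing a single moral graph shared by all triples with a common vertex set, are what neutralize it. I expect intersection (and to a lesser degree weak union) to be where all the care is needed, while symmetry, decomposition, composition and contraction are essentially bookkeeping on paths. I note that weak union also shares the vertex set $A\cup B\cup C\cup D$ across its two triples, so one could alternatively present symmetry, weak union and intersection uniformly through the fixed moral graph $M$ and dispatch decomposition, composition and contraction by the path arguments above, thereby avoiding any appeal to monotonicity of moral graphs under enlargement of the ancestral set.
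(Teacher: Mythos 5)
The paper does not prove this theorem at all: it is imported as a black box from \citet{lauritzen_unifying}, so the only comparison available is between your argument and the standard literature. Your proof is correct in substance and is a legitimate self-contained alternative to the citation, with a sensible division of labour. Symmetry, decomposition and composition are indeed immediate from the open-path characterization; your contraction argument is sound (truncation at the first $B$-vertex handles paths that meet $B$, and a path avoiding $B$ that is open given $C$ remains open given $B \cup C$, since enlarging the conditioning set can only affect non-colliders, which avoid $B$ by assumption); and reducing intersection to vertex separation in a single moral graph $(\cG_{\an(A \cup B \cup C \cup D)})^m$ is exactly the device that neutralizes the non-monotonicity of d-connection: all three triples in that axiom share the union $A \cup B \cup C \cup D$, so they refer to one fixed undirected graph, where the first-exit argument is elementary. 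The same observation disposes of weak union, as you note.

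Two caveats. First, in weak union the phrase ``choosing the descending path minimal so the spliced walk reduces to a simple open path'' is an assertion, not a proof: a shortest directed path from the collider $m$ to $D$ can still re-enter the segment between $a$ and $m$, so minimality by itself does not give simplicity. The standard repair is to splice not at $m$ but at the last vertex $w$ of the descending path that lies on the segment from $a$ to $m$: every vertex of the descending path is a descendant of $m$ and hence outside $C$, $w$ becomes a non-collider on the new path, the colliders of the initial segment up to $w$ retain their descendants in $C$ by your choice of $m$, and by the choice of $w$ the spliced path is simple. (Alternatively, invoke the known lemma that an open walk between two sets yields an open path between them.) Second, the moralization criterion you use for intersection is itself a nontrivial theorem --- it is the directed global Markov characterization of \citet{lauritzen1990independence}, already cited elsewhere in this paper --- and it carries the real weight of that step; it should be cited explicitly, since intersection is the one axiom for which direct path surgery is substantially harder.
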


Let $V = [d]$. Given random variables $X^i : (\Omega, \mathcal{A} , P ) \rightarrow (\mathcal{X}^{i}, \mathcal{A}_{i}),\ i \in V,$ and $A,B,C \subset V$ disjoint, conditional independence 
\begin{align*}
X^{A} \indep X^{B} \mid X^{C} : \Leftrightarrow \sigma (X^{A})\indep \sigma (X^{B}) \mid \sigma (X^{C})
\end{align*}
defines the \emph{probabilistic independence model}
\begin{align}
    \cJ(P) = \left\lbrace ( A,B ,C  ) \: \mid \: A,B,C \subset V \: \text{disjoint} , X_{A} \indep X_{B} \mid X_{C} \right\rbrace. 
\end{align}
We will use this as a \emph{symmetric conditional independence} relation. Note that $X^i$ may take values in a path-space in which case it is a stochastic process. The independence model $\cJ(P)$ is a semigraphoid which is an immediate consequence of sub-$\sigma$-algebra properties.

For $v\in V$, we let $\nd_{v}^{\cG}$ denote the set of \emph{nondescendants} of $v$, i.e., the set of nodes $i$ such that there is no directed path from $v$ to $i$.

\begin{definition}[Directed global and local Markov properties]
    Let $\cG = (V, \mathcal{E})$ be a DAG and  $\cJ(V)$ be an independence model over $V$. The independence model $\cJ(V)$ satisfies the \emph{global Markov property w.r.t.\ $\cG$} :$\Leftrightarrow$ 
    \begin{align}\label{global_Markov}
        \langle A,B \mid C \rangle \in \cJ_{d-sep}(\cG) \quad \implies \quad \langle A,B \mid C \rangle \in \cJ(V) \qquad \forall A,B,C \subset V \: \text{disjoint}
    \end{align}
    The independence model $\cJ(V)$ satisfies the \emph{directed local Markov property w.r.t.\ $\cG$} :$\Leftrightarrow$
    \begin{align}\label{local_Markov}
        \langle \lbrace v \rbrace ,  \left( \nd^{\cG}_{v} \backslash \pa^{\cG}_{v} \right) \given \pa^{\cG}_{v} \rangle \in \cJ(V) \text{ for all } v \in V
    \end{align} 
\end{definition}

\begin{theorem}[\citet{lauritzen1990independence}]\label{thm_lauritzen}
    Let $\cG = (V, \mathcal{E})$ be a DAG, let and  $\cJ(V)$ a semigraphoid over $V$. The directed global and local Markov properties are equivalent, that is,
\begin{align}
    \cJ(V) \quad \text{satisfies \cref{global_Markov} w.r.t.\ } \cG \: \Leftrightarrow \:   \cJ(V) \quad \text{satisfies \cref{local_Markov} w.r.t.\ } \cG
\end{align}
    
\end{theorem}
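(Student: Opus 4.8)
The plan is to prove the two implications comprising \cref{thm_lauritzen} separately, introducing an intermediate \emph{ordered} Markov property to bridge them. First I would fix a topological (well-)ordering $<$ of $V$ compatible with $\cG$, so that $w < v$ whenever $w \in \pa^{\cG}_v$, and write $\mathrm{pr}(v) := \{w \in V : w < v\}$ for the set of predecessors of $v$. The ordered Markov property asserts that $\langle \{v\}, \mathrm{pr}(v) \setminus \pa^{\cG}_v \mid \pa^{\cG}_v \rangle \in \cJ(V)$ for every $v \in V$; it will serve as a convenient waypoint between the local and global properties.

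The direction \cref{global_Markov} $\Rightarrow$ \cref{local_Markov} is immediate. One checks purely graphically that $\{v\}$ is $d$-separated from $\nd^{\cG}_v \setminus \pa^{\cG}_v$ given $\pa^{\cG}_v$: any path leaving $v$ must either pass through a parent, which lies in the conditioning set, or reach a collider none of whose descendants lie in $\pa^{\cG}_v$. Hence $\langle \{v\}, \nd^{\cG}_v \setminus \pa^{\cG}_v \mid \pa^{\cG}_v \rangle \in \cJ_{d-sep}(\cG)$, and the assumed global property transports this triple directly into $\cJ(V)$, which is exactly \cref{local_Markov}.

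For the converse I would proceed in two stages. First, pass from local to ordered: since $\mathrm{pr}(v) \subseteq \nd^{\cG}_v$ under a topological ordering, we have $\mathrm{pr}(v) \setminus \pa^{\cG}_v \subseteq \nd^{\cG}_v \setminus \pa^{\cG}_v$, so the decomposition axiom of a semigraphoid applied to the local statement yields the ordered statement. Second, and this is the substance of the theorem, show that the ordered property implies \cref{global_Markov}. The key structural tool is the reduction of $d$-separation to undirected separation: a triple $\langle A, B \mid C \rangle$ is $d$-separated in $\cG$ if and only if $A$ and $B$ are separated by $C$ in the moral graph of the subgraph induced on the smallest ancestral set $\an(A \cup B \cup C)$. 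With this in hand I would argue by induction on $|V|$, peeling off the $<$-maximal vertex $v$: as a sink, $v$ occurs on any path through it only as a collider, which lets one compare $d$-separation in $\cG$ with $d$-separation in $\cG \setminus \{v\}$, while the ordered statement at $v$ supplies precisely the independence needed to absorb or discharge $v$ using the semigraphoid axioms (weak union and contraction to split off the conditioning set, symmetry and decomposition to recombine).

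The hard part will be this final induction. The case analysis according to whether the peeled vertex $v$ belongs to $A$, $B$, $C$, or to none of them is delicate, and---crucially---the whole argument must be organised so that only the four semigraphoid axioms are invoked, since $\cJ(V)$ is assumed to be a semigraphoid but not necessarily a graphoid (no intersection) nor compositional. It is exactly the asymmetry of the DAG ordering that makes this feasible: conditioning on $\pa^{\cG}_v$ and using that $v$ has no descendants lets each reduction step go through by contraction and weak union alone, with no appeal to intersection. Checking that the ancestral-closure/moralization reduction is compatible with the peeling---that deleting $v$ when $v \notin C$ cannot open a path that was previously blocked---is where most of the careful bookkeeping will sit.
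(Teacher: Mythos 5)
First, note that the paper does not prove \cref{thm_lauritzen} at all: it is imported verbatim from \citet{lauritzen1990independence} and used as a black box (the paper only verifies the \emph{local} property for its symmetric independence model and then invokes this theorem), so there is no in-paper argument to compare yours against. What you have written is an outline of the classical proof from that reference. Your two executed steps are correct: the graphical fact that $\pa^{\cG}_v$ d-separates $v$ from $\nd^{\cG}_v \setminus \pa^{\cG}_v$ (a path whose first edge points into $v$ is blocked at that parent, which is a non-collider there; a path whose first edge points out of $v$ must contain a collider that is a descendant of $v$, and by acyclicity neither it nor its descendants can lie in $\pa^{\cG}_v$), which together with \cref{global_Markov} yields \cref{local_Markov}; and the passage from the local to the ordered property by the decomposition axiom, since $\mathrm{pr}(v) \setminus \pa^{\cG}_v \subseteq \nd^{\cG}_v \setminus \pa^{\cG}_v$ under a topological ordering.

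However, the direction that constitutes the actual content of the theorem---ordered (or local) implies global---is not proved in your proposal; it is announced. You correctly identify the two tools of the classical argument (the moralization/ancestral-set characterization of d-separation, and induction peeling off a sink using only the four semigraphoid axioms), and you correctly flag that intersection and composition must not be invoked, but you stop exactly where the work begins: the case analysis over $v \in A$, $v \in B$, $v \in C$, or $v \notin A \cup B \cup C$; the verification that the ordered property is inherited by the induced subgraph on $V \setminus \{v\}$ and by ancestral subsets (needed to apply the induction hypothesis at all); and the explicit weak-union/contraction chains that discharge each case. ``The hard part will be this final induction'' is an accurate self-assessment, but as it stands the proposal is a proof plan with its central step missing, not a proof. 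To close it you would either have to carry out the induction in full---essentially reproducing the argument of \citet{lauritzen1990independence}---or cite that result explicitly, which is precisely what the paper does.
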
 

We therefore only have to establish the local Markov property for the symmetric independence model $\indepsym$:

\begin{proposition}
    Let $X$ be a set of variables induces by the model in \cref{eq:SDEmodel-parents} with a constant and diagonal diffusion matrix such that it induces the DAG $\cG = (V, \mathcal{E})$. Then the system satisfies the \emph{directed local Markov property} w.r.t.\ $\cG$,
    \begin{align}
        X^i \indepsym \nd^{\cG}_{X^i}\setminus \pa^{\cG}_{X^i}\given \pa^{\cG}_{X^i} \qquad \forall i \in [d]
    \end{align}
\end{proposition}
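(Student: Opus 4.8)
The plan is to reduce the statement to the standard fact that a structural causal model (SCM) satisfies the directed local Markov property with respect to its DAG, exactly as in the proof of \cref{prop:Markov}, but now working directly on the full interval $[0,T]$ with no past/future lifting. Concretely, I would let $\cG'$ denote the graph obtained from $\cG$ by deleting all self-loops; since a self-loop at $i$ affects neither $\nd^{\cG}_i$, nor $\pa^{\cG}_i \setminus \{i\}$, nor the descendant relations among distinct nodes, $\cG'$ is a genuine DAG and the local Markov statement for $\cG$ (read with $\pa^{\cG}_i$ meaning the non-self parents) coincides with the one for $\cG'$. It then suffices to exhibit an SCM over $\cG'$ whose node variables are the path-valued random variables $X^k_{[0,T]}$.

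For the SCM construction I would show, as in \cref{app:proofmarkov}, that for each $k$ there is a Borel-measurable solution map
\[
X^k_{[0,T]} = F^k\bigl(X^{\pa^{\cG}_k \setminus \{k\}}_{[0,T]},\, W^k_{[0,T]},\, x_0^k\bigr),
\]
where the self-dependence of $X^k$ on its own past is absorbed into $F^k$ via the unique solution of the SDE \cref{eq:SDEmodel-parents} driven by $W^k$ and fed the parent paths as inputs; existence, uniqueness, and measurability are provided by \citet[Thm.~10.4]{RW00}. Because the diffusion matrix is constant and diagonal (hence block-diagonal, with each coordinate driven by its own Brownian motion), the exogenous pairs $N_k := (W^k, x_0^k)$ are jointly independent across $k$ by assumption of the SDE model. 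Together with the maps $F^k$ this is precisely an SCM over $\cG'$, so by \citet[Prop.~6.31]{peters2017elements} its induced independence model is globally --- and, since $\cJ(P)$ is a semigraphoid, also locally (via \cref{thm_lauritzen}) --- Markov with respect to $\cG'$, which is the claim.

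Alternatively, and more explicitly, I would verify the local statement by hand. The key combinatorial fact is that $\an^{\cG'}(\nd^{\cG}_i) \subseteq \nd^{\cG}_i$ and $i \notin \nd^{\cG}_i$: any ancestor of a nondescendant is again a nondescendant, for otherwise $i$ would have a directed path into one of its own nondescendants. Hence every $X^w_{[0,T]}$ with $w \in \nd^{\cG}_i$ is measurable with respect to $\mathcal{F}_i := \sigma\bigl(\{N_v : v \in \nd^{\cG}_i\}\bigr)$; in particular both $X^{\pa^{\cG}_i}_{[0,T]}$ (parents $\neq i$ are nondescendants by acyclicity) and $X^{\nd^{\cG}_i \setminus \pa^{\cG}_i}_{[0,T]}$ are $\mathcal{F}_i$-measurable, while $N_i$ is independent of $\mathcal{F}_i$. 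Writing $X^i_{[0,T]} = F^i(X^{\pa^{\cG}_i}_{[0,T]}, N_i)$ and applying the elementary lemma that $Y = g(Z,N)$ with $N \indep (Z,R)$ implies $Y \indep R \mid Z$ (condition on $(Z,R)$ and use $N \indep (Z,R)$ to collapse $E[\phi(Y)\mid Z,R]$ to a function of $Z$ alone) yields $X^i \indepsym X^{\nd^{\cG}_i \setminus \pa^{\cG}_i} \mid X^{\pa^{\cG}_i}$.

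I expect the main obstacle to be the rigorous SCM construction rather than the graph-theoretic bookkeeping: one must argue that the full-interval solution $X^k_{[0,T]}$ genuinely depends on its parents \emph{only} through the input paths $X^{\pa^{\cG}_k \setminus \{k\}}_{[0,T]}$ and its own independent noise $(W^k, x_0^k)$, with the self-loop handled as an endogenous fixed point inside $F^k$ rather than as a cyclic dependence between nodes. Once measurability and joint independence of the $N_k$ are in place, the remaining steps (nondescendant closure under ancestors and the conditional-independence lemma) are routine.
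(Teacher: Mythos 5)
Your proposal is correct, but it formalizes the argument through different machinery than the paper. Both proofs rest on the same two facts: each $X^k_{[0,T]}$ is a measurable function of its non-self parents' paths and its own noise pair $(W^k, x^k_0)$ (via \citet[Thm.~10.4]{RW00}, with any self-loop absorbed as a fixed point inside the solution map), and these noise pairs are jointly independent. The paper, however, does not construct an SCM for this proposition: it adjoins $W^k$ and $X^k_0$ as explicit parentless nodes of a noise-augmented graph $\tilde\cG$, checks the \emph{local} Markov property there node by node (measurability at the $X$-nodes, joint independence at the noise nodes), upgrades it to the global Markov property via \cref{thm_lauritzen}, and finally reads off the d-separation of $X^i$ from $X^{\nd_i \setminus \pa_i}$ given $X^{\pa_i}$ inside $\tilde\cG$. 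You instead delete the self-loops, package the dynamics as an SCM over the collapsed DAG exactly as in the paper's own proof of \cref{prop:Markov} (\cref{app:proofmarkov}), and invoke \citet[Prop.~6.31]{peters2017elements}; your second, hands-on route even dispenses with that citation, using only the ancestor-closure of nondescendant sets and the elementary lemma that $Y = g(Z,N)$ with $N \indep (Z,R)$ implies $Y \indep R \mid Z$. What your route buys: the self-loop subtlety is handled explicitly (the paper declares $\tilde\cG$ a DAG even though its edge set contains $E$, which may have loops), and the direct verification is fully self-contained, making transparent that nothing beyond measurability of solution maps and independence of driving noises is needed. What the paper's route buys: the augmented-graph formulation isolates exactly where joint independence of the noises enters (the local Markov condition at the parentless noise nodes), and it yields a stronger intermediate statement---global Markovianity of the independence model that includes the noise variables as nodes---which can be reused for separation queries involving $W^k$ or $X^k_0$, such as the post-processing step in \cref{cor:postprocessing}.
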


\begin{proof}
    The idea of the proof is that all information about node $X^{i}$ is contained in its parents $pa_{i}:= \pa^{\cG}_{X^i}$, the Brownian motion $W^{i}$, and the initial condition $X_0^i$. A similar idea can be used to show the result in the case of a SCM \cite{pearl2009causality}.
    We let $\tilde{\cJ}(P)$ denote the semigraphoid induced by $\indepsym$ on the variable set $\tilde{V} = \lbrace X^{1}, \ldots , X^{n} \rbrace \cup \lbrace W^{1}, \ldots , W^{n} \rbrace \cup \lbrace X^{1}_{0}, \ldots , X^{n}_{0} \rbrace$, and we let ${\cJ}(P)$ denote the semigraphoid induced by $\indepsym$ on the variable set ${V} = \lbrace X^{1}, \ldots , X^{n} \rbrace$. We define the DAG

        \begin{align}
        \tilde{\cG} = \left( \tilde{V} = \lbrace X^{1}, \ldots , X^{n} \rbrace \cup \lbrace W^{1}, \ldots , W^{n} \rbrace \cup \lbrace X^{1}_{0}, \ldots , X^{n}_{0} \rbrace, \: \tilde{E} = E \cup \lbrace X^{j} \leftarrow W^{j} \rbrace \cup \lbrace X^{j} \leftarrow X^{j}_{0} \rbrace \right).
    \end{align}
    
    By construction, $\mathcal{F}_{t} (X^{i}) \subset \mathcal{F}_{t} (W^{i}) \vee \mathcal{F}_{t} (X^{\pa^{\cG}_{i}}) \vee \sigma (X^{i}_{0})$, and therefore 
    \begin{align*}
        \mathcal{F}_{t} (X^{i}) \indep \mathcal{F}_{t} (X^{\nd^{\cG}_{i}\setminus \pa_i} \vee W^{-i} \vee X_0^{-i}) \given \mathcal{F}_{t} (X^{\pa_{i}}) \vee \mathcal{F}_{t} (W^{i}) \vee \sigma (X^{i}_{0})
    \end{align*} 
    
    where the parent and nondescendant sets are to be read in the graph $\cG$, $W^{-i} = \{W_1, \ldots, W_n\}\setminus \{ W_i\}$, and $X_0^{-i} = \{X_0^1,\ldots,X_0^n \}\setminus \{X_0^i\}$. The above statement corresponds to \cref{local_Markov} when $v\in \{X_1,\ldots,X_n\}$. When $v = W_i$ or $v = X_0^i$, $v$ has no parents in $\tilde\cG$ and \cref{local_Markov} also holds. Therefore, the independence model $\tilde{\cJ}(P)$ satisfies the local directed Markov property w.r.t.\ to $\tilde{\cG}$.

    The independence model $\tilde{\cJ}(P)$ is a semigraphoid, and by \cref{thm_lauritzen} it satisfies \cref{global_Markov} w.r.t.\ $\tilde{\cG}$. We have $X^{i} \indep_{d-sep} X^{\nd_{i}\setminus \pa_i} \given X^{\pa_{i}}$ in $\tilde{\cG}$, and one therefore has
    \begin{align*}
        \mathcal{F}_{t} (X^{i}) \indep \mathcal{F}_{t} (X^{\nd_{i}\setminus \pa_i}) \given \mathcal{F}_{t} (X^{\pa^{i}})
    \end{align*}
    establishing \cref{local_Markov} for $\cJ(P)$ w.r.t.\ $\cG$.
\end{proof}

\subsection{Proof of \texorpdfstring{\cref{cor:postprocessing}}{Corollary postprocessing} and Algorithm for Robust Causal Discovery}\label{app:proofPostProcessing}
\begin{proof}
This is clear from the data generating mechanism in \cref{eq:SDEmodel} where all Brownian motions $\dif W^k_t$ and initial conditions $X_0^k$ are jointly independent.
Since $X^j_{[0,T]}$ is fully determined by $\{X_0^{\mathrm{an}_j}, \dif W^{\mathrm{an}_j}_t \}$ where $\mathrm{an}_j$ are the ancestors of $j$ in $\cG$, it follows that $i \in \mathrm{an}_j$ if and only if $X^j_{[0,T]} \notindep X_0^i$. Since we already know from the CPDAG that $i$ and $j$ are adjacent and the graph is acyclic, $i \in \mathrm{an}_j$ implies $i \to j$.
\end{proof}

The full written out algorithm then looks like \cref{algo:PC_ext_init} and is sound and complete by \cref{cor:postprocessing}.
\begin{algorithm}
    \caption{Robust Causal discovery for acyclic SDE models leveraging the initial values}\label{algo:PC_ext_init}
    {\small
    \begin{algorithmic}[1]
        \State $V_{pre} \gets \{k \in V\}$ \\
        $E_{pre} \gets \{i \to j, \ j \to i \mid i,j \in V, \ i \neq j\}$
        \For{$c = 0,\ldots, d-2$} \Comment{Adjacency detection}
        \For{$i,j \in V$, $i \neq j$}
        \For{$K \subseteq V \setminus \{i,j\}$, $|K| = c$,\\\phantom{}\hspace{5mm} s.t.\ $(k \adj j) \in E_{pre}$ for $k \in K$}
        \If{$X^i \indep X^j \mid X^K$}
        \State $E_{pre} \gets E_{pre} \setminus \{i \to j, j \to i\}$ \\
        $S_{ij} \gets V$
        \EndIf
        \EndFor
        \EndFor
        \EndFor
         \For{each triple $i,j,k \in V$ with $i\adj j \adj k$ and $i \nadj k$} \Comment{Collider Orientation}
        \If{$j \not\in S_{ik}$}
        \State $E_{pre} \gets E_{pre}\backslash \{(j,i),(j,k) \}$
        \EndIf
        \EndFor
        \State Apply the Meek-Rules
        
        \For{$i,j \in V$, $i \neq j$ s.t.\ $(i,j),(j,i) \in E$}
        \If{$X_{0}^i \indep X_{[0,1]}^j$}
        \State $E \gets \setminus \{i \to j \}$
        \Else 
        \State $E \gets \setminus \{j \to i \}$
        \EndIf
        \EndFor
        \For{$k \in V$} \Comment{removing loops}
        \If{$X^k \indep^\circlearrowleft_{s,h} \mid X^{\pa^{\cG}_k \setminus \{k\} }$} 
        \State $E \gets E \setminus \{k \to k\}$
        \EndIf
        \EndFor
        \State \Return $\cG$
    \end{algorithmic}
    }
\end{algorithm}

\subsection{Post-Processing Without Jointly Independent Initial Values}\label{app:RealWorldPostProcessing}
In real-world settings where there is no natural `starting point' of the processes, we can typically not assume joint independence of initial values $X^{i}_{0}$ as `the process also happened before'.
In this subsection we extend \cref{algo:PC_ext_init} to a version that is applicable in scenarios where we cannot assume the initial values $(X_{0}^{i})_{i \in [d]}$ to be jointly independent. 
\begin{algorithm}
    \caption{Robust causal discovery for acyclic SDE models.}\label{algo:PC_ext}
    {\small
    \begin{algorithmic}[1]
        \State $V_{pre} \gets \{k \in V\}$ \\
        $E_{pre} \gets \{i \to j, \ j \to i \mid i,j \in V, \ i \neq j\}$
        \For{$c = 0,\ldots, d-2$} \Comment{Adjacency detection}
        \For{$i,j \in V$, $i \neq j$}
        \For{$K \subseteq V \setminus \{i,j\}$, $|K| = c$,\\\phantom{}\hspace{5mm} s.t.\ $(k \adj j) \in E_{pre}$ for $k \in K$}
        \If{$X^i \indep X^j \mid X^K$}
        \State $E_{pre} \gets E_{pre} \setminus \{i \to j, j \to i\}$ \\
        $S_{ij} \gets V$
        \EndIf
        \EndFor
        \EndFor
        \EndFor
         \For{each triple $i,j,k \in V$ with $i\adj j \adj k$ and $i \nadj k$} \Comment{Collider Orientation}
        \If{$j \not\in S_{ik}$}
        \State $E_{pre} \gets E_{pre}\backslash \{(j,i),(j,k) \}$
        \EndIf
        \EndFor
        \State Apply the Meek-Rules
        \State $\tilde V \gets V$ \\
        $\tilde{E}_{post} \gets \{i_0 \to j_0, \ i_1 \to j_1, i_0 \to j_1 \mid (i,j) \in E_{pre}\}$ \\  $\hphantom{E \leftarrow} \cup \{i_0 \to i_1 \mid i \in V\}$ \Comment{Lifted Graph}
            
        \For{$i,j \in V$, $i \neq j$ s.t.\ $(i_0,j_1),(j_0,i_1) \in E$}
        \For{$K \in \cK_{i \to j}$}
        \If{$X^i \indep^+_{s,h} X^j \mid X^K$}
        \State $\tilde E \gets \tilde E \setminus \{i_0 \to j_0, i_1 \to j_1, i_0 \to j_1 \}$\\
        Break the loop
        \EndIf
        \EndFor
        \EndFor
        \State $\cG = (V,E) \gets \mathrm{collapse}(\tilde{V}, \tilde{E})$
        \For{$k \in V$} \Comment{removing loops}
        \If{$X^k \indep^\circlearrowleft_{s,h} \mid X^{\pa^{\cG}_k \setminus \{k\} }$} 
        \State $E \gets E \setminus \{k \to k\}$
        \EndIf
        \EndFor
        \State \Return $\cG$
    \end{algorithmic}
    }
\end{algorithm}
The algorithm works similar as to \cref{algo:PC_ext_init}, except for the fact that for each pair $i,j \in V$ with an unoriented edge, we have to condition on the parents of $j$ when testing for the existence of edge $i \to j$ to prevent information flowing from $i$ to $j$. As it can happen that $j$ is connected to $k \neq i$ with another undirected edge, the set $\cK_{i \to j}$ is required:
\begin{align*}
    \cK_{i \to j}:= \pa^{\cG_{post}}_{j, known} \times \cP (V_{j,un}),
\end{align*}
where $\pa^{\cG_{post}}_{j, known} := \lbrace k \in V \backslash \{i,j \} : (k_0, j_1) \in \tilde{E}_{post}, (j_0, k_1) \not\in \tilde{E}_{post}\rbrace$, $V_{j,un} := \lbrace k \in V \backslash \{i \} : (k_0, j_1) \in \tilde{E}_{post}, (j_0, k_1) \in \tilde{E}_{post} \rbrace$. $\cK_{i \to j}$ is required to implement the procedure of conditioning on all known parents of j ($\pa^{\cG_{post}}_{j, known}$) and in case there are unspecified other neighbors $V_{j,un}$ of $j$, loops through the options of either condition on them or don't condition on them. In case $j$ has no other undirected adjacencies, this set only contains one element, the parents of $j$ except $i$. 
\begin{proposition}
\Cref{algo:PC_ext} is sound and complete assuming faithfulness, meaning if we denote the output graph of \cref{algo:PC_ext} by $\cG_{ext}= (V,E_{ext})$, the output edges $E_{ext}$ coincide with the edges of the true DAG $\cG = (V, E)$. 
\end{proposition}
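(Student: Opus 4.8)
The plan is to split the analysis along the phases of \cref{algo:PC_ext} and to recycle two ingredients already available: the soundness and completeness of the classical PC algorithm under a faithful CI oracle, and the $d$-separation bookkeeping from the proof of \cref{thm:causalDisc}.

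First I would show that the adjacency-detection, collider-orientation and Meek-rule block outputs exactly the CPDAG of $\cG$. This block is precisely the PC algorithm run with the symmetric relation $\indepsym$ as oracle. By the global Markov property for $\indepsym$ established in \cref{app:symmetric}, $d$-separation in $\cG$ implies the corresponding symmetric CI; faithfulness supplies the converse, so the oracle answers coincide with $d$-separation in $\cG$ and the standard correctness of PC \citep{spirtes2000causation} gives that $E_{pre}$, after the Meek rules, is the CPDAG: correct skeleton, and every directed edge oriented as in $\cG$.

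The heart of the proof is the orientation of the edges still undirected in the CPDAG, which I would organise as an induction on the number of edges oriented during this second phase. Fix such an edge and assume without loss of generality that the true edge is $j \to i$, so $i \notin \pa^\cG_j$ and $(i_0, j_1) \notin \tilde E$ in the true lifted graph. Exactly as in the ``$\subseteq$''-direction of \cref{thm:causalDisc}, the set $P := \pa^\cG_j \setminus \{i,j\}$ $d$-separates $i_0$ from $j_1$ given $\{j_0\} \cup \{k_0, k_1 \mid k \in P\}$, so \cref{prop:Markov} yields $X^i \indepsh X^j \mid X^{P}$. The crucial set-membership step is to check that $P \in \cK_{i\to j}$: by the inductive hypothesis every already-oriented edge is correct, so the currently-known parents $\pa^{\cG_{post}}_{j,known}$ of $j$ form a subset of $P$, while every true parent of $j$ whose edge is still undirected lies in $V_{j,un}$; choosing the subset $U = P \cap V_{j,un}$ therefore realises $\pa^{\cG_{post}}_{j,known} \cup U = P$ as a member of $\cK_{i\to j}$. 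Hence the test fires and the lifted edges $i_0 \to j_0$, $i_1 \to j_1$, $i_0 \to j_1$ are correctly deleted, orienting $j \to i$. Conversely, if the true edge were $i \to j$ then $(i_0, j_1) \in \tilde E$, so $i_0$ and $j_1$ are adjacent and no set $d$-separates them; \cref{def:global_faithfulness} then forbids $X^i \indepsh X^j \mid X^K$ for every $K$, so this edge is never removed. Running over ordered pairs thus orients each edge in the single direction agreeing with $\cG$, and the induction closes because the orientations just performed are themselves correct, keeping $\pa^{\cG_{post}}_{j,known} \subseteq \pa^\cG_j$ and $V_{j,un}$ populated with all yet-unoriented true parents. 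The loop-removal block at the end is verbatim that of \cref{algo:ctPC}, so its correctness follows from the loop-removal argument in \cref{thm:causalDisc} via \cref{prop:Markov} and faithfulness once the non-loop structure is known.

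The hard part will be the third paragraph: pinning down $\cK_{i\to j}$ and proving that the required $d$-separating set $P$ stays inside it throughout the run. The delicate point is that an edge from a genuine parent $k$ of $j$ must never be mis-oriented as $j \to k$---which would expel $k$ from both $\pa^{\cG_{post}}_{j,known}$ and $V_{j,un}$ and destroy the membership $P \in \cK_{i\to j}$---so the induction must be set up so that correctness of all prior orientations is available exactly when it is needed. Granting this, the remainder reduces to the $d$-separation computation already performed for \cref{thm:causalDisc}.
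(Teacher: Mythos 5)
Your proof is correct and follows essentially the same route as the paper's: the first phase is handled by PC correctness under the symmetric Markov property plus faithfulness, each remaining undirected edge is oriented by exhibiting the parent set $\pa^{\cG}_j \setminus \{i,j\}$ as a $d$-separating set realizable as an element of $\cK_{i \to j}$ (the paper's ``contained in $\cK$ as a single element''), and true directions are protected because adjacency in the lifted graph admits no separating set, so faithfulness forbids the CI from ever firing. Your explicit induction ensuring no genuine parent of $j$ is mis-oriented before the pair $(i,j)$ is processed is a sensible tightening of a step the paper leaves implicit (and states with swapped indices in its ``$\subset$'' direction), but it is bookkeeping on the same argument rather than a different approach.
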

\begin{proof}\ \\
    \begin{itemize}[leftmargin=1cm]
        \item["$\supset$":] We have to show $(i,j) \in E \implies (i,j) \in E_{ext}$. This direction is clear but will still be stated here. Since the original PC algorithm (see e.g. \citet{spirtes2000causation}) is sound, $(i,j) \in \tilde{E}_{post}$\\
        Since $(i,j) \in E $, $i_0$ and $j_1$ cannot be separated in the lifted  graph  $\forall K \subset V \backslash \{i,j\}$. Thus by the faithfulness assumption $X^{i} \notindepsh X^{j} \given X^{K}$ $\forall K \subset V \backslash \{i,j\}$ which holds especially for $K \in \cK_{i \to j}$, hence $(i,j) \in E_{ext}$.
        \item["$\subset$":] 
        \textit{Proof by contraposition:} $(i,j) \in E_{ext} \implies (i,j) \in E$ is equivalent to  $(i,j) \not\in E \implies(i,j) \not\in E_{ext}$.
        Since the PC result $\cG_{pre} = (V,E_{pre})$ has the correct skeleton and v-structures, the only option would be that wlog $i \neq j$ s.t.\ $j \to i \in \cG$, $(i_0,j_1), (j_0, i_1) \in \tilde{E}_{post}$ meaning the original PC was to detect an adjacency but unable to discard $(i,j)$. But then $j_0$ is separated from $i_1$ by the parents $\pa_{i}^{\cG}$ of $i$, which do not contain $j$. Therefore, they are contained in $\cK_{j \to i}$ as a single element and by the global Markov property, we obtain $X^{j} \indepsh X^{i} \given X^{\pa_{i}^{\cG}}$, thus $(i,j) \not\in E_{ext}$.
    \end{itemize}
\end{proof}

\subsection{Sensitivity of Constraint-Based Methods to Errors in Conditional Independence Testing} \label{app:sensitivity-pc}

Constraint-based causal discovery methods such as PC or FCI heavily rely on conditional independence tests to infer the causal structure. However, the adaptive nature of these tests, where the results of subsequent tests are contingent upon the outcomes of earlier ones, complicates the error analysis of these approaches. While this is a widely acknowledged challenge faced by all constraint-based methods, how the final graph depends on type I and II errors remains an important open problem in the field. Already early work by \citet{spirtes1995learning} presented a simulation study showing the sensitivity of different methods to the different types of errors, underscoring a variable trade-off between sensitivity and specificity across different sample sizes in skeleton and edge orientation detection depending on the size of the graph and number of samples. Instead, \citet{kalisch2007estimating} provide theoretical results showing that the PC algorithm is uniformly consistent for high-dimensional settings under a mild sparsity assumption on the DAGs, where the number of nodes can grow quickly with the sample size. However, they critically rely on the restrictive assumption that the observational distribution is a multi-variate Gaussian for their finite sample results, where a strong theory for testing CI with partial correlations is available. 

The performance of conditional independence tests is critical for the robustness of constraint-based methods and consistency (as demonstrated for our test in \cref{app:consistency}) is a critical first step. In practice, extensive simulation studies are another useful approach to benchmark the actual performance of the final causal discovery method.

\subsection{I.i.d.\ Assumption of the Conditional Independence Test}\label{app:iid-assumption}
KCIPT and SDCIT both require i.i.d.\ samples as input. In our analysis, a solution path of all variables in $X$ is treated as a single example originating from the SDE model. By generating $n$ solution paths as independent realizations of a stochastic process (e.g., independent initial conditions and independent Brownian motions in the SDE), we obtain $n$ i.i.d.\ samples. It is important to note that while multiple observations from each path at different time points are considered part of a single sample, these within-path observations are not assumed to be i.i.d. However, as each path is an independent realization, the collection of different paths meets the i.i.d.\ requirement necessary for the effective application of KCIPT and SDCIT in our setting.

\subsection{Consistency of the Conditional Independence Test}\label{app:consistency}

In this section, we argue, that under certain assumptions on our system, Theorem 1 of \citet{doran2014permutation} still holds even without assuming the existence of a density. At first, it is to be noted that the signature kernel \cref{eq:signature_kernel} is bounded on compact sets $C \subset \BV (C (I, \R^{\dimx}))$ of BV paths, an assumption that can be made as we evaluate our SDEs over finite, discrete time steps. Satisfying this boundedness, it defines an RKHS over these compact sets. In addition, the signature kernel is characteristic on these bounded sets (see Proposition 3.3 \citet{cass2023weighted}). 
We use the notation $X: \left( \Omega , \mathcal{A}, P \right) \rightarrow \left( \cX, \cA_{\cX} \right)$, $Y: \left( \Omega , \mathcal{A}, P \right) \rightarrow \left( \cY, \cA_{\cY} \right)$, $Z: \left( \Omega , \mathcal{A}, P \right) \rightarrow \left( \cZ, \cA_{\cZ} \right)$ with $\cX, \cY, \cZ $ being compact subsets in $\BV (C (I, \R^{n_x})), BV (C (I, \R^{n_y})), BV (C (I, \R^{n_z}))$ and denote the RKHS's defined by the signature kernel as $(\cH_{\cX}, k_{\cX}), (\cH_{\cY}, k_{\cY}), (\cH_{\cZ}, k_{\cZ})$. \\

Our measure-theoretical argumentation now follows the definitions in \citet{park2020measure} and the original proof in \citet{doran2014permutation}.
Let $H$ be a Banach space, $\cA^{\prime}_{H} \subset \cA_{H}$ a sub-$\sigma$-algebra of the Borel-$\sigma$-algebra.
\begin{definition}
    Let $F: ( \Omega , \cA, P ) \rightarrow (H, \cA_{H})$ a  Bochner $P$-integrable random variable. The \emph{conditional expectation of $H$} w.r.t.\ $E$ is a Bochner $P$-integrable RV $F :( \Omega , \cA, P ) \rightarrow (H, \cA^{\prime}_{H})$ s.t. 
 \begin{align*}
    \forall A \in \cA^{\prime}_{H}  \quad \int_{A} F \dif P = \int_{A} F^{\prime} \dif P \,.
    \end{align*}
\end{definition}
Due to the existence and (almost sure) uniqueness, such an $F^{\prime}$ is denoted $\E [F \given \cA^{\prime}_{H}]$ with the notation $\E [F \given Z]:= \E [F \given \sigma(Z)]$ for a general $Z : ( \Omega , \cA, P ) \rightarrow (\cZ, \cA_{\cZ})$ which in turn can be used to define the \emph{conditional expectation} $P(A \given \cA^{\prime}_{H} ):= \E[\mathbbm{1}_{A} \mid \cA^{\prime}_{H}]$ and can be used to define the \emph{conditional kernel mean embedding}
\begin{align*}
\mu_{X \mid Z} := \mu_{P_{X \mid Z}} := \E_{X \mid Z } [k_{\cX} (X, \, \cdot \, ) \mid Z]
\end{align*} 
As proven in Theorem 5.8 of \citet{park2020measure} under the assumption, that  $P_{X \given Z} := \E [\, \cdot, \given Z]$ admits a regular version (meaning its can be written as a transition kernel, see \citet{park2020measure}) and $k_{\cX} \otimes k_{\cY}$ 
is characteristic, it holds that 
\begin{align*}
X \indep Y \given Z \: \Leftrightarrow \: \lVert \mu_{XY \given Z} - \mu_{X \given Z} \otimes \mu_{Y \given Z} \rVert_{\cH_{\cX} \otimes \cH_{\cY}}=0 \: \Leftrightarrow \: \lVert \mu_{XY \given Z} \otimes \mu_{ Z} - \mu_{X \given Z} \otimes \mu_{Y \given Z}\otimes \mu_{ Z} \rVert_{\cH_{\cX} \otimes \cH_{\cY}\otimes \cH_{\cZ}}=0
\end{align*}
where the second equivalence is trivial. Let now $\sigma \in \mathcal{S}_{n}$ be a permutation such that its permutation matrix $M_{\sigma} = (m_{ij})$
\begin{equation*}
    m_{ij} = 
    \begin{cases}
      1 &\: \text{if } j = \sigma(i)\:, \\
      0 &\: \text{if } j \neq \sigma(i)\:.
    \end{cases}
\end{equation*}
satisfies $\mathrm{Tr}(M_{\sigma}) = 0$. The empirical estimates for the given sample $\lbrace (x_i, y_i, z_i) \rbrace_{i \in [n]}$ and its permuted version under $\sigma$
\begin{align}
\hat{\mu}_{P_{X,Y,Z}} &\coloneqq \frac{1}{n}\sum_{i=1}^{n} k_{\cX} (x_i, \cdot ) \otimes k_{\cY} ( y_i, \cdot ) \otimes k_{\cZ} (z_i, \cdot), \\
\hat{\mu}_{P^{\prime}_{X,Y,Z}} &\coloneqq \frac{1}{n} \sum_{i=1}^{n} k_{\cX} (x_i, \cdot) \otimes k_{\cY} (y_{\sigma(i)}, \cdot) \otimes k_{\cZ} (z_i, \cdot).
\end{align}
Similar to \citet{laumann2023kernel}, can now write the distance between test statistic and the approximated test statistic in terms of the sum of the distance between the empirical estimates and its respective counterparts by using the inverse and the regular triangle inequality  
\begin{align*}
    &\left\lvert \lVert \mu_{XY \given Z} \otimes \mu_{ Z} - \mu_{X \given Z} \otimes \mu_{Y \given Z}\otimes \mu_{ Z} \rVert_{\cH_{\cX} \otimes \cH_{\cY}\otimes \cH_{\cZ}} - \lVert \hat{\mu}_{P_{X,Y,Z}} - \hat{\mu}_{P^{\prime}_{X,Y,Z}} \rVert_{\cH_{\cX} \otimes \cH_{\cY}\otimes \cH_{\cZ}} \right\rvert \\
    \leq{} &\left\lvert \lVert \mu_{XY \given Z} \otimes \mu_{ Z} - \mu_{X \given Z} \otimes \mu_{Y \given Z}\otimes \mu_{ Z} - \hat{\mu}_{P_{X,Y,Z}} + \hat{\mu}_{P^{\prime}_{X,Y,Z}} \rVert_{\cH_{\cX} \otimes \cH_{\cY}\otimes \cH_{\cZ}} \right\rvert \\
    \leq{} &\lVert \mu_{XY \given Z} \otimes \mu_{ Z} - \hat{\mu}_{P_{X,Y,Z}} \rVert_{\cH_{\cX} \otimes \cH_{\cY}\otimes \cH_{\cZ}} + \lVert \mu_{X \given Z} \otimes \mu_{Y \given Z}\otimes \mu_{ Z} - \hat{\mu}_{P^{\prime}_{X,Y,Z}} \rVert_{\cH_{\cX} \otimes \cH_{\cY}\otimes \cH_{\cZ}}.
\end{align*}

By \citet{park2020measure}, the first term tends to zero for $n \rightarrow \infty$. With a similar line of argumentation as in \citet{doran2014permutation} on can establish a majorant for the estimated HSCIC, namely
\begin{align*}
& \lVert \hat{\mu}_{P_{X,Y,Z}} - \hat{\mu}_{P^{\prime}_{X,Y,Z}} \rVert_{\cH_{\cX} \otimes \cH_{\cY}\otimes \cH_{\cZ}} \\
={}& \left\lVert \frac{1}{n}\sum_{i=1}^{n} k_{\cX} (x_i, \cdot) \otimes k_{\cY} (y_i, \cdot) \otimes k_{\cZ} (z_i, \cdot) - \frac{1}{n} \sum_{i=1}^{n} k_{\cX} (x_i, \cdot) \otimes k_{\cY} (y_{\sigma(i)}, \cdot) \otimes k_{\cZ} (z_i, \cdot) \right\rVert_{\cH_{\cX} \otimes \cH_{\cY} \otimes \cH_{\cZ}} \\
={}& \left\lVert \frac{1}{n} \sum_{i=1}^{n} k_{\cX} (x_i, \cdot) \otimes k_{\cY} (y_i, \cdot) \otimes k_{\cZ} (z_i, \cdot) - \frac{1}{n} \sum_{i=\sigma(j),j=1}^{n} k_{\cX} (x_{\sigma^{-1}(i)}, \cdot) \otimes k_{\cY} (y_{i}, \cdot) \otimes k_{\cZ} (z_{\sigma^{-1}(i)}, \cdot) \right\rVert_{\cH_{\cX} \otimes \cH_{\cY} \otimes \cH_{\cZ}} \\
\leq{} & \frac{1}{n} \sum_{i=1}^{n} \left\lVert \left( k_{\cX} (x_i, \cdot) - k_{\cX} (x_{\sigma^{-1}(i)}, \cdot) \right) \otimes k_{\cY} (y_i, \cdot) \otimes \left( k_{\cZ} (z_i, \cdot) - k_{\cZ} (z_{\sigma^{-1}(i)}, \cdot) \right)     \right\rVert_{\cH_{\cX} \otimes \cH_{\cY} \otimes \cH_{\cZ}} \\
\leq{} & \frac{1}{n} \sum_{i=1}^{n} \underbrace{\left\lVert k_{\cX} (x_i, \cdot) - k_{\cX} (x_{\sigma^{-1}(i)}, \cdot) \right\rVert_{\cH_{\cX}}}_{ \leq 2 M_x} \underbrace{\left\lVert k_{\cY} (y_i, \cdot) \right\rVert_{\cH_{\cY}}}_{ \leq M_y} \left\lVert \left( k_{\cZ} (z_i, \cdot) - k_{\cZ} (z_{\sigma^{-1}(i)}, \cdot) \right)  \right\rVert_{ \cH_{\cZ}} \\
\\
\leq{} & 2 M_x M_y \left( \frac{1}{n}  \tr (P D^{RKHS} )\right)
\end{align*}
where the assumptions can be made due to the boundedness of the kernels.
Hence, under the assumption that 
\begin{align}\label{permutation_stuff}
    \frac{1}{n} \sum_{i=1}^{n} \delta_{(x_i, y_{\sigma(i)} , z_i)} \rightarrow P_{X \mid Z } \otimes P_{Y \mid Z } \otimes P_{Z}
\end{align}
a statement that holds true in the case of densities (see supplementary material of \citet{janzing2013quantifying}), since $P_{\mid Z }$ is regular, $\hat{\mu}_{P^{\prime}_{X,Y,Z}} \rightarrow \mu_{X \given Z}\otimes \mu_{Y \given Z}\otimes \mu_{ Z}$. 

Hence under the assumptions that $P_{ \cdot \mid Z }$ admits a regular version (an assumption also made by \citet{laumann2023kernel}), the considered random variables operate on compact sets of BV-paths and \cref{permutation_stuff} holds, the test is consistent if and only if $ \frac{1}{n}\tr (P D^{RKHS} ) \rightarrow 0$.
This is the exact same statement as in Theorem 1 of \citet{doran2014permutation}.

\section{Experiments and Evaluation}\label{app:experiments}

This section provides details about the experimental framework, elaborating on the data generation process, the parameters and architectures employed, and presents additional results.

\begin{table*}[t!]
\caption{Performance of \emph{future-extended h-locally conditionally independence criterion} in the bivariate setting for the two heuristic $\sigma_{1}$ and $\sigma_{2}$ for different sample sizes. The SHD is measured for different values of $s$ in $\indep_{s,t}^{+}$ with $t=1$, defining different lengths of intervals in the past and future. Overall the performance is best of shorter values of the past intervals compared to the future ($s < \frac{t}{2}$).}\label{tab:heuristic}
\smallskip
\rowcolors{1}{gray!15}{white}
\begin{tabularx}{\textwidth}{ l *{8}{R} }
\toprule
\rowcolor{white}
\multirow{2}{*}{} & \multicolumn{2}{c}{$\sigma_{1}$} & \multicolumn{2}{c}{$\sigma_{2}$}   \\
\cmidrule(lr){2-3}\cmidrule(lr){4-5}
\rowcolor{white}
\color{gray}{$(\times 10^2)$} & $n = 400$ & $n = 600$ & $n = 400$ & $n = 600$  \\
\midrule
$s = 0.1$ & $\mathbf{76.2 \pm 2.5}$ & $\mathbf{58.4 \pm 2.7}$ &  $14.9 \pm 2.0$ & $15.8 \pm 2.0$ \\
$s = 0.2$ & $77.3 \pm 2.7$ & $72.2 \pm 2.8$ &  $\mathbf{12.9 \pm 1.7}$ & $\mathbf{10.9 \pm 1.7}$  \\  
$s = 0.3$ & $80.2 \pm 2.3$ & $76.2 \pm 2.6$ &  $21.8 \pm 2.2$ & $17.8 \pm 2.0$ \\
$s = 0.4$ & $89.1 \pm 2.0$ & $92.0 \pm 2.1$ & $ 31.7 \pm 2.4$ & $25.7 \pm 2.2$  \\
$s = 0.5$ & $89.1 \pm 2.0$ & $98.1 \pm 2.0$ & $38.6 \pm 2.5$ & $27.7 \pm 2.2$  \\
$s = 0.6$ & $82.4 \pm 1.9$ & $96.0 \pm 2.1$ &  $60.0 \pm 2.7$ & $48.5 \pm 2.9$  \\
$s = 0.7$ & $102.0 \pm 1.2$ & $104.0 \pm 2.0$ &  $80.2 \pm 2.2$ & $61.4 \pm 2.7$\\
$s = 0.8$ & $104.0 \pm 1.4$ & $95.0 \pm 1.5$ &  $81.1 \pm 2.0$ & $87.1 \pm 2.4$  \\
$s = 0.9$ & $95.0 \pm 1.6$ & $104.4 \pm 1.8$ &  $102.0 \pm 1.2$ & $ 101.0 \pm 2.1$ \\
\bottomrule
\end{tabularx}
\end{table*}

\subsection{Kernel Heuristic and Interval Choice}\label{app:median_heuristic}

In kernel methods, besides selecting the right kernel function, the correct choice of its parameters is vital.
A reasonable choice for radially symmetric kernels $k \left( x_i , x_j \right) = f(\lVert x_i - x_j \rVert/ \sigma)$, $f: \R_{+} \rightarrow \R_{+}$ is to choose $\sigma$ to lie in the same order of magnitude as the pairwise distances $ \left( \lVert x_i - x_j \rVert \right)_{i,j}$. 
Choosing the bandwidth for the RBF kernel in our setting with samples $\left( x_i \right)_{i \in [n]}$, $x_i \in \mathcal{C}^{0}(I, \R^{d})$ is not straight forward, as the RBF kernel ultimately acts on signatures and there is no immediate way in obtaining a `typical scale' of pairwise distances of signatures. To still set the bandwidth purely based on observed data, there are two natural ways to implement a median heuristic based on two different collections of pairwise distances. The first option is to choose 
\begin{equation*}
\sigma_{1} = \med \bigl\lbrace \lVert x_i - x_j \rVert_{L^{2}} \,\big|\, i, j \in [n] \bigr\rbrace\:,
\end{equation*}
where $\|\cdot\|_{L^2}$ is the is the $L^2$ norm of entire paths, i.e., integrated over the time interval $I$.
Given that in practice we observe the paths $x_i$ still at fixed time points $t_1 < \ldots < t_m$ within the interval $I$, we can also consider the heuristic
\begin{equation*}
\sigma_{2} = \med \bigl\lbrace \lVert x_{i,t_l} - x_{j,t_k} \rVert_{2} \,\big|\, i,j \in [n], l, k \in [m] \bigr\rbrace\:,
\end{equation*}
where $\|\cdot\|_2$ is the Euclidean norm in $\R^d$ and we consider pairwise distances of all observations across all paths and time points.
In early benchmarks, we have empirically found $\sigma_2$ to yield a better heuristic by comparing the performance of both heuristics by evaluation of the SHD in a causal discovery task in the linear bivariate setting  $X^{1} \rightarrow X^{2}$ without diffusion interaction with $a_{21} \sim \mathcal{U} \left( (-2.5,-1] \cup [1,2.5) \right)$, $a_{11},a_{22} \sim \mathcal{U} \left( (-0.5,0.5) \right)$ and $\sigma_i \sim \mathcal{U} \left( [0,0.5) \right)$. The experiment is performed for different choices of intervals $[0,s], [s, s+h]$ in the future-extended $h$-local conditional independence. As can be seen from \cref{tab:heuristic}, $\sigma_{2}$ and the intervals $[0,0.1], [0.1,1]$ seem to perform best on average and are therefore used in the rest of the experiments. In all the experiments of the paper, in the implementation of the signature kernel we use a depth parameter of $4$, the RBF kernel and we add time as an extra dimension to the signature kernel.

\begin{figure*}[t]
  \centering
    \includegraphics[width=\textwidth]{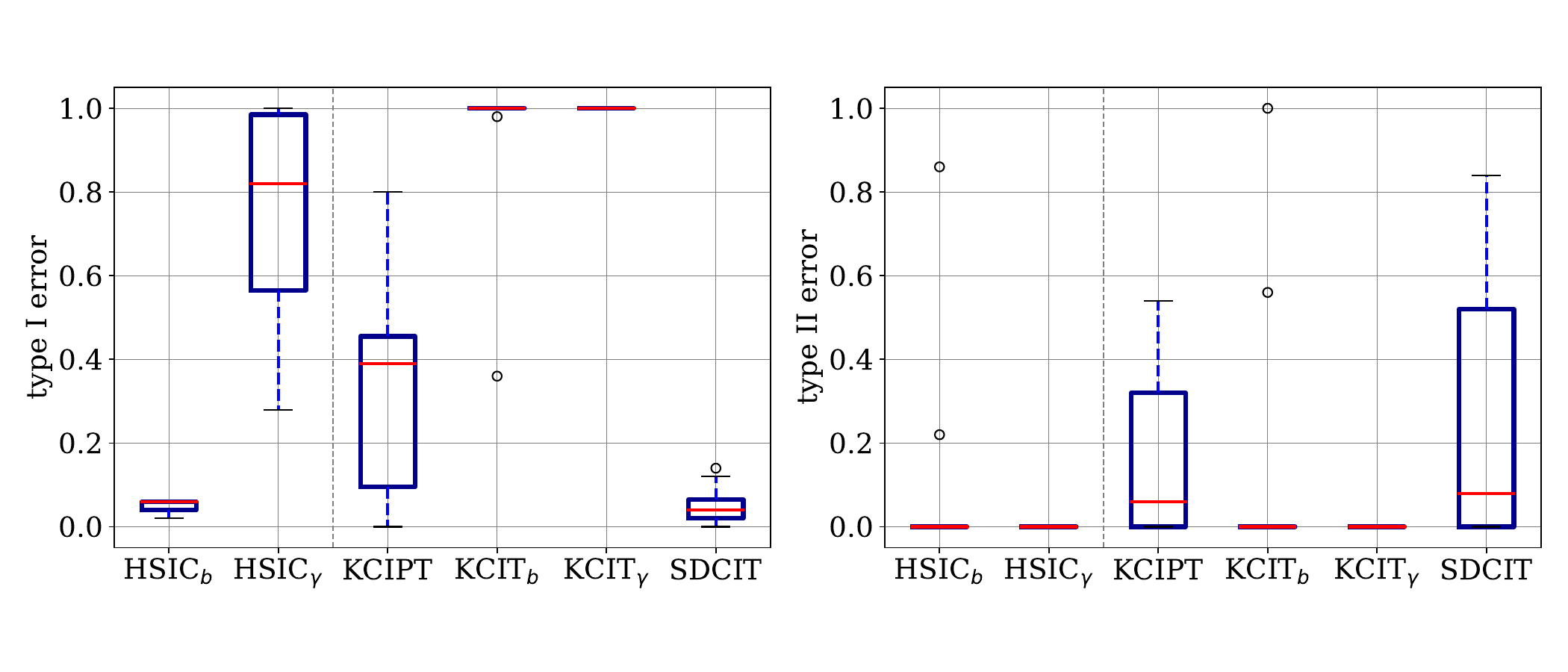}
  \caption{Performance in terms of type I and type II errors of different unconditional (HSIC${}_b$ and HSIC${}_{\gamma}$) and conditional tests (KCIPT, KCIT${_b}$, KCIT$_{\gamma}$, SDCIT). Among the conditional tests, SDCIT outperforms KCIPT in terms of type I error, while performig in a similar range in terms of type II error. Among the unconditional tests, HSIC${}_b$ outperforms HSIC${}_{\gamma}$}\label{fig:tests_comparison}
  \label{fig:performance_different_tests}
\end{figure*}

\subsection{Empirical Comparison of Different Tests}\label{app:comparison_tests}

Different kernel-based conditional and unconditional tests are compared. HSIC with bootstrap (HSIC${}_b$) and with $\gamma$-distribution approximation (HSIC${}_{\gamma}$) \citep{gretton2007kernel} were implemented for the unconditional setting, while KCIPT \citep{doran2014permutation}, KCIT with bootstrap (KCIT${_b}$), KCIT with $\gamma$ approximation (KCIT$_{\gamma}$) \citep{zhang2012kernel} and SDCIT for the conditional setting \citep{SDCIT}. \cref{fig:tests_comparison} shows that SDCIT and HSIC${}_b$ are overall the best performing ones in terms of type I and type II error. In this experiment, we sampled 100 different linear SDEs and we sample 400 different trajectories within time interval [0,1]. The number of bootstrap samples over permutation test is set to 100, the number of permutations for a single permutation test to 20000, the number of null samples via Monte Carlo from all values in permutation test to 20000 and 1000 for HSIC, the number of null samples for KCIT-bootstrap to 20000 and the number of null samples for SDCIT is set to 1000. These are the tests parameters also used in the rest of the experiments.

\subsection{Additional Experimental Results for the Bivariate Case}\label{app:bivariate_experiment}

\Cref{fig:power_over_missingness} shows that in the simple unconditional bivariate setting of the power analysis in \cref{sec:experiments} our test maintains high power up to substantial levels of missingness and only starts degrading once more than 80\% of observations are dropped at random starting from $64$ time steps over the interval $[0,1]$ for $1000$ different SDEs.

\begin{figure}
    \centering
    \includegraphics[width=0.4\textwidth]{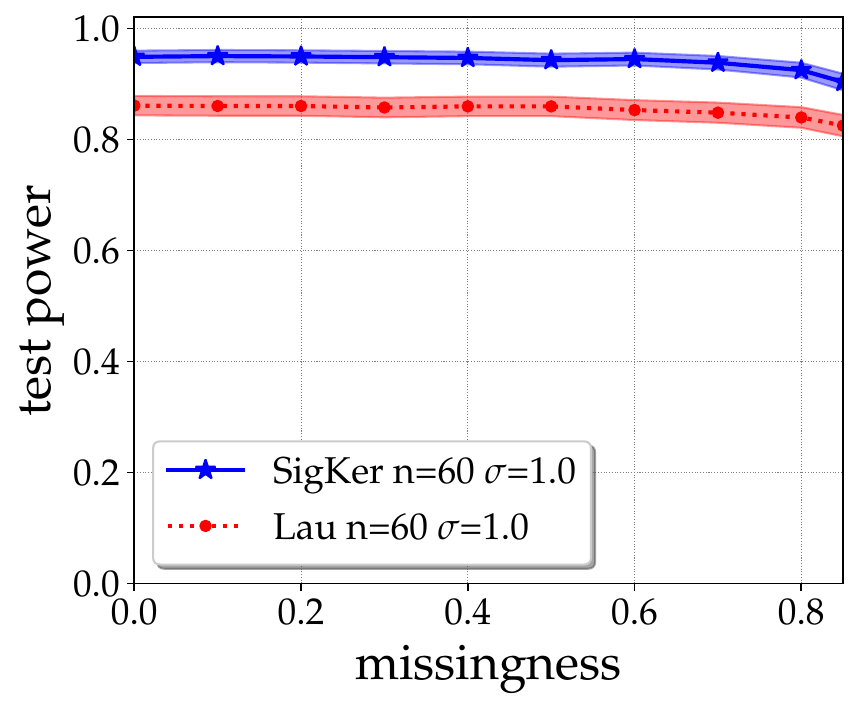}
    \caption{Test power over the missingness level (percentage of observations that is being dropped at random). While both our method as well as \citet{laumann2023kernel} maintain their power up to high levels of missingness, our test still consistently outperforms the baseline. Lines show means over 1000 settings in the power analysis in \cref{sec:experiments} and shaded regions show standard errors.}
    \label{fig:power_over_missingness}
\end{figure}

\subsection{Building Blocks for Causal Discovery}\label{app:building_blocks}

The performance of the conditional and unconditional independence tests was evaluated using widely recognized causal structures that are at the foundation of causal discovery. Specifically, we examined scenarios involving two variables, both connected and unconnected, and three-variable configurations comprising chain, fork, and collider structures. The analysis of type I and type II errors, as shown in \cref{tab:building_blocks}, indicates robust performance for both conditional SDCIT and unconditional HSIC tests. These results refer to the symmetric conditional independence $\indep_{sym}$, while future-extended h-local conditional independence are presented in \cref{tab:building_blocks_asymmetric}. The tables show that the test is able to reconstruct the presence and the directionality of the edges overall all building blocks of causal structure learning.
We also perform the same experiments for the setting in which the causal dependence only comes from the diffusion. Referring to the linear SDE in \cref{eq:linear_sde}, $a_{ij}=a_{ji}=0$, $a_{11}, a_{22} \sim \mathcal{U}[-0.5,0.5]$, $\sigma_{12}, \sigma_{21}\sim \mathcal{U}[(-2.5, -1.0)\cup (1.0, 2.5)]$ and $\sigma_{11}, \sigma_{22} \sim \mathcal{U}[-0.5,0.5]$. $b_i$ and $d_i$ are sampled from $\mathcal{U}[-0.1, 0.1]$ and $\mathcal{U}[-0.2, 0.2]$. \Cref{tab:building_blocks_diffusion} includes the results for all the different causal structures. 

\setlength\tabcolsep{3pt}
\begin{table*}
\centering
\caption{Test performance on the key building blocks for causal discovery.
Here, $\indep$ refers to $\indepsym$ except for $X^1_p \indep X^1_f$ which means $X^1_{[0,t/2]} \indep X^1_{[t/2,t]}$.
We use SDCIT for CI tests and bootstrapped HSIC unconditional independence.
We simulate $40$ samples of $512$ trajectories for $10$ different SDEs (400 tests per column) with $25\%$ of the 128 original observations dropped uniformly at random.
The \textbf{error} represent type I or type II error when the null hypothesis $H_0$ should be rejected (\ding{55}) or accepted (\ding{51}), respectively. 
}
\label{tab:building_blocks}
\tiny
{\scriptsize
\begin{tabular}{m{2.3em}|m{5.5em}|m{5.5em}|m{5.5em}|m{5.5em}|m{8em}||m{3.6em}|m{4em}|m{4em}} 
\toprule
\centering graph &
\multicolumn{2}{c|}{%
\begin{tikzpicture}[>=Stealth, node distance=1.5cm, on grid, auto, scale=0.7, transform shape,baseline=-5pt]
    \node[state] (X) {$X^1$};
    \node[state, right=of X] (Z) {$X^2$};
    \node[state, right=of Z] (Y) {$X^3$};
    \path[->] 
    (X) edge node {} (Z)
    (X) edge[loop below] node {} (X)
    (Y) edge[loop below] node {} (Y)
    (Z) edge[loop below] node {} (Z)
    (Z) edge node {} (Y);
    \end{tikzpicture}} & 
\multicolumn{2}{c|}{%
\begin{tikzpicture}[>=Stealth, node distance=1.5cm, on grid, auto, scale=0.7, transform shape,baseline=-5pt]
    \node[state] (X) {$X^1$};
    \node[state, right=of X] (Z) {$X^2$};
    \node[state, right=of Z] (Y) {$X^3$};
    \path[->] 
    (X) edge node {} (Z)
    (X) edge[loop below] node {} (X)
    (Y) edge[loop below] node {} (Y)
    (Z) edge[loop below] node {} (Z)
    (Y) edge node {} (Z);
    \end{tikzpicture}} & 
\multicolumn{1}{c||}{%
\begin{tikzpicture}[>=Stealth, node distance=1.5cm, on grid, auto, scale=0.7, transform shape,baseline=-5pt]
    \node[state] (X) {$X^1$};
    \node[state, right=of X] (Z) {$X^2$};
    \node[state, right=of Z] (Y) {$X^3$};
    \path[->] 
    (Z) edge node {} (X)
    (X) edge[loop below] node {} (X)
    (Y) edge[loop below] node {} (Y)
    (Z) edge[loop below] node {} (Z)
    (Z) edge node {} (Y);
    \end{tikzpicture}}  &
\multicolumn{1}{c|}{%
\begin{tikzpicture}[>=Stealth, node distance=1.5cm, on grid, auto, scale=0.7, transform shape,baseline=-5pt]
    \node[state] (X) {$X^1$};
    \path[->] 
    (X) edge[loop below] node {} (X);
    \end{tikzpicture}}
& \multicolumn{1}{c|}{%
\begin{tikzpicture}[>=Stealth, node distance=0.9cm, on grid, auto, scale=0.7, transform shape,baseline=-5pt]
    \node[state] (X) {$X^1$};
    \node[state, right=of X] (Z) {$X^2$};
    \path[->] 
    (X) edge[loop below] node {} (X)
    (Z) edge[loop below] node {} (Z);
    \end{tikzpicture}}
&\multicolumn{1}{c}{%
\begin{tikzpicture}[>=Stealth, node distance=0.9cm, on grid, auto, scale=0.7, transform shape,baseline=-5pt]
    \node[state] (X) {$X^1$};
    \node[state, right=of X] (Z) {$X^2$};
    \path[->] 
    (X) edge node {} (Z)
    (X) edge[loop below] node {} (X)
    (Z) edge[loop below] node {} (Z);
    \end{tikzpicture}}
\\ 
\midrule
\centering $H_0$ & \centering  \tiny  $X^1\! \indep\! X^2|X^3$ & \centering \tiny  $X^1\!\indep \! X^3|X^2$ &  \centering \tiny $X^1\!\indep \! X^2|X^3$ & \centering  \tiny $X^1\!\indep \! X^3|X^2$  &  \centering \tiny $X^1\!\indep \! X^3|X^2$   & \centering \tiny $X^1_p\!\indep \! X^1_f$ & \centering  \tiny $X^1\!\indep \! X^2$  &  \centering \tiny $X^1\!\indep \! X^2$
\tabularnewline
\midrule
\centering should reject & \centering \ding{51} & \centering \ding{55} & \centering \ding{51} & \centering \ding{51} & \centering \ding{55} & \centering \ding{51} & \centering \ding{55} & \centering \ding{51}
\tabularnewline
\midrule
\centering \textbf{error} & 
\centering $0.0000$
& \centering $0.0075$
& \centering $0.0000$
& \centering $0.0000$
& \centering $0.9600$
& \centering $0.0000$
& \centering $0.0450$
& \centering $0.0000$
\tabularnewline
\bottomrule
\end{tabular}}

\end{table*}
\begin{table*} 
\centering
\caption{Test performance on the key building blocks for causal discovery.
Here, $\indep$ refers to $\indepsym$ except for $X^1_p \indep X^1_f$ which means $X^1_{[0,t/2]} \indep X^1_{[t/2,t]}$ in a setting where the dependence only comes from the diffusion.
We use SDCIT for CI tests and bootstrapped HSIC unconditional independence.
We simulate $40$ samples of $512$ trajectories for $10$ different SDEs (400 tests per column) with $25\%$ of the 128 original observations dropped uniformly at random.
The \textbf{error} represent type I or type II error when the null hypothesis $H_0$ should be rejected (\ding{55}) or accepted (\ding{51}), respectively.}
\label{tab:building_blocks_diffusion}
\tiny
{\scriptsize
\begin{tabular}{m{2.3em}|m{5.5em}|m{5.5em}|m{5.5em}|m{5.5em}|m{8em}||m{3.6em}|m{4em}|m{4em}} 
\toprule
\centering graph &
\multicolumn{2}{c|}{%
\begin{tikzpicture}[>=Stealth, node distance=1.5cm, on grid, auto, scale=0.7, transform shape,baseline=-5pt]
    \node[state] (X) {$X^1$};
    \node[state, right=of X] (Z) {$X^2$};
    \node[state, right=of Z] (Y) {$X^3$};
    \path[->] 
    (X) edge node {} (Z)
    (X) edge[loop below] node {} (X)
    (Y) edge[loop below] node {} (Y)
    (Z) edge[loop below] node {} (Z)
    (Z) edge node {} (Y);
    \end{tikzpicture}} & 
\multicolumn{2}{c|}{%
\begin{tikzpicture}[>=Stealth, node distance=1.5cm, on grid, auto, scale=0.7, transform shape,baseline=-5pt]
    \node[state] (X) {$X^1$};
    \node[state, right=of X] (Z) {$X^2$};
    \node[state, right=of Z] (Y) {$X^3$};
    \path[->] 
    (X) edge node {} (Z)
    (X) edge[loop below] node {} (X)
    (Y) edge[loop below] node {} (Y)
    (Z) edge[loop below] node {} (Z)
    (Y) edge node {} (Z);
    \end{tikzpicture}} & 
\multicolumn{1}{c|}{%
\begin{tikzpicture}[>=Stealth, node distance=1.5cm, on grid, auto, scale=0.7, transform shape,baseline=-5pt]
    \node[state] (X) {$X^1$};
    \node[state, right=of X] (Z) {$X^2$};
    \node[state, right=of Z] (Y) {$X^3$};
    \path[->] 
    (Z) edge node {} (X)
    (X) edge[loop below] node {} (X)
    (Y) edge[loop below] node {} (Y)
    (Z) edge[loop below] node {} (Z)
    (Z) edge node {} (Y);
    \end{tikzpicture}}  &
\multicolumn{1}{c|}{%
\begin{tikzpicture}[>=Stealth, node distance=1.5cm, on grid, auto, scale=0.7, transform shape,baseline=-5pt]
    \node[state] (X) {$X^1$};
    \path[->] 
    (X) edge[loop below] node {} (X);
    \end{tikzpicture}}
& \multicolumn{1}{c|}{%
\begin{tikzpicture}[>=Stealth, node distance=1.5cm, on grid, auto, scale=0.7, transform shape,baseline=-5pt]
    \node[state] (X) {$X^1$};
    \node[state, right=of X] (Z) {$X^2$};
    \path[->] 
    (X) edge[loop below] node {} (X)
    (Z) edge[loop below] node {} (Z);
    \end{tikzpicture}}
&\multicolumn{1}{c}{%
\begin{tikzpicture}[>=Stealth, node distance=1.5cm, on grid, auto, scale=0.7, transform shape,baseline=-5pt]
    \node[state] (X) {$X^1$};
    \node[state, right=of X] (Z) {$X^2$};
    \path[->] 
    (X) edge node {} (Z)
    (X) edge[loop below] node {} (X)
    (Z) edge[loop below] node {} (Z);
    \end{tikzpicture}}
\\ 
\midrule
\centering $H_0$ & \centering  \tiny  $X^1\! \indep\! X^2|X^3$ & \centering \tiny  $X^1\!\indep \! X^3|X^2$ &  \centering \tiny $X^1\!\indep \! X^2|X^3$ & \centering  \tiny $X^1\!\indep \! X^3|X^2$  &  \centering \tiny $X^1\!\indep \! X^3|X^2$   & \centering \tiny $X^1_p\!\indep \! X^1_f$ & \centering  \tiny $X^1\!\indep \! X^2$  &  \centering \tiny $X^1\!\indep \! X^2$
\tabularnewline
\midrule
\centering should reject & \centering \ding{51} & \centering \ding{55} & \centering \ding{51} & \centering \ding{51} & \centering \ding{55} & \centering \ding{51} & \centering \ding{55} & \centering \ding{51}
\tabularnewline
\midrule
\centering \textbf{error}
& \centering $0.0000$
& \centering $0.0072$
& \centering $0.0000$
& \centering $0.0000$
& \centering $0.9600$
& \centering $0.00$
& \centering $0.07$
& \centering $0.01$
\tabularnewline
\bottomrule
\end{tabular}
}
\end{table*}
\begin{table*}
\centering
\caption{Test performance on the chain, fork and collider causal structures for the \emph{future-extended h-locally conditionally independence criterion}.
Here, $\indep$ refers to $\indep_{s,h}^{+}$ and we use SDCIT.
We simulate $10$ samples of $512$ trajectories for $10$ different SDEs ($100$ tests per column). The \textbf{error} represent type I or type II error when the null hypothesis $H_0$ should be rejected (\ding{55}) or accepted (\ding{51}), respectively.}
 \label{tab:building_blocks_asymmetric}
 \smallskip
\begin{tabular}{m{5.5em}|m{5.3em}|m{5.3em}|m{5.3em}|m{5.3em}|m{5.3em}|m{5.3em}} 
\toprule
\centering graph &
\multicolumn{6}{c}{%
\begin{tikzpicture}[>=Stealth, node distance=1.5cm, on grid, auto, scale=0.7, transform shape]
    \node[state] (X) {$X^1$};
    \node[state, right=of X] (Z) {$X^2$};
    \node[state, right=of Z] (Y) {$X^3$};
    \path[->] 
    (X) edge node {} (Z)
    (X) edge[loop below] node {} (X)
    (Y) edge[loop below] node {} (Y)
    (Z) edge[loop below] node {} (Z)
    (Z) edge node {} (Y);
    \end{tikzpicture}
}
\tabularnewline
\midrule
\centering $H_0$ & \centering  \scriptsize  $X^1\! \indep\! X^2|X^3$ & \centering \scriptsize  $X^2\! \indep\! X^1|X^3$ &  \centering \scriptsize  $X^1\! \indep\! X^3|X^2$ & \centering  \scriptsize  $X^3\! \indep\! X^1|X^2$ & \centering \scriptsize  $X^2\! \indep\! X^3|X^1$ & \centering  \scriptsize  $X^3\! \indep\! X^2|X^1$
\tabularnewline
\midrule
\centering should reject & \centering \ding{55} & \centering \ding{51} & \centering \ding{55} & \centering \ding{55} & \centering \ding{55} & \centering \ding{51} 
\tabularnewline
\midrule
\centering \textbf{error} & \centering 0.07 & \centering 0.33 & \centering 0.04 & \centering 0.04 & \centering 0.03 & \centering 0.00
\tabularnewline
\toprule
\centering graph &
\multicolumn{6}{c}{%
\begin{tikzpicture}[>=Stealth, node distance=1.5cm, on grid, auto, scale=0.7, transform shape]
    \node[state] (X) {$X^1$};
    \node[state, right=of X] (Z) {$X^2$};
    \node[state, right=of Z] (Y) {$X^3$};
    \path[->] 
    (Z) edge node {} (X)
    (X) edge[loop below] node {} (X)
    (Y) edge[loop below] node {} (Y)
    (Z) edge[loop below] node {} (Z)
    (Z) edge node {} (Y);
    \end{tikzpicture}
}
\tabularnewline
\midrule
\centering $H_0$ & \centering  \scriptsize  $X^1\! \indep\! X^2|X^3$ & \centering \scriptsize  $X^2\! \indep\! X^1|X^3$ &  \centering \scriptsize  $X^1\! \indep\! X^3|X^2$ & \centering  \scriptsize  $X^3\! \indep\! X^1|X^2$ & \centering \scriptsize  $X^2\! \indep\! X^3|X^1$ & \centering  \scriptsize  $X^3\! \indep\! X^2|X^1$
\tabularnewline
\midrule
\centering should reject & \centering \ding{51} & \centering \ding{55} & \centering \ding{55} & \centering \ding{55} & \centering \ding{55} & \centering \ding{51} 
\tabularnewline
\midrule
\centering \textbf{error} & \centering 0.12 & \centering 0.02 & \centering 0.04 & \centering 0.02 & \centering 0.04 & \centering 0.06
\tabularnewline

\toprule
\centering graph &
\multicolumn{6}{c}{%
\begin{tikzpicture}[>=Stealth, node distance=1.5cm, on grid, auto, scale=0.7, transform shape]
    \node[state] (X) {$X^1$};
    \node[state, right=of X] (Z) {$X^2$};
    \node[state, right=of Z] (Y) {$X^3$};
    \path[->] 
    (X) edge node {} (Z)
    (X) edge[loop below] node {} (X)
    (Y) edge[loop below] node {} (Y)
    (Z) edge[loop below] node {} (Z)
    (Y) edge node {} (Z);
    \end{tikzpicture}
}
\tabularnewline
\midrule
\centering $H_0$ & \centering  \scriptsize  $X^1\! \indep\! X^2|X^3$ & \centering \scriptsize  $X^2\! \indep\! X^1|X^3$ &  \centering \scriptsize  $X^1\! \indep\! X^3|X^2$ & \centering  \scriptsize  $X^3\! \indep\! X^1|X^2$ & \centering \scriptsize  $X^2\! \indep\! X^3|X^1$ & \centering  \scriptsize  $X^3\! \indep\! X^2|X^1$
\tabularnewline
\midrule
\centering should reject & \centering \ding{55} & \centering \ding{51} & \centering \ding{51} & \centering \ding{51} & \centering \ding{51} & \centering \ding{55} 
\tabularnewline
\midrule
\centering \textbf{error} & \centering 0.08 & \centering 0.4 & \centering 0.56 & \centering 0.46 & \centering 0.36 & \centering 0.16
\tabularnewline
\bottomrule
\end{tabular}
\end{table*}

\subsection{Causal Discovery}\label{app:causal_discovery}

We tested SCOTCH using various sparsity parameters and epochs to identify the optimal configuration. \cref{tab:causal-discovery-scotch} confirms that the configuration with $\lambda = 200$ and $n_e = 2000$ outperforms others in the tested settings, and hence is chosen for the comparison with our method in \cref{tab:causal-discovery}. The table also reveals that SCOTCH's performance is not robust to changes in $\lambda$, showing substantial variations under the same number of epochs, optimizer, and learning rate with different $\lambda$ values. These results demonstrate that the SCOTCH's performance can significantly deteriorate with changes in the parameter $\lambda$, while keeping the number of epochs constant. For example, when $\lambda$ is adjusted from 200 to 100, the Structural Hamming Distance (SHD, scaled by $10^2$) increases from an average of 538 to 10,275 for $n_e = 2000$.
\begin{table}[t!]
\centering
    \smallskip
    \rowcolors{1}{white}{gray!15}
    \caption{SHD ($\times 10^2$) of discovered graphs, for $200$ samples and $d$ nodes in the SDE model. The mean and standard error of SCOTCH for different values of $\lambda$ and $n_e$ for 20 different SDEs is presented. The best performing setting for higher dimensional graphs is $\lambda = 200$ and $n_e = 2k$. 
    }\label{tab:causal-discovery-scotch}
   \begin{tabular}{c c c c c c c}
        \toprule
       \color{gray}{$\times 10^2$} &$(\lambda, n_e)$ & $d=3$& $d=5$ & $d=10$ & $d=20$ & $d=50$ \\
        \midrule
        SCOTCH & 100, 2k & $188 \pm 28$ & $417 \pm 86$ & $250 \pm 61$ &  $1525 \pm 1160$ & $10275 \pm 6176$ \\
        SCOTCH & 200, 2k  & $110 \pm 21$ & $270 \pm 48$ & $530 \pm 223$ &  $\mathbf{370 \pm 174}$ & $\mathbf{538 \pm 70}$\\
        SCOTCH & 1, 1k  & $\mathbf{20 \pm 11}$ & $\mathbf{85 \pm 20}$ & $\mathbf{375 \pm 28}$ &  $1455 \pm 103$ & $6095 \pm 153$\\
        SCOTCH & 50, 1k  &$330 \pm 27$ & $1255 \pm 54$ & $5895 \pm 107$ &  $1340 \pm 243$ & $2994 \pm 941$\\
        SCOTCH & 200, 1k  &$400 \pm 17$ & $1370 \pm 29$ & $6425 \pm 80$ &  $705 \pm 77$ & $7863 \pm 2670$\\
        \bottomrule
    \end{tabular}
\end{table}

\subsection{Conditional Independence Testing Beyond the SDE Model}\label{app:fda_example}

In this section, we aim to demonstrate that our proposed non-parametric conditional independence (CI) test extends beyond the setting of SDEs and even beyond semi-martingales, rendering it more broadly applicable than most existing methods, which usually assume independent (additive) noise.

\paragraph{Fractional Brownian Motion.}
To illustrate this, we start by applying our test to a stochastic process driven by fractional Brownian motions (fBMs).
Fractional Brownian motion is a generalization of standard Brownian motion, a continuous time Gaussian process $B_H (t)$ with zero mean and $\E [B_H(t) B_H(s)] = \frac{1}{2} \left( \lvert t \rvert^{2H} + \lvert s \rvert^{2H} - \lvert t-s \rvert^{2H} \right)$ that incorporates a parameter $H \in (0,1)$, the so called \emph{Hurst parameter}, which governs long-range dependencies. For
\begin{itemize}
    \item $H = 0.5$ it reduces to the regular Brownian Motion.
    \item $H > 0.5$ increments are positively correlated.
    \item $H < 0.5$ increments are negatively correlated.
\end{itemize}
Specifically, we consider the following system of stochastic differential equations driven by fractional Brownian motions
\begin{align}\label{eq:fbm_example}
    \begin{pmatrix} dX_t \\ dY_t \end{pmatrix} = \begin{pmatrix} 0 & 0 \\ a_{21} & 0 \end{pmatrix} \begin{pmatrix} X_t \\ Y_t \end{pmatrix} dt +  \begin{pmatrix} d_1 & 0 \\ 0 & d_2 \end{pmatrix} \begin{pmatrix} dW^{1}_t \\ dW^{2}_t \end{pmatrix} \:,
\end{align}
with $W^1_{t}$, $W^2_{t}$ independent fBMs with Hurst parameter $H$ and $a_{21}, d_1, d_2 \sim \cU([-2,2])$.
To quantify the effectiveness of our test in this context, we measure the test power for different strengths of interaction for different Hurst parameters.
\Cref{fig:fbm_example} shows this power analysis for $500$ random settings of \cref{eq:fbm_example}.
The power of our test steadily approaches 1 as the interaction strength increases to 1 regardless of $H$.
We note that since  the above example does not have independent increments, \cref{algo:ctPC} is not applicable anymore for causal discovery in such settings.
Instead, the fractional Brownian motion example merely highlights the usefulness of our CI test independently from its application in causal discovery in the SDE model.

\paragraph{Functional data.}
For further corroboration of our claims, we turn to the following example from functional data analysis (FDA), taken from \citet{laumann2023kernel}, to test causal discovery in a non SDE setting.
After drawing a DAG $\cG = (V, E)$, processes for source vertices, i.e., vertices $i = v_i$ without parents in $\cG$, are generated according to 
\begin{align*}
    X^{i}(t) = \sum_{m=1}^{M} c^{i}_{m} \phi_{m}(t) + \epsilon_{t}^{i}
\end{align*}
with Fourier basis functions $\phi_{1} (t) = 1$, $\phi_{2}(t) = \sqrt{2} \sin \left( 2 \pi t\right)$, $\phi_{3}(t) = \sqrt{2} \cos \left( 2 \pi t\right)$ and so on, the weights $c^i_{m} \sim \cN(0,1)$, and the additive noise $\epsilon_{t}^{i} \sim \cN(0,1)$. For all vertices $i$ with at least one parent, we set
\begin{align*}
X^{i}(t) = a \sum_{k \in \pa_{i}^{\cG}} \int_{0}^{t} X^{k}(s) \beta^{k}(s,t) ds + \epsilon_{t}^{i}
\end{align*}
with $\beta^k(s,t) = 8 (s-c^{k}_1)^2 - 8 (t -c^{k}_2))^2$, $c^{k}_1, c^{k}_2 \sim \cU_{[0,1]}$ and $a = 1$.

We draw 40 DAGs with 40 distinct FD generating mechanisms for each dimension $d \in \lbrace 3,5,10,20,50 \rbrace$.
\Cref{tab:fda_causal_discovery} shows that SCOTCH cannot capture the underlying dependencies, as it is tailored specifically to the SDE model. It is thus misspecified for such a function generating mechanism.
Under such (arguably mild) misspecification, SCOTCH is even outperformed by PCMCI, whereas our method still performs much better than both baselines.

\begin{figure}
    \centering
    \includegraphics[width=0.6\textwidth]{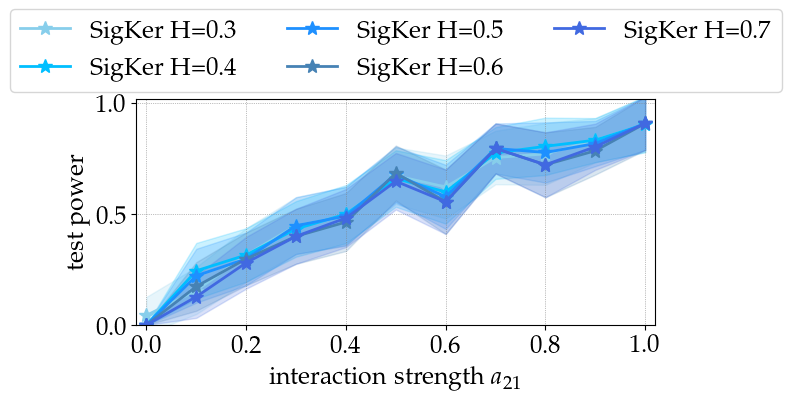}
    \caption{Test power over for different fBM-driven processes \cref{eq:fbm_example} for different H.
    The CI test picks up dependence for as little as $100$ samples given a certain amount of interaction-strength.
    Lines show means over $500$ settings  and shaded regions show standard errors.}
    \label{fig:fbm_example}
\end{figure}

\begin{table}
\centering
\caption{SHD ($\times 10^2$) comparison of SigKer to the baselines PCMCI and SCOTCH (with $\lambda=200$ and $n_e=2000$) in the functional data setting. We report mean and standard errors over 40 instances.} 
\begin{minipage}{1\textwidth}
    \centering
    \smallskip
    \rowcolors{1}{white}{gray!15}
   \begin{tabular}{c c c c c c}
        \toprule
       $\times 10^2$ & $d=3$& $d=5$ & $d=10$ & $d=20$ & $d=50$ \\
        \midrule
        $\indepsym$ & $40 \pm 8$ & $22 \pm 10$ & $509 \pm 25$ & $1542 \pm 60$ & $6786 \pm 223$ \\ 
        $\indepsym$+p.p. & $42 \pm 8$ & $141 \pm 13$ & $489 \pm 28$ & $1498 \pm 56$ & $6529 \pm 250$\\
        PCMCI & $110 \pm 30$ & $320 \pm 42$ & $1150 \pm 121$ & $6518 \pm 293$ & $7799 \pm 250$\\ 
        SCOTCH & $558 \pm 14$ & $1861 \pm 43$ & $8567 \pm 63$ & $35467 \pm 833$ & $105165 \pm 10934$\\ 
        \bottomrule
    \end{tabular}
\end{minipage}%
\label{tab:fda_causal_discovery}
\end{table}

\subsection{Real-World Pairs Trading Example} \label{app:real-world}

In the pairs trading experiment, stock price data is downloaded from Yahoo Finance for a predefined list of stocks over a specific period, divided into training (1st January 2010 to 31st December 2011) and trading intervals (1st January 2012 to 31st December 2012). The chose stocks are Trinity Industries (TRN), Brandywine Realty Trust (BDN), Commercial Metals Company (CMC), The New York Times Company (NYT), New York Community Bancorp (NYCB), The Wendy's Company (WEN), CNX Resources Corporation (CNX). Logarithms of the stock prices are computed to stabilize variance and normalize the prices. During the training period, pairs of stocks are selected based on various statistical tests: Cointegration Test (Engle-Granger) to determine if a pair of stocks is cointegrated, suggesting a long-term equilibrium relationship; Augmented Dickey-Fuller (ADF) Test to assess the stationarity of the spread (ratio) of a pair's prices; and Granger Causality Test to check if the price of one stock in the pair can predict the price of the other. Pairs are selected based on the p-values from these tests, with a threshold of $0.05$ determining significance. In the trading phase, a rolling window is used to continuously recalculate the mean and standard deviation of the spread between the selected pairs. Trades are initiated based on the z-score of the spread, where a high z-score indicates a short position and a low z-score indicates a long position. Positions are managed and closed when the spread returns to a predefined z-score threshold. The strategy's performance is evaluated based on total return, annual percentage rate (APR), Sharpe ratio, maximum drawdown (maxDD), and maximum drawdown duration (maxDDD). 

\subsection{Evaluation Metrics}\label{app:metrics}

Following common conventions in the causal discovery literature, we evaluate our algorithms using the following metrics.

\paragraph{Structural Hamming Distance (SHD).}
Given two directed graphs $\cG_1 = \left( V, E_{1} \right)$, $\cG_2 = \left( V, E_{2} \right)$ over the same nodes with different sets of edges and let $A_1, A_2 \in \{0,1\}^{n \times n}$ be their adjacency matrices. Then we define the structural hamming distance (SHD) as
\begin{align*}
    \SHD = \sum_{i,j \in [n]} \lvert (A_1)_{ij} - (A_2)_{ij} \rvert
\end{align*}

\paragraph{Normalized SHD.}
In order to compare the recovery performance of our algorithms for different types of graph-sizes, the normalized SHD is defined as
\begin{align*}
    \nSHD = \frac{\SHD}{d(d-1)}
\end{align*}

\subsection{Computational Complexity}\label{app:computational_complexity}

\paragraph{Conditional Independence Test.}
For KCIPT, we denote by $B$ the number of outer bootstraps, by $b$ the number of inner bootstraps, by $n$ the number of i.i.d.\ samples, and by $n_{MC}$ the number of Monte-Carlo samples. We spell out the computational complexity of all steps in performing a single conditional independence test.

\begin{itemize}
    \item computing $B$ permutations of the n-samples: $\cO(n)$ 
    \item the loop over the $B$ outer bootstrap contains in each iteration:
\begin{itemize}
    \item permuting matrices: $\approx \cO(n^{2.4})$ (assuming fast matrix multiplication algorithms, otherwise $\cO(n^3)$)
    \item splitting matrices using indexing: $\cO(n^2)$
\item computing RKHS distances: $\cO(n^2)$
\item solving the linear program to find the best permutation (this depends on the solver, but for a primal-dual interior point methods it is $\cO(\sqrt{n} \log (1/\epsilon) (mn+ m^2))$, where $m$ is the number of constraints, $n$ the number of variables, and  $\epsilon$ the accuracy tolerance for the solution. Overall, in our case with $m = n(n-1)$ constraints, we obtain: $ \cO(\sqrt{n} \log (1/\eps) (n^2 (n-1)+ n^2\cdot (n-1)^2))$ which becomes $\cO(n^{4.5} \log(1/\eps))$
\item permuting matrices: $\cO(n^{2.4})$
\item computing the original statistic: $\cO(n)$
\item loop over $b$ inner permutations
\begin{itemize}
    \item permuting matrices: $\cO(n^{2.4})$
    \item computing the permuted statistic: $\cO(n^2)$
\end{itemize}
\item computing the p-value: $\cO(b)$
\end{itemize}
\item computing the original statistic: $\cO(B)$
\item for $n_{MC}$ Monte Carlo samples computing the null sample: $\cO(B\cdot b)$
\item computing the final p-value: $\cO(n_{MC})$
\end{itemize}

\paragraph{Causal discovery.}
The computational complexity of our causal discovery algorithm is in the worst case $\cO(d^{d_{\max}} \cdot d^2)$, where $d$ is the number of nodes and $d_{\max}$ is the maximal degree in the graph, meaning the maximal number of adjacencies of any node.
For sparse graphs (e.g., graphs with a low maximal degree), the complexity can reduce to being polynomial in the number of nodes.

\paragraph{Signature kernel.}
The computational complexity of evaluating the signature kernel scales quadratically in the number of time points within a process using the \texttt{sigkerax} Python package with an RBF lifting kernel and again quadratically in the number of samples when computing the full Gram matrix. The is another package, \texttt{iisignature}, that allows for linear scaling in the number of time points when using a linear lifting kernel. Both implementation support highly parallelized execution on GPU accelerators.

\section{Baselines}\label{app:baselines}

In \cref{sec:experiments}, we benchmark our method against other baselines, including Granger causality, CCM, PCMCI, and Laumann \citep{laumann2023kernel}. For the latter, we used their implementation provided in the \texttt{causal-fda} package. 
For the Granger-implementation for two variables ($d = 2$), we used \citet{seabold2010statsmodels}, for CCM we used \citet{Javier_causal-ccm_a_Python_2021}, and for PCMCI we used the \texttt{tigramite} package \citep{runge2019detecting}.
In PCMCI, tests for edges are conducted by applying distance correlation-based independence tests \citep{szekely2007measuring} between the variables' residuals after regressing out other nodes using Gaussian processes. For SCOTCH implementation \citep{scotch}, we use the package \texttt{causica}. For SCOTCH, we always use a learning rate of $0.001$ and keep the same default parameters for the learning algorithm.

\section{Details on the Partially Observed Setting}\label{app:partiallyobserved}

Since in the partially observed setting there could in principle be infinitely many unobserved variables, special tools are required. 
Usually, the graphical framework of \emph{maximal ancestral graphs} (MAGs) (loosely speaking DAGs with also bidirected edges) is used, which 
encodes ancestral relations in the DAG $\cG$ and aims at graphically representing conditional independencies implied by $\cG$ involving only the (marginal of the) observed variables.
The unique MAG $M^{\cG}$ corresponding to the partially observed DAG $\cG$ can be constructed via \citet{jiri_zhang_fci} the following rules.
\begin{itemize}
\item For $v_1,v_2 \in V_{obs}$, $v_1 \adj v_2$ in $M^{\cG}$ if and only if there exists an inducing path relative to $V_L$ between $v_1, v_2$ in $\cG$.
\item For each pair of adjacent vertices $v_1 \adj v_2$ in $M^{\cG}$ we orient the edge between them as follows:
\begin{itemize}
\item $v_1 \rightarrow v_2$ if $v_1 \in \an^{\cG}_{v_2}$ and $v_2 \not\in \an^{\cG}_{v_1}$
\item $v_1 \leftarrow v_2$ if $v_2 \in \an^{\cG}_{v_1}$ and $v_1 \not\in \an^{\cG}_{v_2}$
\item $v_1 \leftrightarrow v_2$ if $v_2 \not\in \an^{\cG}_{v_1}$ and $v_1 \not\in \an^{\cG}_{v_2}$
\end{itemize}
\end{itemize}
An \emph{inducing path} is a path with all colliders on the path being in $\an^{\cG}_{\lbrace v_1,v_2 \rbrace} \cap V_{obs}$ and all other nodes on the path in $V_L$.
The obtained MAG $M^{\cG}$ therefore preserves causal ancestral relations with respect to $\cG$.
The \emph{Fast Causal Inference} (FCI) algorithm \citep{jiri_zhang_fci} run on the observed marginal outputs an equivalence class of MAGs, known as a \emph{Partial Ancestral Graph} (PAG), which captures the same adjacencies but leaves some endpoints unoriented as there are multiple MAGs with the same conditional independence relations.
For example, in \cref{fig:partially_observed_example} FCI cannot rule out an additional latent confounder between $A$ and $C$ such that the induced MAG has the same m-separation properties.
Unlike FCI, by leveraging the direction of time, we can recover the full underlying MAG $M^{\cG}$, which is substantially more informative than FCI's PAG.
First, we establish adjacencies analogous to the first part of \cref{algo:PC_ext_init} (or simply by running FCI with our symmetric criterion) and then follow the steps in the construction of the MAG from a DAG to orient these adjacent edges $v_i \adj v_j$, $v_i \neq v_j \in V_{obs}$:
\begin{itemize}
\item $v_i \rightarrow v_j$ if $X_{0}^{i} \notindepsym X_{[0,T]}^{j}$ and $X_{0}^{j} \indepsym X_{[0,T]}^{i}$
\item $v_i \leftarrow v_j$ if $X_{0}^{i} \indepsym X_{[0,T]}^{j}$ and $X_{0}^{j} \notindepsym X_{[0,T]}^{i}$
\item $v_i \leftrightarrow v_j$ if $X_{0}^{i} \notindepsym X_{[0,T]}^{j}$ and $X_{0}^{j} \notindepsym X_{[0,T]}^{i}$
\end{itemize}
We note that these criteria precisely encode the (non-)ancestral relationships in the MAG construction rules.
Uniqueness of the obtained result follows from \citet[Corollary 3.10]{richardson_ancestral_graph_markov_models} as two ancestral graphs $\cG_1$, $\cG_2$ are equal, if they share the same adjacencies and ancestral relations.

We have thus constructed a sound and complete algorithm (assuming access to a CI oracle) to recover the unique MAG $M^{\cG}$ for the ground truth DAG $\cG$ in the partially observed SDE model.
To illustrate our causal discovery algorithm in practice, we conduct experiments on an SDE model with a graph $\cG$ as depicted in \cref{fig:partially_observed_example} where $V_{obs} = \lbrace A,B,C,D \rbrace$, $V_L = \lbrace U \rbrace$, $E = \lbrace (A,C), (B,C), (U,C), (U, D) \rbrace$.
While FCI run on data coming from a Markov model with this DAG is able to detect the adjacency structure, it is unable to decide whether $A$ is truly an ancestor of $C$.
It will correctly infer the arrow-head into $C$, but cannot rule out an arrow-head at $A$, see \cref{fig:partially_observed_example}. 
Neural network approaches like SCOTCH that directly infer the functional relationships by model fitting are not able to deal with such partially observed settings and are typically expected to predict edges that do not exists in the ground truth graph, since they can not model potentially infinitely many unobserved processes.
To demonstrate this concretely, we ran SCOTCH  as well as our FCI-inspired algorithm on $100$ SDEs with the adjacency from our example graph $\cG$ and compare how often $A$ and $D$ are (falsely) adjacent in the output. SCOTCH predicted an edge between $A$ and $D$ in 88\% and ours algorithm in only 8\% of all settings, see also \cref{fig:partially_observed_example}.

\end{document}